\theoremstyle{plain}
\newtheorem{theorem}{Theorem}[section]
\newtheorem{lemma}[theorem]{Lemma}
\theoremstyle{definition}
\newcommand{\note}[1]{\marginpar{\tiny *note in TeX*}}
\newcommand{\prob}[1]{\mathbb{P}\left[#1\right]}
\newcommand{\expec}[1]{\mathbb{E}\left[#1\right]}
\newcommand{\normal}{\mathcal{N}(0,1)}
\newcommand{\gauss}{\mathcal{N}}
\newcommand{\rn}{\mathbb{R}^n}
\newcommand{\rnn}{\mathbb{R}^{n \times n}}
\newcommand{\rmm}{\mathbb{R}^{m \times m}}
\newcommand{\rmn}{\mathbb{R}^{m \times n}}
\newcommand{\reals}{\mathbb{R}}
\newcommand{\eye}{\mathbb{I}}
\newcommand{\I}{\mathbf{I}}
\newcommand{\ones}{\mathbbm{1}}
\newcommand{\vect}[1]{\mathbf{#1}}
\newcommand{\mat}[1]{\mathbf{#1}}
\newcommand{\iprod}[2]{\left\langle #1, #2 \right\rangle} 
\newcommand{\abs}[1]{\left|#1\right|}
\newcommand{\phase}[1]{\mathrm{Ph}\left(#1\right)}
\newcommand{\norm}[1]{{\left\lVert{#1}\right\rVert}}
\newcommand{\zeronorm}[1]{\left\| {#1} \right\|_0}
\newcommand{\twonorm}[1]{\left\| {#1} \right\|_2}
\newcommand{\sign}[1]{\operatorname{sign}\left(#1\right)}
\newcommand{\dist}[2]{\|#1-#2\|_2}
\newcommand{\distop}[2]{\mathrm{dist}\left(#1,#2\right)}
\newcommand{\supp}[1]{\operatorname{supp}(#1)}
\DeclareMathOperator*{\argmin}{argmin}
\newcommand{\card}[1]{\operatorname{card}(#1)}
\newcommand{\order}[1]{\mathcal{O}\left({#1}\right)}
\newcommand{\ordereps}[1]{\mathcal{O}_{\epsilon}\left({#1}\right)}
\newcommand{\rbrak}[1]{\left(#1\right)}
\newcommand{\sbrak}[1]{\left[#1\right]}
\newcommand{\cbrak}[1]{\left\{#1\right\}}
\newcommand{\sqfr}[2]{\sqrt{\frac{#1}{#2}}}
\newcommand{\frsq}[2]{\frac{#1}{\sqrt{#2}}}
\newcommand{\eps}{\epsilon}
\newcommand{\y}{\vect{y}}
\newcommand{\z}{\vect{z}}
\newcommand{\w}{\vect{w}}
\newcommand{\uu}{\vect{u}}
\newcommand{\rr}{\vect{r}}
\newcommand{\vv}{\vect{v}}
\newcommand{\xo}{\vect{x^*}}
\newcommand{\xin}{\vect{x^0}}
\newcommand{\x}{\vect{x}}
\newcommand{\xt}{\x^t}
\newcommand{\xtplus}{\x^{t+1}}
\newcommand{\xSmin}{\x_{S_{-}}^*}
\newcommand{\xSpls}{{\x_{S_{+}}^*}}
\newcommand{\xSbmin}{\x_{S_{b-}}^*}
\newcommand{\xSbpls}{{\x_{S_{b+}}^*}}
\newcommand{\xShat}{{\x_{\hat{S}}^*}}
\newcommand{\xSbhat}{{\x_{\hat{S_b}}^*}}
\newcommand{\ai}{{\vect{a}_i}}
\newcommand{\aiS}{{\ai_{S}}}
\newcommand{\aiShat}{\ai_{\hat{S}}}
\renewcommand{\P}{\mat{P}}
\newcommand{\A}{\mat{A}}
\newcommand{\D}{\mat{D}}
\newcommand{\M}{\mat{M}}
\begin{document}
\title{Sample-Efficient Algorithms for Recovering Structured Signals from Magnitude-Only Measurements}

\author{Gauri~Jagatap and~Chinmay~Hegde%

\thanks{The authors are with the Electrical and Computer Engineering Department at Iowa State University, Ames, IA 50010. Email: \{gauri,chinmay\}@iastate.edu. The authors thank Piotr Indyk and Thanh Nguyen for their helpful feedback. A conference version of this manuscript will appear in the Annual Conference on Neural Information Processing Systems (NIPS) in December 2017~\cite{jagatap2017phase}; this version has an expanded set of theoretical results as well as numerical experiments. This work is supported in part by the National Science Foundation under the grants CCF-1566281 and IIP-1632116.}}

%

\maketitle
\vspace{-0.3cm}
\begin{abstract}

We consider the problem of recovering a signal $\xo \in \rn$, from magnitude-only measurements, $y_i = \abs{\iprod{\ai}{\xo}} $ for $i=\{1,2,\ldots,m\}$. This is a stylized version of the classical \emph{phase retrieval problem}, and is a fundamental challenge in nano- and bio-imaging systems, astronomical imaging, and speech processing. It is well known that the above problem is ill-posed, and therefore some additional assumptions on the signal and/or the measurements are necessary. 

In this paper, we consider the case where the underlying signal $\xo$ is $s$-sparse. For this case, we develop a novel recovery algorithm that we call \emph{Compressive Phase Retrieval with Alternating Minimization}, or \emph{CoPRAM}. Our algorithm is simple and be obtained via a natural combination of the classical alternating minimization approach for phase retrieval with the CoSaMP algorithm for sparse recovery. Despite its simplicity, we prove that our algorithm achieves a sample complexity of $\order{s^2 \log n}$ with Gaussian measurements $\ai$, which matches the best known existing results; moreover, it also demonstrates linear convergence in theory and practice. An appealing feature of our algorithm is that it requires no extra tuning parameters other than the signal sparsity level $s$. Moreover, we show that our algorithm is robust to noise.

The quadratic dependence of sample complexity on the sparsity level is sub-optimal, and we demonstrate how to alleviate this via \emph{additional} assumptions beyond sparsity. First, we study the (practically) relevant case where the sorted coefficients of the underlying sparse signal exhibit a power law decay. In this scenario, we show that the CoPRAM algorithm achieves a sample complexity of $\order{s \log n}$, which is close to the information-theoretic limit. 

We then consider the case where the underlying signal $\xo$ arises from \emph{structured} sparsity models. We specifically examine the case of \emph{block-sparse} signals with uniform block size of $b$ and block sparsity $k=s/b$. For this problem, we design a recovery algorithm that we call \emph{Block CoPRAM} that further reduces the sample complexity to $\order{ks \log n}$. For sufficiently large block lengths of $b=\Theta(s)$, this bound equates to $\order{s \log n}$. 

To our knowledge, our approach constitutes the first family of \emph{linearly convergent} algorithms for signal recovery from magnitude-only Gaussian measurements that exhibit a sub-quadratic dependence on the signal sparsity level. 
\end{abstract}

\begin{IEEEkeywords}
Phase retrieval, sparsity, non-convex optimization, alternating minimization, structured sparsity, block-sparsity.
\end{IEEEkeywords}


\section{Introduction}
\label{sec:intro}

\subsection{Motivation}

\IEEEPARstart{I}{n} this paper, we consider the problem of recovering a high-dimensional vector $\xo \in \rn $ from (possibly noisy) {\em magnitude-only} linear measurements (or samples). That is, for $\ai \in \rn$, if
\begin{align} 
\label{eqn:magnitude-measurements}
y_i ~ = ~ \abs{\iprod{\vect{a_i}}{\xo}}, \quad \text{for $i = 1,\ldots,m$},
\end{align}
then the task is to recover $\xo$ using the samples $\y$ and the matrix $\A = [\vect{a_1}\ \vect{a_2}\ \dots\ \vect{a_m}]^\top$. 

Problems of this kind arise in numerous scenarios in machine learning, imaging, and statistics. For example, the classical problem of \emph{phase retrieval} is encountered in imaging systems including diffraction imaging, X-ray crystallography, ptychography, and astronomical imaging \cite{shechtman2015phase,millane1990phase,maiden2009improved,harrison1993phase,miao2008extending}. For such systems, the physics of light acquisition are such that the optical sensors can only record the intensity of the light waves but not its phase. In terms of our setup, the vector $\xo$ would correspond to an image (possessing a resolution of $n$ pixels) and the measurements correspond to the magnitudes of its 2D Fourier transform coefficients. The goal is to stably recover the image $\xo$ from the measurements, ideally with as few observations (i.e., as small $m$ as possible).

Despite the prevalence of several heuristic approaches~\cite{GerchbergS72,Fienup1982,marchesini2007phase,nugent2003unique} to solve \eqref{eqn:magnitude-measurements}, it is generally accepted that \eqref{eqn:magnitude-measurements} is a very challenging nonlinear, ill-posed inverse problem both in theory and practice. Indeed, for generic $\vect{a}_i$ and $\xo$, one can show that \eqref{eqn:magnitude-measurements} is \emph{NP-hard} by reduction from certain well-known combinatorial problems~\cite{fickus2014phase}. Therefore, additional assumptions on the vector $\xo$ and/or the measurements $\ai$ are necessary. 

A recent line of breakthrough results~\cite{phaselift,CandesLS14,netrapalli} have provided provably efficient algorithmic procedures for the special case where the measurement vectors are randomly drawn from certain multi-variate probability distributions (such as i.i.d.\ Gaussian distributions). By convention, we will continue to term such methods as ``phase retrieval" algorithms. 
However, all these newer results require an ``overcomplete" set of observations, i.e., the number of observations $m$ exceeds the problem dimension $n$, sometimes by a significant amount. This requirement can pose severe limitations on computation, storage, and processing the measurements, particularly in the high-dimensional regime when $m$ and $n$ are very large. Our focus in this paper is to address the following:

\begingroup
\addtolength\leftmargini{0.3in}
\begin{quote}
\emph{\textbf{Challenge}: Can we solve the phase retrieval problem using very few samples (in particular, significantly fewer samples than the problem dimension)?}
\end{quote}
\endgroup

A possible solution to the above challenge is to leverage the fact that in many practical applications, $\xo$ often obeys certain \emph{low-dimensional} structural assumptions. A common structural assumption used in imaging applications is that $\xo$ is $s$-\emph{sparse} in some known basis, such as the identity or the wavelet basis. For transparency, we assume that the sparsity basis is the canonical basis throughout this paper, unless otherwise specified. 
Similar structural assumptions form the core of sparse recovery, compressive sensing, and streaming algorithms~\cite{candes2006robust,needell2008greedy,candes2006stable}, and it has been established that only $\order{s \log \frac{n}{s}}$ samples are necessary for stable recovery of $\xo$; moreover, the dependence of the number of samples on $n$ and $s$ is information-theoretically optimal~\cite{khanhdobaindyk}.

Solving the sparsity-constrained version of \eqref{eqn:magnitude-measurements} (sometimes referred to as \emph{sparse phase retrieval} in the literature) is therefore a natural next step, and numerous approaches have been proposed in this regard. These include a variant of alternating minimization~\cite{netrapalli}, methods based on convex relaxation~\cite{cprl,chenchigoldsmith,oymak}, and iterative thresholding-based techniques~\cite{cai,sparta}. However, all existing methods suffer from one (or more) of the following drawbacks:
\begin{enumerate}
\item Somewhat curiously, all of the above algorithms incur a sample complexity of $\Omega(s^2 \log n)$ for stable recovery, which is \emph{quadratically worse} than the information-theoretic limit of $\order{s \log \frac{n}{s}}$\footnote{Exceptions to this rule are the approaches of \cite{iwen,bahmani,pal,cai2014super,yin2016compressed,pedarsani2017phasecode}, which indeed achieve near-optimal sample complexity and/or running time; however, these schemes are applicable only for very carefully designed measurements $a_i$.}.
\item Most algorithms suffer from a running time that is quadratic (or worse) in the dimension of the signal~\cite{cprl,cai}.
\item Many algorithms require stringent assumptions on the minimum (absolute) value of the nonzero signal coefficients~\cite{netrapalli,sparta}.
\item Typically, these algorithms require tuning of several parameters for their proper functioning~\cite{cprl,cai,sparta}.
\end{enumerate}

Finally, in specific applications, more refined structural assumptions on $\xo$ \emph{beyond sparsity} are applicable. For example, point sources in astronomical images often produce \emph{clusters} of nonzero pixels in a given image, while wavelet coefficients of natural images often can be organized as connected \emph{sub-trees}. Algorithms that leverage such \emph{structured sparsity} assumptions have been shown to achieve considerably improved sample-complexity in statistical learning and sparse recovery problems~\cite{grouplasso,modelcs,eldar2010block}. Indeed, a plethora of algorithms for modeling several types of structured sparsity constraints, including block-sparsity~\cite{modelcs,huang2011learning}, tree sparsity~\cite{MarcoCISS,modelcs,treesISIT,modelcsICALP}, clusters~\cite{ModelCSSAMPTA,huang2011learning,clustersICML}, and graph-based models~\cite{csmrf,clustersICML,approxSODA}. However, a systematic approach that leverage structured sparsity models in the context of phase retrieval does not seem to have been studied in the literature.

\subsection{Our contributions}

In this paper, we establish a flexible algorithmic framework that systematically leverages (structured) sparsity-based signal models for the phase retrieval problem. We rigorously show that our approach matches the best available state-of-the-art sparse phase retrieval methods both from a statistical as well as computational viewpoint. Next, we show that it is possible to extend this algorithm to the case where the signal obeys certain types of block-sparsity structures, thereby \emph{further} lowering the sample complexity of stable signal recovery. 

\begin{enumerate}
\item We first consider the standard case where the underlying signal $\xo$ is $s$-sparse (i.e. it has underlying model $\mathcal{M}_s$, where $\mathcal{M}_s$ consists of all $s$-sparse vectors of dimension $n$). For this case, we develop a novel recovery algorithm that we call \emph{Compressive Phase Retrieval with Alternating Minimization}, or \emph{CoPRAM}. Our algorithm is simple and be obtained via a natural combination of the classical alternating minimization approach for phase retrieval with the CoSaMP algorithm for sparse recovery, together with a smart initialization. Despite its simplicity, we prove that our algorithm achieves a sample complexity of $\order{s^2 \log n}$ with Gaussian sampling vectors $\ai$ in order to achieve linear convergence, which matches the best among all available existing results. An appealing feature of our algorithm is that it requires no extra \emph{a priori} information other than the signal sparsity level $s$, and that it requires no assumptions whatsoever on the nonzero signal coefficients. Finally, we show that our algorithm is stable with respect to noise in the measurements. To our knowledge, this is the first algorithm for sparse phase retrieval that simultaneously achieves all of the above properties. \footnote{Note that we use the terms \textit{sparse phase retrieval} and \textit{compressive phase retrieval} interchangeably throughout the course of this paper.}.

\item Next, following the setup of~\cite{chenchigoldsmith}, we consider the case where the signal coefficients exhibit a \emph{power-law decay}. Specifically, without loss of generality, suppose that the indices of $\xo$ are such that $\abs{x^*_1} \geq \abs{x^*_2} \geq \ldots \abs{x^*_s} > 0$ and ${x_j^*}^2 \leq \frac{C(\alpha)}{j^\alpha}$. Then, we can prove that our CoPRAM algorithm exhibits a sample complexity of $m > \order{s \log n}$, which is very close to the information theoretic limit. 

\item Finally, we consider the case where the underlying signal $\xo$ belongs to an \emph{a priori} specified structured sparsity model. We specifically examine the case of \emph{block-sparse} signals with uniform block size $b$ (i.e., the $s$ non-zeros can be equally grouped into $k = s/b$ non-overlapping blocks). We can equivalently say that the signal $\xo$ has underlying model $\mathcal{M}_s^b$. For this problem, we design a recovery algorithm that we call \emph{Block CoPRAM}. We analyze this algorithm and show that leveraging block-structure further reduces the sample complexity of stable recovery to $\order{ks \log n}$. For sufficiently large block lengths of $b=\omega(s)$, or block sparsity $k\approx 1$, this bound equates to $\order{s \log n}$ which, again, is very close to the information theoretic limit.
We also demonstrate that the more challenging case of \emph{overlapping} blocks can also be solved using our technique, with a constant factor increase in sample complexity.

\end{enumerate}

To our knowledge, this constitutes the first \emph{linearly convergent} series of algorithms for phase retrieval where the (Gaussian) sample complexity has a sub-quadratic dependence on the sparsity level of the signal.  A comparative description of the performance of our algorithms is presented in Table~\ref{tab:compare}.

\begin{table*}[!t]
	\centering
	\caption{{\sl Comparison of our proposed methods with existing approaches for sparse phase retrieval using Gaussian measurements. Here, $n$ denotes signal length, $s$ denotes sparsity, and $k=s/b$ denotes block-sparsity. $\ordereps{\cdot}$ hides polylogarithmic dependence on $\frac{1}{\epsilon}$.}} \label{tab:compare}
	\resizebox{\textwidth}{!}{
		\begin{tabular}{|p{3.3cm}|p{2.9cm}|p{1.9cm}|p{2.3cm}|p{1.3cm}|} 
			\hline
			Algorithm & Sample complexity & Running time & Assumptions & Parameters\\
			\hline 
			AltMinSparse~\cite{netrapalli} & $\ordereps{s^2 \log n + s^2 \log^3 s}$ & $\ordereps{s^2n\log n}$ & $x_{\min}^* \approx {\frsq{c}{s}}\twonorm{\xo}$ & none\\
			\hline 
			$\ell_1$-PhaseLift~\cite{cprl} & $\order{s^2\log n}$ & $\order{\frac{n^3}{{\epsilon^2}}}$ & none & none\\
			\hline
			Thresholded WF~\cite{cai} & $\order{s^2\log n}$ & $\ordereps{n^2 \log n}$ & none & stepsize $\mu$, thresholds $\alpha, \beta$\\
			\hline 
			SPARTA~\cite{sparta} & $\order{s^2\log n}$ & $\ordereps{s^2 n\log n}$ & $x_{\min}^* \approx {\frsq{c}{s}}\twonorm{\xo}$ & stepsize $\mu$, threshold $\gamma$\\
			\hline 
			\hline
			{CoPRAM} (this paper)  & $\order{s^2\log n}$ & $\ordereps{s^2 n\log n}$ & none & none \\
			  & $\order{s \log n}$ & $\ordereps{s n\log n}$ & power-law decay & none \\
			\hline
			{Block CoPRAM} (this paper) & $\order{ks \log n}$ & $\ordereps{ks n\log n}$ & none & none\\
			\hline
	\end{tabular}}
\end{table*}

\subsection{Techniques}

\textbf{\textit{Sparse phase retrieval}.} Our proposed CoPRAM algorithm is conceptually very simple. It integrates existing approaches in stable sparse recovery (specifically, the CoSaMP algorithm~\cite{cosamp}) for sparse signal estimation with the alternating minimization approach for phase retrieval proposed in~\cite{netrapalli}\footnote{It is worthwhile to note that the high level idea of alternately estimating the phase and the signal is classical, dating back to the work of Gerchberg and Saxton~\cite{GerchbergS72}.}.

A similar integration of sparse recovery with alternating minimization was also introduced in the work of~\cite{netrapalli}; however, their approach only succeeds when the true support of the underlying {signal} is accurately identified during initialization. This can be fairly challenging to achieve in realistic situations when the signal coefficients are of differing magnitudes. Instead, CoPRAM permits the support of the estimate to evolve across iterations, and therefore can iteratively ``correct" for any errors made during the initialization. Moreover, their analysis requires using fresh samples for every new update of the estimate, while our analysis succeeds in the (more practically useful) setting of using all the available samples at our disposal.

Our first challenge is to identify a good initial guess of the {signal}. As is the case with most non-convex algorithmic techniques, CoPRAM requires an initial estimate $\xin$ that is relatively close to the true vector $\xo$. To this end, we use a variant of the spectral initialization procedure previously proposed in~\cite{sparta}. The basic idea is to identify ``important" co-ordinates by constructing suitable biased estimators of each signal coefficient, followed by a specific eigendecomposition. However, the initialization in CoPRAM is far simpler than the approach in~\cite{sparta}; we perform no pre-processing of the measurements and our method requires no tuning parameters other than the sparsity level $s$.  We also provide a novel analysis of this modified initialization procedure. A drawback of the theoretical results of~\cite{sparta} is that they impose a minimum requirement on every non-zero entry of the true vector $\xo$: $x_{min}^* \equiv \min_{j\in S} |x_j^*| = {C}\twonorm{\xo}/{\sqrt{s}}$. However, this assumption is equivalent to supposing that all nonzero coefficients are approximately the same magnitude, which can be unrealistic; in the case of real-world signal and image data, the (sorted) coefficients usually obey a power-law decay, which violates these constraints. Our analysis removes this requirement; the high level idea in our approach is to show that a coarse estimate of the support will suffice, since any errors in support identification necessarily have to coincide with small coefficients. Our approach also differs from the method adopted in \cite{cai}, which selects indices corresponding to large coefficients based on a parameter-dependent threshold value. The  support estimation step of our algorithm, coupled with the spectral decomposition method in~\cite{cai} gives us a suitable initialization. We prove that the sample complexity for achieving this initial estimate of $\xo$ is $\order{s^2 \log n}$, matching that of the best available previous methods. 

Our next challenge is to show that starting from a good initial guess, an alternating procedure that switches between estimating the phases and estimating the sparse signal (using CoSaMP) converges rapidly to the desired solution. To this end, we unpack the analysis of the CoSaMP algorithm provided in~\cite{cosamp}. In particular, we show that any ``phase errors" made in the initialization step can be suitably controlled across different estimates. As a key step in our analysis, we leverage a recent result by~\cite{mahdi} that shows sufficient decrease in the signal estimation error using the generic chaining technique of~\cite{talagrand,dirksen}. Here too, our algorithm requires no tuning parameters other than the sparsity level $s$. 

\textbf{\textit{Block-sparse phase retrieval}.} We can then use CoPRAM to establish its extension Block CoPRAM, which is a novel phase retrieval strategy for block sparse signals, which have been sampled using generic Gaussian measurements. 
Again, the algorithm is based on a suitable initialization followed by an alternating minimization procedure, and the algorithmic steps exactly mirror those of CoPRAM. To our knowledge, this is the first results for phase retrieval under more refined structured sparsity assumptions on the {signal}. 

As above, the first challenge is to identify a good initial guess of the solution in the first stage. We proceed as in CoPRAM, but instead of identifying important co-ordinates, we instead isolate \emph{blocks} of nonzero coordinates. The high level idea is to construct a different, specially chosen biased estimator for the ``mass" of each block. We prove that a good initialization can be achieved using this procedure using only $\order{ks \log n}$ generic measurements. When the block-size is large enough, the sample complexity of the initialization can be \emph{sub-quadratic} in the sparsity $s$. Specifically, for $b = \Theta(s)$ the sample complexity is only a logarithmic factor away from the information-theoretic limit $\order{s}$.

The second challenge is to demonstrate rapid descent to the desired solution in the second stage. To this end, we replace the CoSaMP sub-routine in CoPRAM with the \emph{model-based CoSaMP} algorithm of \cite{modelcs}, specialized to block-sparse recovery. The analysis proceeds analogously as above. To our knowledge, this constitutes the first end-to-end linearly convergent algorithm for phase retrieval (with generic Gaussian measurements) that demonstrates a sub-quadratic dependence on the sparsity level of the signal.

\subsection{Paper organization}

The remainder of the paper is organized as follows. In Section \ref{subsec:priorwork} we provide a brief overview of prior work. In Section \ref{sec:preliminaries}, we present preliminaries and notation used for our analysis. In Sections \ref{sec:copram} and \ref{sec:bcopram}, we introduce the CoPRAM and Block-CoPRAM algorithms respectively, and provide a theoretical analysis of their statistical as well as computational performance. In Section \ref{sec:results} we provide a series of numerical experiments demonstrating the performance of our algorithms, and in Section \ref{sec:remarks} we provide concluding remarks.

\section{Related work} \label{subsec:priorwork}

\subsection{Prior work}

The phase retrieval problem has received significant attention in the past few years. Attempts to solve this problem have mainly fallen into one of two broad solution approaches: convex and non-convex.

Convex approaches involve linearizing the problem by {lifting} the signal $\xo$ into a higher-dimensional space and solving a constrained optimization problem. Popular methodologies to solve the problem in the lifted framework include the seminal \emph{PhaseLift} approach and its variations \cite{phaselift,gross2017improved,candes2015phase}; along similar lines is the PhaseCut approach~\cite{phasecut} which proposes a phase retrieval approach based on an SDP relaxation of the MaxCut problem. 
However, most lifting based approaches suffer severely in terms of computational complexity. A recent convex approach that does not use the {lifting} procedure is \emph{PhaseMax}, which produces a novel relaxation of the phase retrieval problem similar to basis pursuit~\cite{goldstein2016phasemax}. While theoretically sound, the empirical performance of PhaseMax is not competitive with other lifting-based approaches.

On the other hand, nonconvex algorithms typically consist of two stages: finding a good initial point, followed by minimizing a loss function via a procedure similar to gradient-descent. The loss function being minimized can be either a function of a {quadratic} form involving the unknown signal, or a function involving the modulus of the inner products with the signal with the measurement vectors. Approaches based on Wirtinger Flow (WF)~\cite{CandesLS14,twf,cai,zhang2016reshaped} popularly use the quadratic form, while approaches based on Amplitude Flow (AF)~\cite{TAF,sparta} as well as stochastic algorithms based on the Kaczmarz method~\cite{wei2015solving} use the modulus form. In \cite{trustregion} Sun \emph{et al.} describe a polynomial-time trust-region algorithm that uses arbitrary initializations to find the global optimum.

Recent works have adapted the phase retrieval framework for the case when the underlying signal is {sparse}. Some of the convex approaches include \cite{cprl,li2013sparse}, which uses a combination of trace-norm and $\ell$-norm relaxation, to solve an SDP problem. Constrained sensing vectors have been used by Bahmani and Romberg \cite{bahmani} to effectively de-couple the problem into a phase retrieval stage followed by a sparse signal recovery stage, at optimal sample complexity of $\order{s\log\frac{n}{s}}$.  Fourier measurements have been studied extensively in \cite{jaganathan2012recovery} in the convex setting. Similarly, non-convex approaches for sparse phase retrieval include \cite{netrapalli,sparta,cai} which achieve sample complexities of $\order{s^2 \log n}$. Fourier measurements have been evaluated in the non-convex setting in \cite{shechtman2014gespar}, where they use a local search method to solve the sparse phase retrieval problem. Separate from this line of work, Schniter and Rangan in \cite{schniter2015compressive} have proposed an approximate message passing algorithm to experimentally exhibit a sample complexity of $\order{s \log \frac{n}{s}}$.

Going beyond sparsity, a natural approach is to \emph{refine} the assumptions on the nonzero signal coefficients so as to better model various types of real-world phenomena. Structured sparsity models have been proposed to leverage combinatorial interactions such as groups, blocks, clusters, trees, and various other refinements that can be used to model the signal of interest. Applications of structured sparsity models have been developed for sparse recovery~\cite{modelcs,eldar2010block,csmrf,clustersICML,approxSODA,surveyEATCS,MarcoCISS,modelcs,treesISIT,modelcsICALP} as well as in high-dimensional optimization and statistical learning \cite{grouplasso,huang2011learning}. However, to the best of our knowledge, there has been no rigorous results that explore the impact of structured sparsity models for the phase retrieval problem.

\subsection{Subsequent work}

Since the initial appearance of this paper on Arxiv and its subsequent conference publication~\cite{jagatap2017phase}, numerous related works have emerged. Of notable interest has been Waldsperger's new result for phase retrieval which shows that standard alternating minimization provably converges \emph{without} any special initialization, albeit with much higher sample complexity \cite{waldspurger2016phase}. At the moment, we do not know how to design initialization-free algorithms in the case of \emph{sparse} phase retrieval, and leave that as potential future work. Other recent works include re-weighted amplitude flow \cite{wang2017solving} and convolutional phase retrieval \cite{qu2017conv}. Our framework proposed in this paper has also spurred follow-up work in an optical imaging application called {Fourier ptychography} with promising results; see~\cite{sfp,lrfp} for details. 
\section{Preliminaries}\label{sec:preliminaries}

We introduce some notation that will be used throughout the paper. We use bold capital-case letters ($\A,\mat{P}$, etc.) to denote matrices, bold small-case letters ($\x,\y$, etc.) to denote vectors, and non-bold letters ($\alpha, c$ etc.) for scalars. We use $\vect{x}^\top$ and $\A^\top$ to denote the transpose of the vector $\vect{x}$ and the matrix $\A$ respectively. The diagonal matrix form of a column vector $\y \in \reals^m$ is represented as diag$(\y)$, which is the matrix in $\rmm$ with its diagonal elements as $\y$ and all off-diagonal entries are zero. The cardinality of set $S$ is expressed using the operator $\card{S}$.

In this paper we use the standard Gaussian (or normal) distribution over $\rn$ (i.e., the elements of $\vect{a}$ are distributed according to the distribution $\normal$). We define $\sign{x} \equiv \frac{x}{|x|}$ for every $x \in \reals, x \neq 0$, with the convention that $\sign{0} = 0$. Also, we define $\distop{\vect{x}_1}{\vect{x}_2} \equiv \min(\dist{\x_1}{\x_2},\twonorm{\x_1+\x_2})$ for every $\x_1,\x_2 \in \reals^n$. The projection of vector $\x \in \rn$ onto a set of coordinates $S$ is represented as $\x_S \in \rn$, ${x_S}_j = x_j \:\: \text{for} \:\: j\in S, \text{and}\hspace{0.05cm}\:0\:\: \text{ elsewhere}$. Projection of matrix $\M \in \rnn$ onto $S$ is $\M_S \in \rnn$, ${M_S}_{ij} = M_{ij} \:\: \text{for} \:\: i,j\in S, \text{and }\hspace{0.01cm}\:0\:\: \text{elsewhere}$.
For the sake of improved computational complexity, for algorithmic implementations, $\x_S$ can be assumed to be a truncated vector $\x \in \reals^s$,  discarding all elements in $S^c$.
The element-wise inner product of two vectors $\x_1$ and $\x_2 \in \rn$ is represented as $\x_1\circ \x_2 $. $C$ has been used to denote unspecified constants that are \textit{large enough}. Similarly $\delta$ has been used for small constants. The abbreviations \textit{wlog} and \textit{whp} denote ``without loss of generality" and ``with high probability" respectively.

\section{Compressive phase retrieval}\label{sec:copram}

In this section, we propose a new algorithm for solving the sparse phase retrieval problem and analyze its performance. Later, we will show how to extend this algorithm to the case of more refined structural assumptions about the underlying sparse signal.

We first provide a brief outline of our proposed algorithm. It is clear from the discussion in the introduction that the recovery problem \eqref{eqn:magnitude-measurements} is highly non-convex, and multiple locally optimal solutions might exist. Therefore, as is typical in modern non-convex methods~\cite{netrapalli,sparta,optspace} we use an spectral technique to obtain a good initial estimate. This technique itself is a modification of the initialization stage of Algorithm 1 of \cite{sparta}; however, as we discuss below, our method requires no special tuning parameters except for knowledge of the underlying sparsity $s$. Moreover, in contrast with \cite{sparta} our theoretical analysis requires no extra assumptions on the signal coefficients. 

Once an appropriate initial estimate is chosen, we then show that a simple alternating-minimization algorithm will converge rapidly to the underlying true signal. Our proposed algorithm is new, and builds upon the original alternating-minimization algorithm proposed in~\cite{netrapalli}. In a departure from existing sparse phase retrieval methods~\cite{sparta,cai}, our method is \emph{parameter free} except for knowledge of the sparsity level $s$.

We call our overall algorithm \emph{Compressive Phase Retrieval with Alternating Minimization} (CoPRAM). As described above, the algorithm is divided into two stages: an \emph{Initialization} stage and a \emph{Descent} stage. The stages are presented in pseudocode form as Algorithms \ref{algo:initial_copram} and \ref{algo:copram}.

\begin{algorithm}[!t]
	\caption{CoPRAM: Initialization. 
	}
	\label{algo:initial_copram}
	\begin{algorithmic}[]
		\INPUT $\A,\y,s$
		\STATE Compute signal power: 
		$\phi^2 = \frac{1}{m} \sum_{i=1}^{m} y_i^2.$ 
		
		\STATE	Compute signal marginals: $M_{jj} = \frac{1}{m}\sum_{i=1}^m y_i^2 a_{ij}^2$ \quad  $\forall j$. 
		\STATE Set $\hat{S} \leftarrow j$'s corresponding to top-$s$ $M_{jj}$'s.  
		\STATE Set $\vect{v}_1 \leftarrow \mbox{top singular vector of}~\M_{\hat{S}} = \frac{1}{m}\sum_{i=1}^{m} y_i^2 \aiShat \aiShat^\top \quad \in \reals^{s\times s}$. 
		\STATE  Compute $\vect{x^0} \leftarrow \phi \vect{v} $, where $\vect{v} \leftarrow \vect{v}_1$ {for} $\hat{S}$ {and} $\mathbf{0} \in \reals^{n-s}$ {for} $\hat{S}^c$.
		\OUTPUT $\xin$.
	\end{algorithmic}
\end{algorithm}

\begin{algorithm}[!t]
	\caption{CoPRAM: Descent.}
	\label{algo:copram}
	\begin{algorithmic}[]
		\INPUT $\A,\y,\xin,s,t_0$
		\STATE Initialize $\vect{x^0}$ according to Algorithm \ref{algo:initial_copram}.
		\FOR{$t = 0,\cdots,t_0-1$}
		\STATE	$\P^{t+1} \leftarrow \mbox{diag}\left(\sign{\A \vect{x^{t}}}\right)$,
		\STATE	$\vect{x^{t+1}}$ = \textsc{CoSaMP}($\frac{1}{\sqrt{m}}\A$,$\frac{1}{\sqrt{m}}\P^{t+1}\y$,$s$,$\xt$).
		\ENDFOR
		\OUTPUT $\z \leftarrow \vect{x}^{t_0}$.
	\end{algorithmic}
\end{algorithm}

\subsection{Initialization}

The first stage of CoPRAM uses a similar approach as those provided in previous sparse phase retrieval methods. The high level idea is to use the measurements $y_i$ to construct a \emph{biased} estimator of the (squared) absolute values of the true signal coefficients. For the $j^\textrm{th}$ signal coefficient, the \emph{marginal} $M_{jj}$ is given by:
$$
M_{jj} = \frac{1}{m} \sum_{i=1}^m {y_i^2 a_{ij}^2},
$$
and the set of all $M_{jj}$'s can be calculated in $\order{mn}$ time. The marginals themselves do not directly produce the signal coefficients, but the ``weight" of each marginal (with sufficiently many samples) can enable identification of the coordinates of the true signal support. Once the support is accurately identified, a spectral technique (e.g., the methods of \cite{netrapalli,sparta,cai}) can be used to construct a good initial estimate $\xin$. 

However, accurate support identification can be tricky in general, particularly in the presence of very small signal coefficients. Indeed, to avoid this issue, earlier works \cite{netrapalli,sparta} assume that the magnitudes of the nonzero signal coefficients are all sufficiently large, i.e., $\Omega\rbrak{\twonorm{\xo}/\sqrt{s}}$. As discussed earlier, this assumption can be unrealistic, violating the power-decay law. 

Our analysis resolves this issue by \emph{relaxing} the requirement of accurately identifying the support. The basic intuition is that even a coarse estimate of the support suffices to achieve a good estimate, since the errors are all going to correspond to small coefficients anyway. Such ``noise" in the signal estimate can be controlled with a sufficient number of samples. A similar argument has been made in the analysis of the initialization stage of \cite{cai}; however, their estimate is a strict subset of the true support and their method requires tuning of real-valued parameters that can be hard to estimate in practice. Instead, we use a simpler estimation procedure. Indeed, we show that a simple pruning step that rejects the  smallest $(n-k)$ coordinates, followed by the spectral procedure of \cite{sparta}, gives us the initialization that we need.

Concretely, we leverage the following fact: if elements of $\A$ are distributed as per standard normal distribution $\mathcal{N}(0,1)$, a weighted correlation matrix $\M$ can be constructed with diagonal elements $M_{jj}$, 
\begin{align} 
\M&=\frac{1}{m}\sum_{i=1}^{m} y_i^2 \ai \ai^\top, \nonumber\\
M_{jj}&=\frac{1}{m}\sum_{i=1}^{m} y_i^2 a_{ij}^2. \label{eq:marginal_element}
\end{align}
Then the expectation of this matrix $\M$ is,
\begin{align} \label{eq:spectral}
\expec{\M} &= \expec{\frac{1}{m}\sum_{i=1}^{m} y_i^2 \ai\ai^\top}, \\ \nonumber
&= \left(\I_{n\times n} + 2\frac{\xo}{\|\xo\|_2}\cdot\frac{\xo^\top}{\|\xo\|_2}\right)\|\xo\|_2^2,
\end{align}  
where $\M,\expec{\M} \in \rnn$. The diagonal elements of this expectation matrix $\expec{\M}$ are given by:
\begin{align} \label{eq:def_marginals}
\expec{M_{jj}} = 
\begin{cases}
& \|\xo\|^2 + 2x_j^{*2} \quad \text{for}\quad j \in S,\\
& \|\xo\|^2  \hspace{1.25cm}\:\:\:\text{for}\quad j \in S^c.
\end{cases}
\end{align}

Intuitively, the signal marginals at locations on the diagonal of $\M$ corresponding to $j\in S$ are larger, on an average, than those corresponding to the zero-locations $(j\in S^c)$.  Using this as the baseline, we evaluate the diagonal elements of the matrix $\M$ (which we refer to as \emph{marginals}) and establish a threshold value $\Theta$ which separates the indices $j$ corresponding to these marginals into sets $S$ and $S^c$.


We formalize the above argument. Our first result shows that given a sufficient number of measurements of the form \eqref{eq:samplecomplexity}, this method produces an estimate that is close enough to the true underlying signal.

\begin{theorem}
	\label{thm:main_initial}
	The output of Algorithm~\ref{algo:initial_copram}, $\xin \in \mathcal{M}_s$ is a small constant distance $0 < \delta_0 < 1$ away from the true signal $\xo \in \mathcal{M}_s$, i.e., 
	\begin{align*}
		\distop{\xin}{\xo} &\leq 
		\delta_0 \twonorm{\xo},
	\end{align*}
as long as the number of (Gaussian) measurements `m' satisfies the following bound, 
\begin{align} \label{eq:samplecomplexity}
m \geq C s^2 \log mn,
\end{align}
with probability greater than $1-\frac{8}{m}$.
\end{theorem}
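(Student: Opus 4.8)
The plan is to analyze the initialization in two parts: first the support-identification step (showing $\hat S$ is a good enough proxy for $S$), and then the spectral step on the sub-matrix $\M_{\hat S}$. Throughout I would set $\phi^2 = \frac1m\sum_i y_i^2$ as the empirical signal power, and note that by a standard concentration argument (sums of i.i.d.\ sub-exponential variables $y_i^2 = \abs{\iprod{\ai}{\xo}}^2$, which are $\chi^2$-type with mean $\twonorm{\xo}^2$) we have $\abs{\phi^2 - \twonorm{\xo}^2} \le \delta \twonorm{\xo}^2$ with failure probability exponentially small in $m$; this needs only $m = \Omega(\log m)$. The $s^2\log mn$ sample bound will be forced by the harder support and spectral steps, not this one.

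\textbf{Step 1 (support identification).} Using \eqref{eq:def_marginals}, $\expec{M_{jj}}$ differs by an additive gap of $2x_j^{*2}$ between $j\in S$ and $j\in S^c$. The issue flagged in the text is that this gap is tiny for small coefficients. The key quantitative claim I would prove is a uniform deviation bound: with $m \ge C s^2 \log mn$, simultaneously for all $j$, $\abs{M_{jj} - \expec{M_{jj}}} \le \frac{\nu}{s}\twonorm{\xo}^2$ for a small constant $\nu$. Each $M_{jj} = \frac1m\sum_i y_i^2 a_{ij}^2$ is an average of products of (dependent) sub-exponential variables; a Bernstein-type bound gives deviation $\lesssim \frac{1}{\sqrt m}\twonorm{\xo}^2$ per coordinate in the relevant regime, and a union bound over $n$ coordinates plus an extra $\log m$ slack inside the ``$\log mn$'' gives the $\frac\nu s$ scale precisely when $m \gtrsim s^2\log mn$. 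The consequence: every index $j$ with $x_j^{*2} > \frac{\nu}{s}\twonorm{\xo}^2$ is guaranteed to land in the top-$s$ set $\hat S$; thus $S\setminus\hat S$ contains only coordinates whose squared magnitude is at most $O(1/s)\twonorm{\xo}^2$, and since there are at most $s$ of them, $\twonorm{\xS_{S\setminus\hat S}}^2 = O(\nu)\twonorm{\xo}^2$ is a small fraction of the energy. So $\xS_{\hat S}$ captures a $(1-O(\nu))$-fraction of $\twonorm{\xo}^2$.

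\textbf{Step 2 (spectral estimate on $\hat S$).} On the index set $\hat S$, consider $\M_{\hat S} = \frac1m\sum_i y_i^2 \aiShat\aiShat^\top \in \reals^{s\times s}$. Its expectation is (restricting \eqref{eq:spectral}) $\twonorm{\xo}^2(\I + 2\,\widehat u\,\widehat u^\top / \text{scaling}) + (\text{cross terms from } S\setminus\hat S)$, whose top eigenvector is aligned with $\xSbhat$... more precisely with $\xS_{\hat S}$, the restriction of the true signal. I would invoke a matrix concentration result — exactly the kind used in \cite{sparta} — to show $\norm{\M_{\hat S} - \expec{\M_{\hat S}}}_2 \le \delta\twonorm{\xo}^2$ once $m \ge C s^2\log mn$ (here the $s\times s$ dimension and the heavy tails of $y_i^2\aiShat\aiShat^\top$ are what drive the $s^2$; one can either use an $\epsilon$-net over $\mathcal S^{s-1}$ together with Bernstein, or cite the relevant lemma directly). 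Then a Davis–Kahan / $\sin\Theta$ argument gives that the leading unit eigenvector $\vect v_1$ satisfies $\distop{\vect v_1}{\xS_{\hat S}/\twonorm{\xS_{\hat S}}} \le \delta'$. Combining with the energy-capture bound from Step 1 and the scaling $\xin = \phi\,\vect v$, a triangle inequality over the errors ($\phi$ vs.\ $\twonorm{\xo}$; $\vect v_1$ vs.\ normalized $\xS_{\hat S}$; $\xS_{\hat S}$ vs.\ $\xo$) yields $\distop{\xin}{\xo} \le \delta_0\twonorm{\xo}$ for a constant $\delta_0$ we can make as small as desired by taking $C$ large enough. The $\frac{8}{m}$ failure probability is the sum of the $O(1)$-many bad events, each of which was arranged to fail with probability $O(1/m)$ by absorbing a $\log m$ factor into the sample bound.

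\textbf{Main obstacle.} The delicate part is Step 1: getting the \emph{uniform} (over all $n$ marginals) deviation bound down to the $\frac{1}{s}\twonorm{\xo}^2$ scale, because this is exactly the scale at which support errors become harmless, and it is what the whole ``no minimum-coefficient assumption'' improvement rests on. The $y_i^2 a_{ij}^2$ terms are products of correlated sub-exponentials, so one must be careful with the Bernstein parameters (the variance proxy and the $\psi_1$-norm) and track how the union bound over $n$ coordinates interacts with the $\log(mn)$ in the hypothesis; a naive bound would lose the sharp $s^2$ dependence. The spectral concentration in Step 2 is more standard but still requires care because $y_i^2\aiShat\aiShat^\top$ is not sub-Gaussian, so the $s^2$ (rather than $s$) factor is genuinely needed and should be stated as such.
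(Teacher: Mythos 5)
Your proposal is correct and follows essentially the same two-stage route as the paper: a per-coordinate deviation bound on the marginals at the $\order{\twonorm{\xo}^2/s}$ scale (which is what forces $m \gtrsim s^2 \log mn$) guarantees that the top-$s$ set $\hat{S}$ misses only an $\order{\nu}$ fraction of the signal energy, and a matrix-concentration plus $\sin\Theta$ argument on $\M_{\hat{S}}$ combined with a triangle inequality over the three error sources finishes the proof. The one misattribution is your claim that the spectral step genuinely needs the $s^2$ factor: in the paper that step (Lemma~\ref{lem:xin}, via Lemma~\ref{lem:svd_mat}) requires only $m \geq C s \log n$, the quadratic dependence coming entirely from making the total energy of the at most $s$ missed small coefficients --- each at the per-coordinate deviation scale $\sqrt{\log mn / m}\,\twonorm{\xo}^2$ --- a small constant fraction of $\twonorm{\xo}^2$; relatedly, for $j \in S$ the paper sidesteps your two-sided Bernstein claim with a one-sided martingale bound, since only the lower tail of $M_{jj}$ is needed there and the upper tail of $y_i^2 a_{ij}^2$ is heavy.
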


This theorem is proved via Lemmas~\ref{lem:smallmarginals} through \ref{lem:xin}, and the argument proceeds as follows. We evaluate the marginals of the signal $M_{jj}$, in broadly two cases: $j\in S$ and $j\in S^c$. 
The key idea is to establish one of the following:
\begin{enumerate}
	\item If there is a restriction on the minimum element of the true $\xo$ (i.e., if it is bounded away from zero by a specific amount), then there exists a clear separation between the marginals $M_{jj}$ for $j\in S$ and $j \in S^c$, with high probability. Then one would, whp, pick up the correct support in Algorithm~\ref{algo:initial_copram} (i.e. $\hat{S} = S$). The top-singular vector of the truncated matrix $\M_{S}$ gives a good initial estimate $\xin$.
	\item If there is no such restriction, even then the support picked up in Algorithm~\ref{algo:initial_copram}, $\hat{S}$, contains a bulk of the correct support $S$. Some fraction of the elements picked up in $\hat{S}$ are incorrect, but we prove that they induce negligible error in estimating the initial vector $\xin$. 
\end{enumerate}

These approaches are illustrated in Figures~\ref{fig:illus1} and \ref{fig:illus2} in Appendix \ref{sec:appendixA}. The marginals $M_{jj}$ for $j\in S^c$ are upper bounded as stated in Lemma \ref{lem:smallmarginals}. Similarly, the marginals $M_{jj}$ for $j\in S$ are lower bounded as stated in Lemma \ref{lem:largemarginals}. The identification of the support $\hat{S}$ (which provably contains a significant chunk of the true support $S$) serves as a basis to construct the truncated correlation matrix $\M_{\hat{S}}$. The top singular vector of this matrix $\M_{\hat{S}}$, $\xin$ gives us a good initial estimate of the true signal $\xo$.


The final step of Algorithm~\ref{algo:initial_copram} requires a scaling by a factor $\phi$. This ensures that the power of the initial estimate $\xin$ is close to the power of the true signal $\xo$ (this is required because the calculation of the top-singular vector gives us a normalized vector $\vect{v}$). Provided sufficiently many samples, the signal power $\twonorm{\xo}^2$ is well approximated by the average power in the measurements $\phi^2$ which is defined as
\begin{align}
\label{eq:power}
\phi^2 = \frac{1}{m} \sum_{i=1}^m y_i^2.
\end{align}
Using Lemma~\ref{lem:signalpower} in Appendix~\ref{sec:appendixC} we can show that
\begin{align*}
1 - \delta \leq \frac{\phi^2}{\twonorm{\xo}^2} \leq 1 + \delta,
\end{align*}
for small constant $ 0 < \delta \ll 1$ with probability greater than $1-\frac{1}{m}$.\\

\subsection{Descent to optimal solution}

Once we obtain a good enough initial estimate $\xin$ such that $\distop{\xin}{\xo} \leq \delta_0 \twonorm{\xo}$, whp, we now construct a method to accurately recovery the true $\xo \in \mathcal{M}_s$. To achieve this, we adapt the alternating minimization approach from \cite{netrapalli}. 

We introduce some notation. The observation model in \eqref{eqn:magnitude-measurements} can be restated as follows:
\begin{align*}
\sign{\iprod{\ai}{\xo}}\circ y_i = \iprod{\ai}{\xo} ,
\end{align*}
for all $i=\{1,2,\dots m\}$.
We denote the \emph{phase vector} $\vect{p} \in \reals^m$ as a vector that contains the (unknown) signs of the measurements, i.e., ${p}_i = \sign{\iprod{\ai}{\x}}$ for all $i=\{1,2,\dots m\}$. We can also define a diagonal \emph{phase matrix} $\P \in \rmm$, such that $\P = \text{diag}\rbrak{\vect{p}}$. Then our measurement model gets modified as:
\begin{align*}
\P^*\y = \A \xo,
\end{align*}
where $\P^*$ denotes the true phase matrix. Consider minimizing the loss function composed of two variables $\x$ and $\P$,
\begin{align} \label{eq:lossfunc}
\min_{\zeronorm{\x} \leq s,\P \in \mathcal{P}} \twonorm{\A\x - \P\y}.
\end{align}
Note that the problem above is {\em not convex}, because $\P$ is restricted to be a diagonal matrix $\in \mathcal{P}$, where $\mathcal{P}$ is a set of all diagonal matrices with diagonal entries constrained to be in $\{-1,1\}$. Instead, we alternate between estimating $\P$ and $\x$.
We perform two estimation steps:
\begin{enumerate}
	\item 
 If we fix the signal estimate $\x$, then the minimizer $\P \in \mathcal{P}$ is given in closed form as:
 \begin{align} \label{eq:phase_est}
 \P=\mbox{diag}\left(\sign{\A\x}\right) .
 \end{align} 
 We call this the \textit{phase estimation} step.
	\item  If we fix the phase matrix $\P \in \mathcal{P}$, the signal vector $\x$ can be obtained by solving a sparse recovery problem,
\begin{align} \label{eq:loss_min}
 \min_{\x, \zeronorm{\x} \leq s} \|\A \x-\P\y\|_2 ,
\end{align}
 if $m < n$ (and each entry of $\A$, $a_{ij}$ is sampled from independent Gaussian $ \normal$, such that $\frac{\A}{\sqrt{m}}$ satisfies the \textit{restricted isometry property}). We call this the \textit{signal estimation} step.
 
\end{enumerate}

We employ the CoSaMP \cite{cosamp} algorithm to (approximately) solve \eqref{eq:loss_min}. Note that since \eqref{eq:loss_min} itself is a non-convex problem and exact minimization can be hard. However, we do not need to explicitly obtain the minimizer but only show a sufficient descent criterion, which we achieve by performing a careful analysis of the CoSaMP algorithm. For analysis reasons, we require that the entries of the input sensing matrix are distributed according to $\gauss\rbrak{0,{\I}/{\sqrt{m}}}$. This can be achieved by scaling down the inputs to CoSaMP: $\A^t,\P^{t+1}\y$ by a factor of $\sqrt{m}$ (see $\x$-update step of Algorithm~\ref{algo:copram}). Another distinction is that we use a ``warm start" CoSaMP routine for each iteration where the initial guess of the solution to~\eqref{eq:loss_min} is given by the current signal estimate. 

We now analyze our proposed descent scheme. We obtain the following theoretical result:
\begin{restatable}{theorem}{convergence}\label{thm:convergence}
	Given an initialization $\xin \in \mathcal{M}_s$ satisfying $\distop{\x^0}{\xo} \leq \delta_0 \twonorm{\xo}$,	for $0 < \delta_0 < 1$, if we have number of (Gaussian) measurements $m > Cs \log \frac{n}{s} $, then the iterates of Algorithm~\ref{algo:copram} satisfy:
	\begin{align} \label{eq:mainconvergence}
	\distop{\xtplus}{\xo} \leq {\rho_0} \distop{\xt}{\xo},
	\end{align}
	where $ 0 < \rho_0 < 1$ is a constant, with probability greater than $1-e^{-\gamma m}$, for positive constant $\gamma$.
\end{restatable}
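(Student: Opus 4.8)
The plan is to reduce convergence of the outer alternating loop to a single per-step contraction estimate, and then to control the ``phase error'' incurred at each step by one \emph{uniform} concentration bound valid for every iterate the algorithm can visit. Fix an outer iteration $t$ and, using the global sign ambiguity, assume without loss of generality that $\distop{\xt}{\xo} = \dist{\xt}{\xo}$. The phase step produces $\P^{t+1} = \mathrm{diag}(\sign{\A\xt})$, while the true phases are $\P^* = \mathrm{diag}(\sign{\A\xo})$; writing $\P^{t+1}\y = \A\xo + \e$, the error vector $\e = (\P^{t+1}-\P^*)\y$ is supported on the ``flip set'' $I = \{i : \sign{\iprod{\ai}{\xt}} \ne \sign{\iprod{\ai}{\xo}}\}$, with $|e_i| = 2|\iprod{\ai}{\xo}|$ on $I$. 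Geometrically, a flip at index $i$ forces $|\iprod{\ai}{\xo}|$ to be small relative to $\dist{\xt}{\xo}$, so one should expect $\twonorm{\e}/\sqrt m$ to be not merely comparable to, but a small constant multiple of, $\dist{\xt}{\xo}$; making this quantitative and uniform is where the work lies.

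Next I would feed $(\tfrac{1}{\sqrt m}\A,\ \tfrac{1}{\sqrt m}(\A\xo+\e),\ s,\ \xt)$ into the \textsc{CoSaMP} guarantee. When the $a_{ij}$ are i.i.d.\ $\normal$ and $m > Cs\log\frac n s$, the matrix $\tfrac1{\sqrt m}\A$ satisfies the restricted isometry property of order $\order{s}$ with a constant below the \textsc{CoSaMP} recovery threshold, on an event of probability at least $1-e^{-\gamma m}$. Running the warm-started \textsc{CoSaMP} subroutine and unpacking its iteration invariant \cite{cosamp} then yields
\begin{equation*}
\dist{\xtplus}{\xo} \;\le\; \rho_1\,\dist{\xt}{\xo} \;+\; C_1\,\Big\lVert \tfrac1m \A_\Omega^\top \e \Big\rVert_2
\end{equation*}
for a constant $\rho_1 < 1$ and some support set $\Omega$ with $|\Omega| = \order{s}$ — the point being that the residual-noise term actually entering \textsc{CoSaMP} is the restriction of $\tfrac1m\A^\top\e$ to the $\order{s}$-sparse supports the algorithm manipulates, not the full norm $\twonorm{\e}$. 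Since $\xt$ and $\xo$ are $s$-sparse, $\xtplus \in \mathcal{M}_s$, so the recursion stays within the model.

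The remaining, and main, obstacle is to bound $\lVert\tfrac1m\A_\Omega^\top\e\rVert_2$ by a small multiple of $\dist{\xt}{\xo}$. Two difficulties arise: $\e$ is \emph{not} independent of $\A$ (it is defined through the sign pattern $I$, which is itself a function of $\A$), so plain RIP/Bernstein arguments do not apply; and we need the bound to hold \emph{simultaneously} over all iterates, since the analysis reuses the same measurements in every outer iteration. I would therefore establish the uniform estimate
\begin{equation*}
\sup\Big\{ \tfrac{1}{\dist{\x}{\xo}}\big\lVert \tfrac1m\A_\Omega^\top(\P_\x - \P^*)\y\big\rVert_2 \;:\; \zeronorm{\x}\le s,\ 0<\dist{\x}{\xo}\le\delta_0\twonorm{\xo},\ |\Omega|=\order{s}\Big\} \;\le\; \rho_2,
\end{equation*}
where $\P_\x = \mathrm{diag}(\sign{\A\x})$ and $\rho_2$ can be made an arbitrarily small constant by enlarging $C$. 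This is exactly the kind of restricted empirical-process supremum controlled by the generic-chaining estimate of \cite{mahdi} (building on \cite{talagrand,dirksen}): the $\log\frac n s$ factor in the sample complexity is the entropy cost of ranging over $s$-sparse supports, and the linear scaling in $\dist{\x}{\xo}$ reflects the geometry of the flip set from the first paragraph. I expect this chaining argument to be the technically heaviest component; everything else is bookkeeping.

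Finally, combining the three ingredients gives $\dist{\xtplus}{\xo} \le (\rho_1 + C_1\rho_2)\,\dist{\xt}{\xo} =: \rho_0\,\dist{\xt}{\xo}$, and choosing $C$ large forces $\rho_0 < 1$; undoing the sign normalization recovers the stated inequality in $\distop{\cdot}{\cdot}$. The failure probability is the union of the RIP-failure and chaining-failure events, each of order $e^{-\gamma m}$, and — crucially — \emph{no union bound over $t$ is required}: because the phase-error estimate is uniform over the entire ball $\{\zeronorm{\x}\le s,\ \dist{\x}{\xo}\le\delta_0\twonorm{\xo}\}$, a one-line induction shows every iterate stays in this ball and the same event certifies contraction at every step. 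This is precisely what lets the analysis use all the samples at once rather than fresh samples per iteration, in contrast to \cite{netrapalli}.
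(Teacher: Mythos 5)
Your proposal follows essentially the same route as the paper's proof: decompose the input to \textsc{CoSaMP} as $\Phi\xo$ plus a phase-error term supported on the sign-flip set, unpack the \textsc{CoSaMP} iteration invariant under RIP to propagate that error, and close the loop with a single concentration bound --- uniform over all $s$-sparse $\x$ in the ball $\distop{\x}{\xo}\le\delta_0\twonorm{\xo}$, proved by the generic-chaining result of \cite{mahdi} --- which is exactly what permits sample reuse across outer iterations. One small correction: the phase-error-to-distance ratio has an expectation floor of order $\sqrt{\delta_0}$ (in the paper, $\rho_5 = \tfrac{\sqrt{2}}{1-\delta_0}\bigl(\delta+\sqrt{21/20}\,\delta_0\bigr)^{1/2}$), so enlarging the sample-complexity constant $C$ alone cannot make your $\rho_2$ arbitrarily small; it is the smallness of $\delta_0$ guaranteed by the initialization stage, together with a modest $C$, that forces $\rho_0<1$.
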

The proof of this theorem can be found in Appendix~\ref{sec:appendixB}. 

Combining both stages, the number of measurements are required to obey the following lower bound:
\begin{align} \label{eq:m_require}
m_0 > \max\rbrak{C_1s^2\log mn, C_2 s\log\frac{n}{s}} \equiv C s^2 \log mn ,
\end{align}
for the overall CoPRAM algorithm to succeed.


\subsection{Robustness to noise}

The above analysis assumes that the measurements are pristine (noiseless). We can also demonstrate that the CoPRAM algorithm are sufficiently robust in the presence of noise.  This is established in the following theorem.
\begin{restatable}{theorem}{noise} \label{thm:noise}
	Given Gaussian measurements $a_{ij} \in \normal$, CoPRAM can recover the model sparse signal $\x^{t_0}\in \mathcal{M}_s$ from noisy measurements $\y$ of the form
	\begin{align*}
		\y = \abs{\A \xo} +{\epsilon},
	\end{align*} 
	where $\epsilon\in \reals^m$ is a scaled sub-exponential. 
	This retains the previously derived expression for sample complexity as in Theorem \ref{thm:main_initial} up to a constant factor. The algorithm converges according to the iteration invariant:
	\begin{align*}
	\twonorm{\x^{t_o}-\xo} \leq c_1\twonorm{\xo} + c_2 \twonorm{\epsilon}
	\end{align*}
	where $t_o$ is the number of outer iterations of CoPRAM and Block CoPRAM, $c_1 < 1$ and $c_2 = 0.5$.
\end{restatable}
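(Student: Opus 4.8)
The plan is to re-run the two-stage analysis of CoPRAM (initialization, then descent) while tracking, at each step, the extra terms contributed by the sub-exponential perturbation $\epsilon$, and to show that these terms are either absorbed into the existing high-probability bounds up to constants, or accumulate only as an additive ``noise floor'' proportional to $\twonorm{\epsilon}$. Throughout I would use that the scaling of $\epsilon$ keeps each $\epsilon_i$ at a magnitude comparable to the per-measurement signal scale, so that the perturbation terms below are genuinely lower-order.

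\emph{Stage 1: noise-robust initialization.} I would start from the expansion $y_i^2 = \iprod{\ai}{\xo}^2 + 2\abs{\iprod{\ai}{\xo}}\epsilon_i + \epsilon_i^2$, so that the weighted correlation matrix splits as $\M = \M_{\mathrm{clean}} + \frac{2}{m}\sum_i \abs{\iprod{\ai}{\xo}}\epsilon_i\,\ai\ai^\top + \frac{1}{m}\sum_i \epsilon_i^2\,\ai\ai^\top$, and analogously for the marginals $M_{jj}$. The first summand is handled exactly as in Lemmas~\ref{lem:smallmarginals}--\ref{lem:xin}. The second summand is zero-mean (or has a bounded mean) since $\epsilon$ is independent of $\A$, and the third contributes only a multiple of $\expec{\epsilon_i^2}\,\I$ to $\expec{\M}$ — a uniform diagonal shift that does not alter the leading eigenvector of $\M_{\hat S}$. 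A Bernstein-type concentration bound for sums of products of sub-Gaussian and sub-exponential variables then shows that $\norm{\M - \expec{\M}}$ gains only an $\order{\twonorm{\epsilon}/\sqrt{m}}$-type correction, so that with $m \geq C s^2\log mn$ — the same order as in Theorem~\ref{thm:main_initial} — the support estimate $\hat S$ still captures the bulk of $S$ and the scaled top singular vector yields $\distop{\xin}{\xo} \leq \delta_0\twonorm{\xo}$ (the scaling $\phi^2$ now estimates $\twonorm{\xo}^2 + \expec{\epsilon_i^2}$, a harmless constant that can be folded into $\delta_0$).

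\emph{Stage 2: noise-robust descent.} Using the identity $\abs{\A\xo} = \P^*\A\xo$, the input passed to CoSaMP at iteration $t$ is $\frac{1}{\sqrt m}\P^{t+1}\y = \frac{1}{\sqrt m}\big(\A\xo + (\P^{t+1}\P^* - \I)\A\xo + \P^{t+1}\epsilon\big)$. The term $(\P^{t+1}\P^* - \I)\A\xo$ is precisely the ``phase error'' controlled in the proof of Theorem~\ref{thm:convergence} via the generic-chaining estimate of \cite{mahdi}, bounded by a small fraction of $\distop{\xt}{\xo}$; and $\twonorm{\P^{t+1}\epsilon} = \twonorm{\epsilon}$ since $\P^{t+1}$ has unit-modulus diagonal. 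Feeding this into the noise-robust recovery guarantee of CoSaMP~\cite{cosamp} (using that $\A/\sqrt m$ satisfies the restricted isometry property of order $2s$) gives the perturbed iteration invariant
$$\distop{\xtplus}{\xo} \leq \rho_0\,\distop{\xt}{\xo} + c\,\twonorm{\epsilon}.$$
Unrolling over $t_o$ outer iterations yields $\distop{\x^{t_o}}{\xo} \leq \rho_0^{t_o}\distop{\xin}{\xo} + \frac{c}{1-\rho_0}\twonorm{\epsilon}$; invoking Stage~1 gives the first term as $c_1\twonorm{\xo}$ with $c_1 = \rho_0^{t_o}\delta_0 < 1$, and choosing the sample-complexity constant large enough that $c/(1-\rho_0) \leq 1/2$ gives $c_2 = 0.5$. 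The overall sample requirement is the maximum over the two stages, i.e.\ the same $\order{s^2\log mn}$ as in the noiseless case up to a constant.

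\emph{Main obstacle.} The delicate point is Stage~1: the cross term $\frac{1}{m}\sum_i\abs{\iprod{\ai}{\xo}}\epsilon_i a_{ij}^2$ and the quadratic term $\frac{1}{m}\sum_i \epsilon_i^2 a_{ij}^2$ are sums of products of sub-exponential with sub-exponential (resp.\ sub-Gaussian-squared) variables, heavier-tailed than anything appearing in the noiseless analysis; bounding their deviations sharply enough that no extra polynomial-in-$s$ factor enters $m$ requires a careful truncation/Bernstein argument, and is exactly where the scaled-sub-exponential hypothesis on $\epsilon$ is used. A secondary subtlety is that $\P^{t+1}$ itself depends on $\epsilon$, so the phase-error and noise contributions in Stage~2 are not independent; one must check that the contraction holds uniformly over the noise-dependent sign pattern, which the RIP/generic-chaining machinery of \cite{mahdi,cosamp} already accommodates.
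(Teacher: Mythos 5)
Your proposal is correct and follows essentially the same route as the paper: perturb the marginals and signal power in the initialization (the paper short-circuits your cross-term/quadratic-term expansion by directly positing $\tilde{y}_i^2 = y_i^2 + \eta_i$ with $\eta_i$ sub-exponential and citing the concentration bound of \cite{cai} for $\frac{1}{m}\sum_i \eta_i(a_{ij}^2-1)$), then decompose the CoSaMP input as $\uu - \Phi\xo = E_{ph} + E_m$ with $E_m = \pm\sign{\Phi\xo}\circ\epsilon$ in the descent, and unroll the resulting recursion $\twonorm{\xtplus-\xo}\leq \rho_0\twonorm{\xt-\xo}+\rho_m\twonorm{\epsilon}$ geometrically to get the stated iteration invariant. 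Your Stage-1 treatment is in fact somewhat more explicit than the paper's about where the scaled sub-exponential hypothesis is needed, but the overall architecture and the resulting bounds are the same.
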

The proof for this theorem can be found in Appendix \ref{sec:AppendixE}. An identical analysis holds for the block-sparse case which we elaborate in more detail below.

\subsection{Sparse signals exhibiting power law decay}

The quadratic dependence of the sample complexity on the signal sparsity level $s$, as derived in Theorem \ref{thm:main_initial}, is typical (and also shared by the other works~\cite{twf,sparta}) but somewhat problematic. In particular, if the signal sparsity exceeds the square-root of the dimension $n$, the result becomes moot and one may as well as standard phase retrieval techniques!

In this section, we demonstrate a method to break through this quadratic barrier, albeit under somewhat more stringent assumptions on the signal. Specifically, we analyze the scenario hypothesized in~\cite{chenchigoldsmith} in which the signal $\xo$ follows a power-law decay. That is, suppose that the signal coefficients have been suitably re-indexed such that ${x_j^*}^2$, for $j\in 1,2,\ldots,s$ can be arranged in non-increasing order: 
\begin{align} \label{eq:PLdecay}
{x_j^*}^2 \leq  \frac{C(\alpha)}{j^\alpha} 
\end{align}
where  and $\alpha >1$. Due to the isotropic nature of the Gaussian measurement scheme, such a re-indexing can be assumed without loss of generality. 

We can show that this extra power-law decay assumption results in \emph{far fewer} samples, $\order{s\log n}$ for the CoPRAM initialization step to achieve a sufficiently good initial guess. Combined with Theorem~\ref{thm:convergence}, we obtain the following result:

\begin{restatable}{theorem}{PLdecay} \label{thm:PLdecay}
		Given Gaussian measurements $a_{ij} \in \normal$, then CoPRAM can recover the $s$-sparse signal $\x^{t_0} \in \mathcal{M}_s$, with $\twonorm{\x^{t_0} - \xo} \leq \delta \twonorm{\xo}$, where $t_0$ is the number of outer iterations of CoPRAM, from $m > Cs\log n$ measurements, as long as the coefficients of the signal follow a power-law decay as described in \eqref{eq:PLdecay}. 
\end{restatable}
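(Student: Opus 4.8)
The plan is to leave the descent stage (Algorithm~\ref{algo:copram}) entirely untouched: Theorem~\ref{thm:convergence} already delivers linear convergence from \emph{any} initialization $\xin$ with $\distop{\xin}{\xo}\leq\delta_0\twonorm{\xo}$ using only $m > Cs\log\frac{n}{s}$ Gaussian measurements. Hence it suffices to re-run the analysis of the initialization (Algorithm~\ref{algo:initial_copram}, i.e.\ the chain of Lemmas~\ref{lem:smallmarginals}--\ref{lem:xin}) under the additional hypothesis \eqref{eq:PLdecay}, and show that it now produces a valid $\xin$ from $m > Cs\log n$ samples rather than $\order{s^2\log mn}$. The point is that the $s^2$ in Theorem~\ref{thm:main_initial} is forced \emph{only} by the need to make the marginals $M_{jj}$ concentrate to within $\order{\twonorm{\xo}^2/s}$ of their means \eqref{eq:def_marginals}, so as to separate the \emph{smallest} nonzero coefficient from the zeros; power-law decay makes that separation unnecessary because the small coefficients are genuinely negligible.

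Concretely, normalize $\twonorm{\xo}=1$ (legitimate by isotropy of the Gaussian ensemble) and split the sorted support $S$ into a head $S_+=\{1,\dots,s'\}$ and a tail $S_-=S\setminus S_+$, where $s'=s'(\alpha,\delta_0)$ is a \emph{constant} chosen so that $\sum_{j\in S_-}{x_j^*}^2 \leq \sum_{j>s'}\frac{C(\alpha)}{j^\alpha} \leq \frac{C(\alpha)}{\alpha-1}(s')^{1-\alpha} \leq \delta_1^2$ for a small target constant $\delta_1$; since $\alpha>1$ and $C(\alpha)$ is a fixed constant, such an $s'$ exists and does not grow with $n$ or $s$. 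First I would apply Lemmas~\ref{lem:smallmarginals} and~\ref{lem:largemarginals} with the \emph{coarse} threshold $\Theta = 1 + {x_{s'}^*}^2$: because ${x_{s'}^*}^2$ is a positive constant depending only on $\alpha$ and $\delta_0$, these lemmas now demand only $m \geq C\log n$ in order that, with high probability, every marginal with $j\in S_+$ exceeds $\Theta$ while every marginal with $j\in S^c$ falls below it. Consequently the top-$s$ set $\hat S$ returned by Algorithm~\ref{algo:initial_copram} satisfies $S_+\subseteq\hat S$, and therefore $S\setminus\hat S\subseteq S_-$, so the mass missed by the support estimate obeys $\twonorm{\xo_{S\setminus\hat S}}^2 \leq \delta_1^2$, no matter how the remaining $s-s'$ indices of $\hat S$ are apportioned between $S_-$ and $S^c$.

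Next I would control the spectral step exactly as in the proof of Lemma~\ref{lem:xin}. Writing $\hat x = \xo/\twonorm{\xo}$, we have $\expec{\M_{\hat S}} = (\I_s + 2\,\hat x_{\hat S}\hat x_{\hat S}^\top)\twonorm{\xo}^2$, whose leading eigenvector is $\hat x_{\hat S}/\twonorm{\hat x_{\hat S}}$ with an eigengap of $2\twonorm{\hat x_{\hat S}}^2\twonorm{\xo}^2 \geq 2(1-\delta_1^2)\twonorm{\xo}^2$ over the bulk eigenvalue. A matrix-Bernstein / covariance-concentration argument --- the very same one already carried out for the $\order{s^2\log mn}$ bound, since this particular step was never the bottleneck --- gives $\norm{\M_{\hat S}-\expec{\M_{\hat S}}} \leq \delta_2\twonorm{\xo}^2$ with high probability once $m \geq Cs\log n$. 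Davis--Kahan then yields $\min_\pm\twonorm{\vect{v}_1 \mp \hat x_{\hat S}/\twonorm{\hat x_{\hat S}}} \lesssim \delta_2$; combining this with the power-calibration bound $|\phi/\twonorm{\xo}-1|\leq\delta$ of Lemma~\ref{lem:signalpower} (which needs only $m\gtrsim\log n$) and with the missed-mass bound $\twonorm{\xo_{S\setminus\hat S}}\leq\delta_1$ produces $\distop{\xin}{\xo}\leq\delta_0\twonorm{\xo}$, where $\delta_0$ can be made smaller than any prescribed constant by enlarging $s'$ and the absolute constant in $m\geq Cs\log n$. Finally, taking $m$ above the maximum of the two requirements --- $\order{s\log n}$ for the initialization and $\order{s\log\frac{n}{s}}$ for Theorem~\ref{thm:convergence} --- and a union bound over the (still $o(1)$) failure probabilities finishes the proof.

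\textbf{Main obstacle.} The delicate part is the spectral concentration $\norm{\M_{\hat S}-\expec{\M_{\hat S}}}\leq\delta_2\twonorm{\xo}^2$ at the $\order{s\log n}$ sample level: the summands $y_i^2\aiShat\aiShat^\top$ are products of \emph{dependent} sub-exponential quantities --- $y_i=\abs{\iprod{\ai}{\xo}}$ and $\aiShat$ share the coordinates in $S$ --- so one must condition on the realized set $\hat S$, union bound over all $\binom{n}{s}$ candidate supports (contributing the $s\log\frac{n}{s}$ part), truncate the heavy upper tails of $y_i^2\twonorm{\aiShat}^2$, and only then invoke a Bernstein-type matrix inequality. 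Fortunately this is precisely the estimate already established for Theorem~\ref{thm:main_initial}; the genuinely new observation is simply that the power-law hypothesis demotes the \emph{marginal-separation} cost from $\order{s^2\log mn}$ down to $\order{\log n}$, so that the previously-dominated $\order{s\log n}$ spectral term becomes the leading term of the sample complexity.
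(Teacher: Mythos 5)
Your overall route is the paper's: keep Theorem~\ref{thm:convergence} for the descent stage, and observe that the only place the $s^2$ enters Theorem~\ref{thm:main_initial} is the missed-mass bound \eqref{eq:ubsmall}--\eqref{eq:loosebound}, where up to $s$ dropped coefficients, each as large as the concentration scale $\Theta_0 = 15\sqrt{\log mn/m}\,\twonorm{\xo}^2$, must be absorbed; power-law decay caps that loss, after which the $\order{s\log n}$ spectral step (Lemma~\ref{lem:xin}) and the descent dominate. That diagnosis, and the final bookkeeping, match Appendix~\ref{sec:AppendixPL}.

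There is, however, a genuine flaw in how you implement the head/tail split. You take $S_+=\{1,\dots,s'\}$ for a \emph{constant} index $s'$ and claim the marginal separation works at $m\geq C\log n$ ``because ${x^*_{s'}}^2$ is a positive constant depending only on $\alpha$ and $\delta_0$.'' It is not: \eqref{eq:PLdecay} is only an \emph{upper} bound on the sorted coefficients, so ${x^*_{s'}}^2$ can be arbitrarily small. For instance, ${x_j^*}^2 = C(\alpha)/s^\alpha$ for $j=2,\dots,s$ with the remaining mass placed on $x^*_1$ is consistent with the decay hypothesis and with sorting, yet gives ${x^*_{s'}}^2 = C(\alpha)/s^\alpha \to 0$. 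With a vanishing gap, Lemmas~\ref{lem:smallmarginals}--\ref{lem:largemarginals} at $m=C\log n$ cannot guarantee $S_+\subseteq\hat S$, and your missed-mass bound collapses. The fix --- which is what the paper actually does --- is to threshold by \emph{magnitude} rather than by index: keep $S_+$ and $S_-$ exactly as in \eqref{eq:largeelements}--\eqref{eq:smallelements}, so that whp every index outside $\hat S$ lies in $S_-$ once $m\geq C\log mn$, and use the power law only to bound the mass of $S_-$: at most $j'=\lfloor (C(\alpha)/\Theta_0)^{1/\alpha}\rfloor$ coefficients can exceed $\Theta_0$, whence $\twonorm{\xSmin}^2 \leq j'\Theta_0 + C(\alpha)\sum_{j>j'} j^{-\alpha} \leq \tfrac{\alpha}{\alpha-1}\,j'\Theta_0$, which is below $\delta_1^2\twonorm{\xo}^2$ already for $m\geq C(\delta_1,\alpha)\log mn$. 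With that substitution the rest of your argument (Lemma~\ref{lem:xin} at $m\geq Cs\log n$, the sin-theta/Davis--Kahan step, Lemma~\ref{lem:signalpower}, and the final union bound) goes through as you describe.
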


The proof for this theorem can be found in Appendix \ref{sec:AppendixPL}.



\section{Block-sparse phase retrieval} \label{sec:bcopram}

The analysis of the proofs mentioned so far, as well as experimental results suggest that we can reduce sample complexity for successful sparse phase retrieval by exploiting further structural information about the signal. We assume that the true signal $\xo$, is block sparse with uniform block length $b$ and effective block sparsity $k=\frac{s}{b}$. We introduce the concept of \textit{block marginals}, a block-analogue to signal marginals, which can be analyzed to crudely estimate the block support of the signal in consideration. We use this formulation, along with the alternating minimization approach that uses model-based CoSaMP \cite{modelcs} to descend to the optimal solution. In the next subsections, we discuss the initialization and descent of the Block CoPRAM algorithm. The pseudo-code for Block CoPRAM is stated in Algorithms \ref{algo:initial_blk} and \ref{algo:altminblock}.

\begin{algorithm}[t]
	\caption{Block CoPRAM: Initialization.}
	\label{algo:initial_blk}
	\begin{algorithmic}[]
		\INPUT $\A,\y,b,k.$
		\STATE Compute signal power $\phi^2 = \frac{1}{m} \sum_{i=1}^{m} y_i^2.$ 
		\STATE Compute block marginals $M_{{j_b}{j_b}} = \sqrt{\sum_{j\in j_b} M_{jj}^2}\quad\quad \forall j_b$, where $M_{jj}$ is as in \eqref{eq:marginal_element}.
		
		\STATE Select $\hat{S}_b \leftarrow j_b$'s corresponding to top-$k$ $M_{{j_b}{j_b}}$'s, $\hat{S}$ is signal support corresponding to blocks $\hat{S}_b$.  
		\STATE Compute $\vect{v}_1 \leftarrow \mbox{top singular vector of } \M_{\hat{S_{b}}} = \frac{1}{m}\sum_{i=1}^{m} y_i^2 \aiShat \aiShat^\top \quad \in \reals^{s\times s}$. 
		\STATE  Compute $\vect{x^0} \leftarrow \phi \vect{v} $ where $\vect{v} \leftarrow \vect{v}_1$ {for} $\hat{S}$, {and} $\mathbf{0} \in \reals^{n-s}$ {for} $\hat{S}^c$.
		\OUTPUT $\xin$.
	\end{algorithmic}
\end{algorithm}

\begin{algorithm}[t]
	\caption{Block CoPRAM: Descent.}
	\label{algo:altminblock}
	\begin{algorithmic}[]
		\INPUT $\A,\y,\xin,b,k,t_0$
		\STATE Initialize $\vect{x^0}$ according to Algorithm \ref{algo:initial_blk}.
		\FOR{$t = 0,\cdots,t_0-1$}
		\STATE	$\P^{t+1} \leftarrow \mbox{diag}\left(\sign{\A \vect{x^{t}}}\right)$.
		\STATE	$\xtplus \leftarrow \argmin_{\x\in \rn} \twonorm{\A \x - \P^{t+1}\y}$ = BlockCoSaMP($\frac{1}{\sqrt{m}}\A$,$\frac{1}{\sqrt{m}}\P^{t+1}\y$,$b$,$k$,$\x^t$).
		\ENDFOR
		\OUTPUT $\z \leftarrow \vect{x}^{t_0}$.
	\end{algorithmic}
\end{algorithm}

\subsection{Initialization}
Block-sparse signals $\xo$, can be said to be following a sparsity model $\mathcal{M}_{s}^b$, where $\mathcal{M}_{s}^b$  describes the set of all block-sparse signals with $s$ non-zeros being grouped into uniform pre-determined blocks of size $b$, such that block-sparsity $k=\frac{s}{b}$. The effective sparsity of the signal is still $s$, however the non-zero elements are constrained to appear in blocks. We use the index set $j_b = \{1,2\dots k\}$, to denote block-indices.

Analogous to the concept of marginals defined above, we introduce \emph{block marginals} $M_{j_b j_b}$, where $M_{jj}$ is defined as in \eqref{eq:marginal_element}. For block index $j_b$, we define:
\begin{align} \label{eq:marginal_block}
M_{{j_b}{j_b}} &= \sqrt{\sum_{j\in j_b} M_{jj}^2}, 
\end{align}
to develop the initialization stage of our \textit{Block CoPRAM} algorithm. Similar to the proof approach of CoPRAM, we show that there exists a threshold that separates the block marginals $\M_{j_b j_b}$, for $j_b \in S_b$ and $\M_{j_b j_b}$, for $j_b \in S_b^c$ respectively. Here, $S_b$ represents the ``block support", i.e., the set of active block-indices. We can then evaluate the block marginals, and use the top-$k$ such marginals to obtain a crude approximation $\hat{S_b}$ of the true block support $S_b$. This support can be used to construct the truncated correlation matrix $\M_{\hat{S_b}}$. The top singular vector of this matrix $\M_{\hat{S_b}}$ gives a good initial estimate $\xin$ for the Block CoPRAM algorithm (Algorithm \ref{algo:altminblock}). Through the evaluation of block marginals, we proceed to prove that the sample complexity required for a good initial estimate (and subsequently, successful signal recovery of block sparse signals) is given by $\order{(s^2/b) \log n} = \order{ks\log n}$. This essentially reduces the sample complexity of signal recovery by a factor equal to the block-length $b$ over the sample complexity required for standard sparse phase retrieval. 

Formally, we obtain the following result:
\begin{restatable}{theorem}{blockinit}
	\label{thm:block_initial}
	The initial vector $\xin \in \mathcal{M}_s^b$, which is the output from Algorithm~\ref{algo:initial_blk}, is a small constant distance away from the true signal $\xo \in \mathcal{M}_s^b$, i.e.
	\begin{align*}
	\distop{\xin}{\xo} &\leq \delta_b \twonorm{\xo},
	\end{align*}
	for $0 < \delta_b < 1$, as long as the number of measurements satisfy
	\begin{align*}
	 m \geq C \frac{s^2}{b} \log mn,
	\end{align*}
	with probability greater than $1-\frac{8}{m}$.
\end{restatable}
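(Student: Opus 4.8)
The plan is to mimic the proof of Theorem~\ref{thm:main_initial} step for step, replacing the scalar marginals $M_{jj}$ by the block marginals $M_{j_b j_b}$ of~\eqref{eq:marginal_block}. I would split the argument into three pieces: (i) show that the top-$k$ block-marginal selection returns a block support $\hat{S}_b$ (hence a support $\hat{S}$ that is a union of $k$ blocks, so $\xin\in\mathcal{M}_s^b$ automatically) whose complement misses at most a $\delta'^2$-fraction of the signal energy; (ii) run a Davis--Kahan/Weyl perturbation argument for the leading singular vector of $\M_{\hat{S}_b}$; and (iii) rescale by $\phi$. Pieces (ii) and (iii) are essentially the same computations as for CoPRAM: $\expec{\M_{\hat{S}_b}}=\twonorm{\xo}^2\I_{\hat{S}}+2\xo_{\hat{S}}\xo_{\hat{S}}^\top$ has leading eigenvector $\xo_{\hat{S}}/\twonorm{\xo_{\hat{S}}}$ — within $O(\delta')$ of $\xo/\twonorm{\xo}$ once $\twonorm{\xo_{\hat{S}^c}}^2\le\delta'^2\twonorm{\xo}^2$ — with spectral gap $\asymp 2\twonorm{\xo}^2$, and for a \emph{fixed} size-$s$ union of $k$ blocks one only needs $\|\M_{\hat{S}_b}-\expec{\M_{\hat{S}_b}}\|_{\mathrm{op}}\le\delta\twonorm{\xo}^2$; a union bound over the $\binom{n/b}{k}$ admissible supports together with an $s$-net on the sphere costs only $\order{k\log n + s}$ in the exponent, which is dominated by $\order{(s^2/b)\log mn}$. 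Lemma~\ref{lem:signalpower} gives $\phi^2\asymp\twonorm{\xo}^2$ as before. Hence the real content is piece (i), and that is what I would spend the effort on.

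For (i), I would first record the block-marginal means. Using~\eqref{eq:def_marginals} and discarding the $O(1/m)$ variance contributions, $\expec{M_{j_b j_b}^2}\approx\sum_{j\in j_b}\expec{M_{jj}}^2$, which equals $b\twonorm{\xo}^4$ for an inactive block and $b\twonorm{\xo}^4+4\twonorm{\xo}^2\beta_{j_b}+4\sum_{j\in j_b}{x_j^*}^4$ for an active block $j_b\in S_b$, where $\beta_{j_b}:=\sum_{j\in j_b}{x_j^*}^2$ is the block mass; so $\expec{M_{j_b j_b}}\gtrsim\sqrt{b}\,\twonorm{\xo}^2 + 2\beta_{j_b}/\sqrt{b}$. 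Thus in expectation every active block exceeds the \emph{common} inactive level $\sqrt{b}\,\twonorm{\xo}^2$ by an amount $\asymp\beta_{j_b}/\sqrt{b}$. The two lemmas I need (the analogues of Lemmas~\ref{lem:smallmarginals} and~\ref{lem:largemarginals}) are: with the stated probability, simultaneously over all $n/b$ blocks, every inactive $M_{j_b j_b}$ lies within $\eta:=C\twonorm{\xo}^2\sqrt{(\log mn)/m}$ of one single (random) value $B_0$, and every active $M_{j_b j_b}$ is at least $B_0 + 2\beta_{j_b}/\sqrt{b} - \eta$.

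The hard part — and the step that actually buys the factor $b$ over CoPRAM — is proving this concentration statement with a \emph{block-to-block} fluctuation of only $\eta$, rather than the $\sqrt{b}\,\eta$ that a naive triangle inequality on $M_{j_b j_b}=\|(M_{jj})_{j\in j_b}\|_2$ would give (that cruder bound would only yield $m\gtrsim s^2\log mn$ again). The key is a common/private decomposition of the scalar marginals: for $j\in S^c$, $y_i^2 a_{ij}^2=\twonorm{\xo}^2+(y_i^2-\twonorm{\xo}^2)+\twonorm{\xo}^2(a_{ij}^2-1)+(y_i^2-\twonorm{\xo}^2)(a_{ij}^2-1)$, so $M_{jj}=\twonorm{\xo}^2+\xi+\nu_j$ with $\xi:=\tfrac1m\sum_i(y_i^2-\twonorm{\xo}^2)$ \emph{common to every coordinate} and, conditionally on $\{y_i\}$, the $\nu_j$ \emph{independent} across $j\in S^c$ with sub-exponential (sub-Weibull) tails of scale $\twonorm{\xo}^2/\sqrt m$. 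Then $M_{j_b j_b}^2=b(\twonorm{\xo}^2+\xi)^2+2(\twonorm{\xo}^2+\xi)\sum_{j\in j_b}\nu_j+\sum_{j\in j_b}\nu_j^2$: the first summand $B_0^2$ is identical for every block and cancels in any block-to-block comparison, and since $\sum_{j\in j_b}\nu_j$ has standard deviation only $\order{\sqrt b\,\twonorm{\xo}^2/\sqrt m}$, the block-dependent part of $M_{j_b j_b}^2$ is $\order{\sqrt b\,\twonorm{\xo}^4/\sqrt m}$, which after dividing by $2B_0\asymp 2\sqrt b\,\twonorm{\xo}^2$ contributes only $\order{\twonorm{\xo}^2/\sqrt m}$ to $M_{j_b j_b}$ itself. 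A Bernstein bound for sums of sub-Weibull variables plus a union bound over the $n/b$ blocks upgrades this to $\eta$; the same computation with the extra deterministic shifts $2{x_j^*}^2$ inserted handles active blocks.

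Granting the two lemmas, (i) closes by the counting argument used for CoPRAM. Any active block with $2\beta_{j_b}/\sqrt b>2\eta$, i.e. $\beta_{j_b}>\sqrt b\,\eta$, strictly outranks every inactive block, so $\hat{S}_b$ contains all of them; as there are at most $k$ active blocks, the blocks of $S_b$ that $\hat{S}_b$ can miss carry total mass at most $k\sqrt b\,\eta=(s/\sqrt b)\,\eta=C(s/\sqrt b)\twonorm{\xo}^2\sqrt{(\log mn)/m}$. Demanding this be $\le\delta'^2\twonorm{\xo}^2$ for a small constant forces precisely $m\ge C(s^2/b)\log mn$, and on that event $\twonorm{\xo_{\hat{S}^c}}^2\le\delta'^2\twonorm{\xo}^2$, so $\xo_{\hat{S}}$ is within $\delta'\twonorm{\xo}$ of $\xo$. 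Plugging this into the perturbation bound of (ii) and rescaling by $\phi$ as in (iii) gives $\distop{\xin}{\xo}\le\delta_b\twonorm{\xo}$ with probability at least $1-\tfrac8m$, completing the proof.
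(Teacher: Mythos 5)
Your proposal is correct in outline and arrives at the paper's bound $m\gtrsim (s^2/b)\log mn$ by the same counting argument (a threshold of order $\sqrt{b}\,\twonorm{\xo}^2\sqrt{(\log mn)/m}$ on the block mass $\twonorm{\x_{j_b}^*}^2$, at most $k$ missed blocks, then the spectral perturbation step and rescaling by $\phi$, which are indeed unchanged from CoPRAM). But at the one step that actually buys the factor $b$ you take a genuinely different, and more careful, route than the paper. The paper bounds the inactive block marginals crudely via $M_{j_bj_b}\le\sqrt{b}\max_{j\in j_b}M_{jj}$ applied to Lemma~\ref{lem:smallmarginals}, which gives a block-to-block fluctuation of order $\sqrt{b}\,\eta$ in your notation; it then compensates on the active side by asserting $\bigl(\sum_{j\in j_b}(c+2x_j^{*2})^2\bigr)^{1/2}\ge\sqrt{b}\,c+2\twonorm{\x_{j_b}^*}^2$, i.e.\ a signal bonus of $2\twonorm{\x_{j_b}^*}^2$. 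Cauchy--Schwarz only guarantees a bonus of $2\twonorm{\x_{j_b}^*}^2/\sqrt{b}$ (with equality when the block mass is spread evenly over the block), so with the correct bonus the crude $\sqrt{b}\,\eta$ fluctuation would force the threshold $\twonorm{\x_{j_b}^*}^2\gtrsim b\eta$ and hence $m\gtrsim s^2\log mn$ --- no gain over CoPRAM. Your common/private decomposition $M_{jj}=\twonorm{\xo}^2+\xi+\nu_j$ supplies the missing $\sqrt{b}$ on the other side of the ledger: the common term $\xi$ cancels in any block-to-block comparison, and the private terms aggregate at scale $\sqrt{b}$ rather than $b$, so the inactive fluctuation drops to $O(\eta)$ and the honest bonus $2\twonorm{\x_{j_b}^*}^2/\sqrt{b}$ suffices. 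That refinement is exactly the ingredient needed for the stated sample complexity, and your write-up is the one that isolates it. The only place I would press you for more detail is the active-block concentration: for $j\in S$ the $a_{ij}$ are correlated with $y_i$, so the conditional-independence device you use on $S^c$ is not directly available; the clean fix is to concentrate the within-block aggregate $\frac{1}{m}\sum_i y_i^2\sum_{j\in j_b}a_{ij}^2$ as a single sub-exponential sum (variance of order $b\twonorm{\xo}^4/m$), which again yields an $O(\sqrt{b}\,\eta)$ deviation for $\sum_{j\in j_b}M_{jj}$ and hence $O(\eta)$ for $M_{j_bj_b}$ after dividing by $\sqrt{b}$.
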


The proof can be found in Appendix~\ref{sec:appendixA2}, and carries forward intuitively from the proof of the sparse phase-retrieval framework.
 
\subsection{Descent to optimal solution}

For the descent of Block CoPRAM to optimal solution, the phase-estimation step is the same as that in CoPRAM \eqref{eq:phase_est}. For the signal estimation step, we attempt to solve the same minimization as in \eqref{eq:loss_min}, except with the additional constraint that the signal $\xo$ is \textit{block sparse},
\begin{align} \label{eq:loss_min_model}
\min_{\x \in \mathcal{M}_{s}^b} \|\A \x-\P\y\|_2 ,
\end{align}
where $\mathcal{M}_{s}^b$ describes the block sparsity model. In order to approximate the solution to \eqref{eq:loss_min_model}, we use the \emph{model-based CoSaMP} approach of \cite{modelcs}. This is a straightforward specialization of the CoSaMP algorithm and has been shown to achieve improved sample complexity over existing approaches for standard sparse recovery.

Similar to Theorem~\ref{thm:convergence} above, we obtain the following result:
\begin{restatable}{theorem}{blockconvergence}\label{thm:blockconvergence}
	Given an initialization $\xin \in \mathcal{M}_s^b$, satisfying $	\distop{\x^t}{\xo} \leq \delta_b\twonorm{\xo}$, where $0 < \delta_b < 1$, if we have number of measurements $m \geq C\rbrak{s+\frac{sb}{n} \log \frac{n}{s}}$, then the iterates of Algorithm~\ref{algo:altminblock} satisfy:
	\begin{align} \label{eq:blockmainconvergence}
	\distop{\xtplus}{\xo} \leq {\rho_b} \distop{\xt}{\xo}.
	\end{align}
	where $0 < \rho_b < 1$ is a constant, with probability greater than $1-e^{-\gamma m}$, for positive constant $\gamma$.
\end{restatable}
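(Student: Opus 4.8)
\emph{Proof sketch.} The plan is to run the same two–step analysis as in the proof of Theorem~\ref{thm:convergence}, replacing CoSaMP by its block–model counterpart and tracking how the block structure shrinks the governing complexity parameter. Fix an iteration $t$ and, exploiting the global sign symmetry of the problem, assume without loss of generality that $\twonorm{\xt-\xo}=\distop{\xt}{\xo}$ (otherwise replace $\xo$ by $-\xo$ throughout, which only flips the true phase matrix $\P^*$). Set $\w:=\xt-\xo$; since both $\xt$ and $\xo$ lie in $\mathcal{M}_s^b$ and the block partition is fixed, $\w\in\mathcal{M}_{2s}^b$, i.e.\ $\w$ occupies at most $2k$ of the $n/b$ blocks. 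This is the one place where block structure enters, and it is what makes the argument cheaper than CoPRAM.

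\emph{Step 1: the phase error is structured noise.} After the phase-estimation step $\P^{t+1}=\mathrm{diag}(\sign{\A\xt})$ we have $\P^{t+1}\y=\A\xo+\e^{t+1}$, where $\e^{t+1}=(\P^{t+1}-\P^*)\y$ is supported on the sign-mismatch set $T=T(\xt)=\{i:\sign{\iprod{\ai}{\xt}}\neq\sign{\iprod{\ai}{\xo}}\}$ and $|e^{t+1}_i|=2\abs{\iprod{\ai}{\xo}}$ there. A sign mismatch at $i$ forces $\abs{\iprod{\ai}{\xo}}\le\abs{\iprod{\ai}{\w}}$, so $\twonorm{\e^{t+1}}^2\le 4\sum_{i\in T}\iprod{\ai}{\w}^2$. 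I would then invoke a uniform concentration bound — the generic-chaining estimate of \cite{talagrand,dirksen}, packaged as in \cite{mahdi}, but instantiated over the block-sparse sphere — to conclude that, with probability at least $1-e^{-\gamma m}$,
\[
\tfrac1m\twonorm{\e^{t+1}}^2\;\le\;\delta_1\,\twonorm{\w}^2\;=\;\delta_1\,\distop{\xt}{\xo}^2
\]
for an arbitrarily small constant $\delta_1$, provided $m$ exceeds (a constant times) the squared Gaussian width of $\mathcal{M}_{2s}^b\cap\mathbb{S}^{n-1}$, which is $\order{s+k\log(n/s)}$ rather than the $\order{s\log(n/s)}$ of the unstructured case. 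This is exactly the stated lower bound on $m$.

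\emph{Step 2: block-model-based recovery, and induction.} The $\x$-update approximately solves $\min_{\x\in\mathcal{M}_s^b}\twonorm{\tfrac1{\sqrt m}\A\x-\tfrac1{\sqrt m}\P^{t+1}\y}$; substituting $\tfrac1{\sqrt m}\P^{t+1}\y=\tfrac1{\sqrt m}\A\xo+\tfrac1{\sqrt m}\e^{t+1}$ turns this into block-sparse recovery of $\xo\in\mathcal{M}_s^b$ with matrix $\tfrac1{\sqrt m}\A$ and additive noise $\tfrac1{\sqrt m}\e^{t+1}$. Since $m$ meets the stated bound — which is precisely the block-model-RIP requirement of \cite{modelcs} over $\mathcal{M}_{cs}^b$ for a small constant $c$ — the model-RIP holds whp, and unpacking the warm-started model-based CoSaMP guarantee of \cite{modelcs} exactly as the CoPRAM proof unpacks ordinary CoSaMP, the BlockCoSaMP call started from $\xt$ returns $\xtplus$ with
\[
\distop{\xtplus}{\xo}\;\le\;\twonorm{\xtplus-\xo}\;\le\;\tfrac12\twonorm{\xt-\xo}+C_0\,\tfrac1{\sqrt m}\twonorm{\e^{t+1}}\;\le\;\bigl(\tfrac12+C_0\sqrt{\delta_1}\bigr)\distop{\xt}{\xo}.
\]
Choosing $\delta_1$ small enough that $\rho_b:=\tfrac12+C_0\sqrt{\delta_1}<1$ yields \eqref{eq:blockmainconvergence} for this iteration; taking $\delta_b$ small enough that $\distop{\xt}{\xo}\le\delta_b\twonorm{\xo}$ validates the geometric inequality in Step~1, the bound $\rho_b<1$ also gives $\distop{\xtplus}{\xo}\le\delta_b\twonorm{\xo}$, so the hypothesis propagates and the argument runs for every $t$. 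Finally I would union-bound the block-model-RIP event and the Step-1 concentration event, both of which are $1-e^{-\gamma m}$ and, crucially, stated uniformly over $\mathcal{M}_{2s}^b$, so no fresh samples are needed across iterations.

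The main obstacle is Step~1: making the phase-error bound uniform over all $\xt$ near $\xo$ when the mismatch set $T(\xt)$ is itself a random, $\xt$-dependent object, so one cannot fix $T$ and union-bound over a net. The correct route is to view $\sup_{\w}\tfrac1m\sum_{i\in T(\w)}\iprod{\ai}{\w}^2$ as the supremum of a chaos process indexed by the block-sparse sphere and bound it via the generic-chaining machinery of \cite{talagrand,dirksen}/\cite{mahdi}; the block model only changes the complexity functional from $\order{s\log(n/s)}$ to $\order{s+k\log(n/s)}$. Everything else — the block-model-RIP, the model-CoSaMP iteration invariant, and the constant chasing — is a routine transcription of the CoPRAM argument.
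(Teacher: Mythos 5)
Your proposal is correct and follows essentially the same route the paper takes: the paper's own ``proof'' of Theorem~\ref{thm:blockconvergence} is a one-line remark that the argument is a transcription of Theorem~\ref{thm:convergence} with CoSaMP replaced by the model-based CoSaMP guarantee of \cite{modelcs}, and your sketch fills in exactly those two ingredients (the phase-error term bounded via the generic-chaining estimate of \cite{mahdi,talagrand,dirksen} instantiated over the block-sparse set, plus the model-RIP iteration invariant), so it is if anything more detailed than what the paper provides. The only point of divergence is cosmetic: you state the model-RIP sample complexity as $\order{s+k\log(n/s)}$ with $k=s/b$, which is the standard bound from \cite{modelcs}, whereas the theorem statement writes $\frac{sb}{n}$ in place of $\frac{s}{b}$ --- an apparent typo in the paper rather than a gap in your argument.
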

The proof of this theorem can be found in Appendix~\ref{sec:appendixB}.

\subsection{Extension to blocks of non-uniform sizes}

The analysis so far has been made for uniform blocks of size $b$. However the same algorithm can be extended to the case of sparse signals with \emph{non-uniform} blocks. Such a model is particularly useful for time-series signals where the nonzeros occur in ``bursts" of variable lengths and start times.

Formally, consider the \emph{clustered sparsity} model for 1D signals in $\rn$, comprising signals with $s$ non-zeros that occur in no more than $k$ non-overlapping blocks (clusters), each of which exhibit potentially unknown sizes and locations. The above analysis does not immediately apply to this case; however, by the analysis approach of~\cite{ModelCSSAMPTA}, we can show that any such clustered-sparse signal with parameters $(s,k)$ can be \emph{simulated} using a \emph{uniform} block-sparse signal with parameters $(s,3k)$. Therefore, the only price to be paid is a tripling of the block sparsity parameter $k$. Provided we are willing to tolerate this increase, we can use exactly the same Block CoPRAM algorithm (including both the initialization as well as the descent stages) as described above, with only a constant factor increase in the sample complexity. 

We note that this argument is only applicable to block-sparse 1D signals (such as time-domain signals); extending this argument to general clustered-sparse images and higher-dimensional data is much more involved, and we leave this to future work.

\section{Numerical experiments} \label{sec:results}
In this section, we present the results of a range of simulations supporting our algorithms and demonstrate their benefits over the state-of-the-art in sparse phase retrieval. All numerical experiments were conducted using MATLAB 2016a on a desktop computer with an Intel Xeon CPU at 3.3GHz and 8GB RAM.

%
Our experiments explores the performance of the CoPRAM and Block CoPRAM algorithms on synthetic data.
The non-zero elements of the test signal $\xo \in \rn$, with $n=3,000$ are generated using zero-mean Gaussian distribution $\normal$ and normalized, such that $\norm{\xo} = 1$. The elements of sensing matrix $\A \in \rmn$, $a_{ij}$ are also generated using the zero-mean Gaussian  distribution $\normal$. The sparsity levels $s$ are chosen in steps of $5$ with a maximum value of $s=50$ such that $n=3000 \gtrsim 50^2$ (for values of $s > \sqrt{n}$, the effect of sparsity is minimal and standard non-sparsity based phase retrieval algorithms perform equally well). A block length of $b=5$ is considered for all generated signals in experiments in Figures \ref{fig:phase_graph1}, \ref{fig:phase_trans} and \ref{fig:noise_err}. The number of measurements $m$ is swept from $m = 200$ to $m = 2,000$ in steps of $200$. We repeated each of the experiments (fixed $n,s,m$) in Figures \ref{fig:phase_graph1}, \ref{fig:phase_trans} and \ref{fig:noise_err} for $50$ independent Monte Carlo trials, and the experiments in Figure \ref{fig:phase_trans_blk} for $200$ independent Monte Carlo trials.

For our simulations, we compared our algorithms CoPRAM and Block CoPRAM with two other sparse phase retrieval algorithms: Thresholded Wirtinger Flow~\cite{cai} and SPARTA \cite{sparta}. For our set of generated signals, the AltMinSparse method mentioned in \cite{netrapalli} does not recover the signal in most cases (if the initialization stage fails to pick the correct support, the subsequent AltMinPhase procedure can never give a good solution). We therefore do not include this algorithm for comparisons. 

For Thresholded WF, we set parameters which were optimized based on a number of trial cases and were kept constant throughout all experiments, with values $\alpha = 1.5$, $\mu = 0.23$ and $\beta = 0.3$. Similarly, for SPARTA, we set the parameters to be $\gamma = 0.7$, $\mu = 1$ and $\card{\mathcal{I_o}} = \lceil {\frac{m}{6}}\rceil$ as mentioned in their paper. 

For the first experiment, we generated phase transition plots by evaluating the probability of successful recovery, i.e. number of trials out of $50$, that gave a relative error in retrieval $\frac{\dist{\x^{t_0}}{\xo}}{\twonorm{\xo}} < 0.05$. We let each of the algorithms to run for a total of $t_0 = 30$ iterations. The recovery probability for varying values of $s$ and $m$ has been illustrated in Figure~\ref{fig:phase_trans} through phase transition diagrams. It can be noted that CoPRAM (\ref{fig:phase_graph1} (c)) and SPARTA (\ref{fig:phase_graph1} (b)) perform comparably, while Block CoPRAM (\ref{fig:phase_graph1} (d)) performs the best among all four algorithms, in terms of sample complexity. 
\begin{figure}[!t]
	\centering
	\subfloat[][Thresholded WF]{\includegraphics[width = 0.24\textwidth]{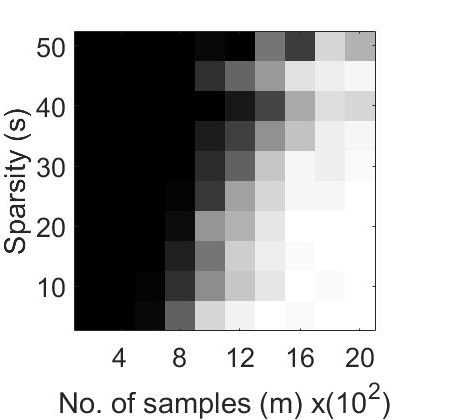}} 
	\subfloat[][SPARTA]{\includegraphics[width = 0.24\textwidth]{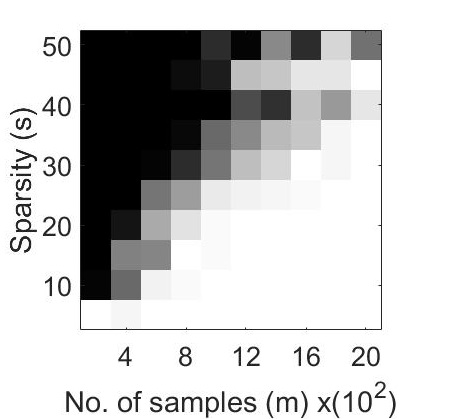}}
	\subfloat[][CoPRAM]{\includegraphics[width = 0.24\textwidth]{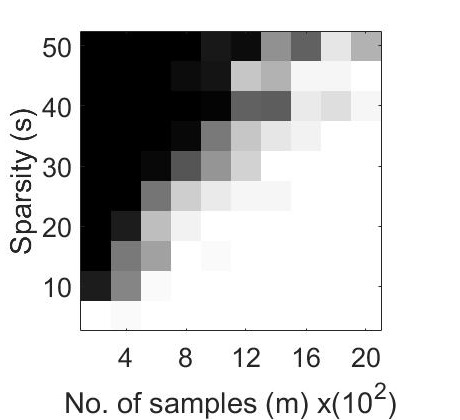}}
	\subfloat[][Block CoPRAM]{\includegraphics[width = 0.24\textwidth]{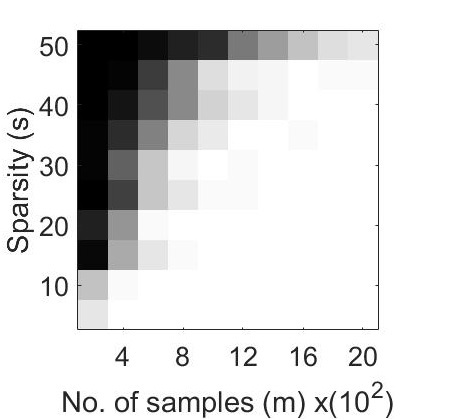}}
	\caption{\sl Phase transition plots for different algorithms, with signal length $n=3000$, having uniform block length of $b=5$.} \label{fig:phase_trans}
\end{figure}
The phase transition graphs for $s=20$ and $s=30$ for the four algorithms is displayed in Figure~\ref{fig:phase_graph1}.
\begin{figure}[!t]
	\centering
	\subfloat[][$s=20$]{
	\begin{tikzpicture}
\begin{axis}
[width=0.4\textwidth,
xlabel= Number of samples $m$, 
ylabel= Probability of recovery,
grid style = dashed,
grid=both,
legend style=
{
at={(1.55,0.7), 
anchor=north east},
cells={align=center}, 
} 
]

\addplot[color=blue,mark=square*] plot coordinates {
	(200,     0)
	(400,    0.12)
	(600,    0.74)
	(800,    0.94)
	(1000,   1)
	(1200,   1)
	(1400,   1)
	(1600,   1)
	(1800,   1)
	(2000,   1)
};
\addlegendentry{CoPRAM};

\addplot plot coordinates {
(200,    0.14)
(400,    0.58)
(600,    0.98)
(800,    1)
(1000,   1)
(1200,   1)
(1400,   1)
(1600,   1)
(1800,   1)
(2000,   1)
};

\addlegendentry{Block \\CoPRAM};

\addplot plot coordinates {
(200,     0)
(400,     0)
(600,     0)
(800,     0.06)
(1000,    0.58)
(1200,    0.7)
(1400,    0.9)
(1600,    1)
(1800,    1)
(2000,    1)
};
\addlegendentry{ThWF};

\addplot[color=green,mark=*] plot coordinates {
(200,     0)
(400,     0.08)
(600,     0.66)
(800,     0.88)
(1000,    0.98)
(1200,    1)
(1400,    1)
(1600,    1)
(1800,    1)
(2000,    1)
};
\addlegendentry{SPARTA};


\end{axis}
\end{tikzpicture}}
	\subfloat[][$s=30$]{
	\begin{tikzpicture}
\begin{axis}
[width=0.4\textwidth,
xlabel= Number of samples $m$, 
ylabel= Probability of recovery,
grid style = dashed,
grid=both,
legend style={at={(1.2,1.2)}, anchor=north east, legend columns=-1,
} 
]

\addplot[color=blue,mark=square*] plot coordinates {
	(200,     0)
	(400,     0)
	(600,     0.04)
	(800,     0.34)
	(1000,    0.58)
	(1200,    0.82)
	(1400,    1)
	(1600,    1)
	(1800,    1.0)
	(2000,    1.0)
};

\addplot plot coordinates {
(200,    0.02)
(400,    0.38)
(600,    0.78)
(800,    0.96)
(1000,   1)
(1200,   0.98)
(1400,   1)
(1600,   1)
(1800,   1)
(2000,   1)
};

\addplot plot coordinates {
(200,     0)
(400,     0)
(600,     0)
(800,     0)
(1000,    0.18)
(1200,    0.38)
(1400,    0.7667)
(1600,    0.96)
(1800,    0.9333)
(2000,    0.98)
};

\addplot[color=green,mark=*] plot coordinates {
(200,     0)
(400,     0)
(600,     0.02)
(800,     0.18)
(1000,    0.46)
(1200,    0.74)
(1400,    0.8333)
(1600,    1)
(1800,    0.9667)
(2000,    1)
};


\end{axis}
\end{tikzpicture}}
\caption{\sl Phase transition graph for (left) $s=20$ and (right) $s=30$, for a signal of length $n=3,000$ and block length $b=5$.} \label{fig:phase_graph1}
\end{figure}
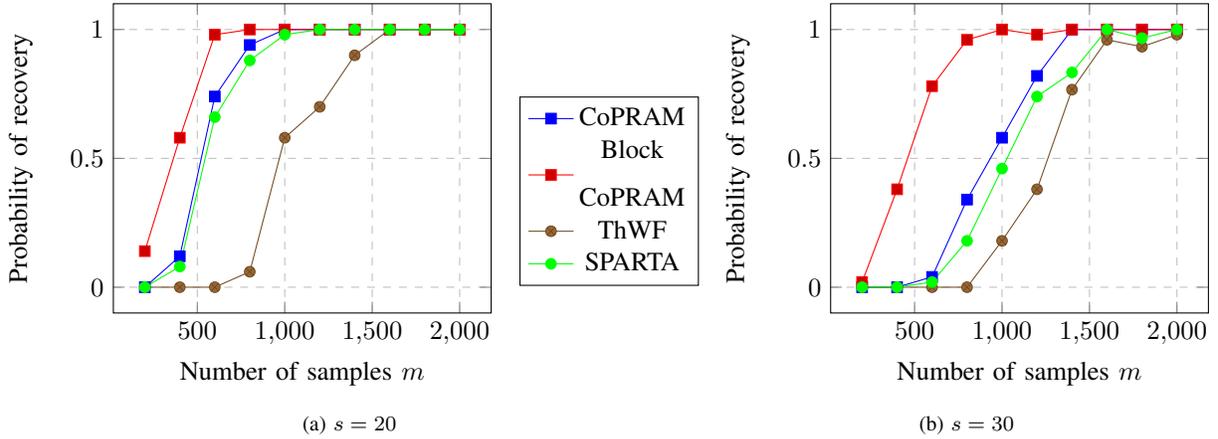
It can be noted that increasing the sparsity of signal shifts the phase transitions to the right (sample complexity of $m$ has a quadratic dependence on $s$ for CoPRAM, SPARTA and Thresholded WF). However the phase transition for Block CoPRAM has a less apparent shift, as compared to other algorithms (sample complexity of $m$ has sub-quadratic dependence on $s$). It can be noted that as sparsity $s$ increases, the \emph{gap} between the phase transition of Block CoPRAM and other algorithms in consideration, increases. As demonstrated in Figure \ref{fig:phase_trans}, the Block CoPRAM approach exhibits lowest sample complexity for the phase transitions in both cases (a) and (b) of Figure \ref{fig:phase_graph1}.

The mean running time of the algorithms for different algorithms is tabulated in Table~\ref{tab:running_time}. It can be noted that the running times of our algorithms CoPRAM and Block CoPRAM are at par with SPARTA and Thresholded WF.
\begin{table}[!t]
	\centering
	\caption{\sl Mean running time of different algorithms at $s=25$.} \label{tab:running_time}
	\begin{tabular}{|c|c|c|c|c|}
		\hline 
		  Algorithm        &  CoPRAM      & Block CoPRAM & SPARTA & ThWF \\
		\hline 
		$m$ at phase trans & 1,600 & 1,400 & 1,800 & 2,000 \\
		\hline 
		mean run time (s) & 0.4000 & 0.3258 & 0.3080 & 0.5808 \\
		\hline
	\end{tabular} 
\end{table}

\textbf{\emph{Leveraging block-sparsity.}} For the second experiment, we study the variation of phase transition with block-length, for Block CoPRAM (refer Figure~\ref{fig:phase_trans_blk}). For this experiment we fixed a signal of length $n=3,000$, sparsities $s = 20, k = 1$ for a block length of $b = 20$. We observed that the phase transitions improve with increase in block length (used to estimate the signal in the algorithm) up to a point. At block sparsities $\frac{s}{b} = \frac{20}{10}=2$ and $\frac{s}{b} = \frac{20 }{20} = 1$, there is little difference in the phase transitions, as the regime of the experiment is very close to the information theoretic bound of $s\log \frac{n}{s}$. 
\begin{figure}[!t]
	\centering
		\begin{tikzpicture}
\begin{axis}
[width=0.4\textwidth,
xlabel= Number of samples $m$, 
ylabel= Probability of recovery,
grid style = dashed,
grid=both,
legend style=
{at={(1.8,0.15)}, 
anchor=south east, 
} ,
]

\addplot plot coordinates {
	(50,     0.025)
	(100,    0.17)
	(150,    0.28)
	(200,    0.435)
	(250,    0.635)
	(300,    0.69)
	(350,    0.745)
	(400,    0.805)
	(500,    0.905)
	(600,    0.96)
	(800,    0.995)
	(1000,   1)
	(1200,   1)
	(1400,   1)
};

\addplot plot coordinates {
(50,     0.005)
(100,    0.11)
(150,    0.22)
(200,    0.45)
(250,    0.655)
(300,    0.695)
(350,    0.81)
(400,    0.925)
(500,    0.945)
(600,    0.98)
(800,    1)
(1000,   1)
(1200,   1)
(1400,   1)
};

\addplot plot coordinates {
	(50,     0)
	(100,    0)
	(150,    0.03)
	(200,    0.135)
	(250,    0.33)
	(300,    0.46)
	(350,    0.675)
	(400,    0.76)
	(500,    0.895)
	(600,    0.95)
	(800,    0.985)
	(1000,   0.995)
	(1200,   1)
	(1400,   1)
};

\addplot[color=orange,mark=square*] plot coordinates {
	(50,     0)
	(100,    0)
	(150,    0)
	(200,    0)
	(250,    0.055)
	(300,    0.12)
	(350,    0.23)
	(400,    0.345)
	(500,    0.645)
	(600,    0.775)
	(800,    0.965)
	(1000,   0.99)
	(1200,   1)
	(1400,   1)
};

\addplot[color=green,mark=*] plot coordinates {
	(50,     0)
	(100,    0)
	(150,    0)
	(200,    0)
	(250,    0)
	(300,    0)
	(350,    0.03)
	(400,    0.035)
	(500,    0.18)
	(600,    0.37)
	(800,    0.815)
	(1000,   0.95)
	(1200,   0.99)
	(1400,   1)
};

\legend{$b=20, k=1$\\$b=10, k=2$\\$b=5, k=4$\\$b=2, k=10$\\ $ b=1, k=20$\\}

\end{axis}
\end{tikzpicture}
	\caption{\sl Variation of phase transition for Block CoPRAM at $s=20$ and $b=20,10,5,2,1$ for a signal of length $n=3,000$.} \label{fig:phase_trans_blk}
\end{figure}
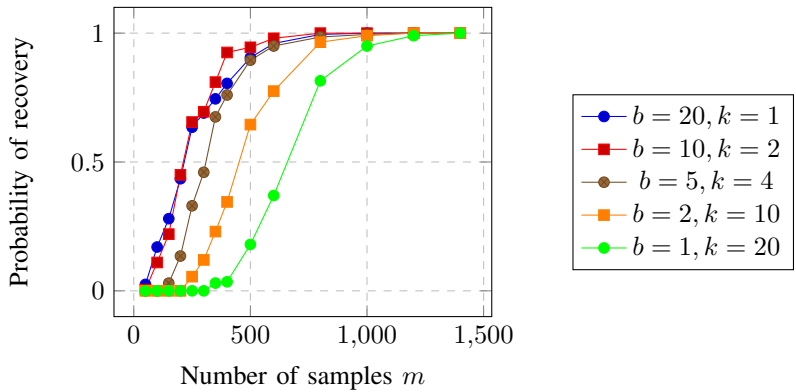

\textbf{\emph{Effect of noise.}}
For our third experiment, we study the effect of noise on the measurements of the form:
\begin{align*}
y_i = \abs{\iprod{\ai}{\xo}} + \epsilon_i,
\end{align*}
for $i\in \{1,2\dots m\}$, to verify the claims in Theorem \ref{thm:noise}. The noise vector $\epsilon \in \reals^m$ is sampled from a zero-mean Gaussian distribution $\gauss(0,\sigma^2)$, where $\sigma^2$ is determined using the input noise-signal-ratio (NSR). We compared CoPRAM, Block CoPRAM and SPARTA to analyze robustness to noisy measurements for amplitude only measurements (ThWF is excluded because they use quadratic measurements). We vary the input NSR $= \sigma^2/\twonorm{\xo}^2$, from $0.1$ to $1$ in steps of $0.1$. We fix signal parameters $n=3,000, s=20, b=5, k=4$ and number of measurements to $m=1,600$. This experiment was run for $50$ independent Monte Carlo trials. The variation of mean relative error $\twonorm{\x^{t_0} - \xo}/\twonorm{\xo}$ can be seen in Figure \ref{fig:noise_err}. We observe that Block CoPRAM exhibits greater robustness to noise compared to CoPRAM and SPARTA in all cases considered. 
\begin{figure}[!t]
	\centering
	\begin{tikzpicture}
\begin{axis}
[width=0.4\textwidth,
xlabel= Noise-to-signal ratio NSR, 
ylabel= Relative error in recovery,
grid style = dashed,
grid=both,
legend style={at={(1.8,0.3)}, anchor=south east, 
} 
]

\addplot[color=blue,mark=square*] plot coordinates {
(0.1, 0.1133)
(0.2, 0.1619)
(0.3, 0.2014)
(0.4, 0.2330)
(0.5, 0.2592)
(0.6, 0.2924)
(0.7, 0.3148)
(0.8, 0.3400)
(0.9, 0.4064)
(1,   0.4068)
};
\addlegendentry{CoPRAM};

\addplot plot coordinates {
	(0.1,     0.0391)
	(0.2,     0.0541)
	(0.3,     0.0633)
	(0.4,     0.0788)
	(0.5,     0.0897)
	(0.6,     0.1080)
	(0.7,     0.1822)
	(0.8,     0.1655)
	(0.9,     0.1904)
	(1,       0.2597)
};
\addlegendentry{Block CoPRAM};

\addplot[color=green,mark=*] plot coordinates {
(0.1, 0.1096)
(0.2, 0.1625)
(0.3, 0.2122)
(0.4, 0.2641)
(0.5, 0.3246)
(0.6, 0.6233)
(0.7, 0.9654)
(0.8, 1.0102)
(0.9, 1.0141)
(1,   1.0214)
};
\addlegendentry{SPARTA};

\end{axis}
\end{tikzpicture}
	\caption{\sl Variation of mean relative error in signal recovered v/s input NSR at $s=20$ and $b=5,k=4$ for a signal of length $n=3,000$, and number of measurements $m=1,600$.} \label{fig:noise_err}
\end{figure}
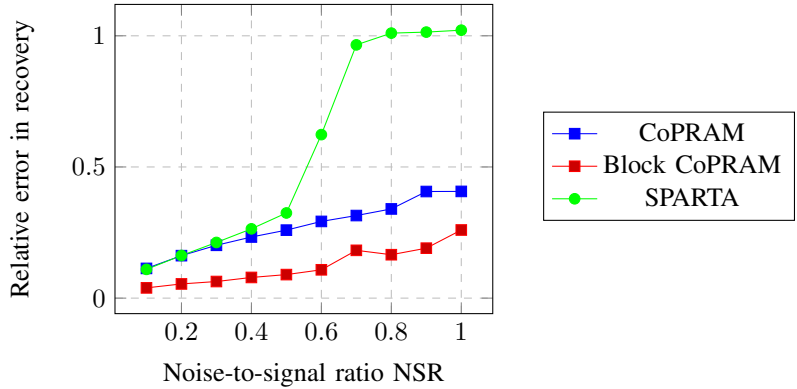

\begin{figure}[!h]
	\centering
\subfloat[][$s=20$]{
	\begin{tikzpicture}
\begin{axis}
[width=0.39\textwidth,
xlabel= Number of samples $m$, 
ylabel= Probability of recovery,
grid style = dashed,
grid=both,
legend style=
{at={(1.7,0.2)}, 
anchor=south east, 
} ,
]

\addplot plot coordinates {
	(200,     0)
	(400,    0.0200)
	(600,    0.6800)
	(800,    0.9000)
	(1000,   0.9600)
	(1200,    1)
	(1400,    1)
	(1600,    1)
	(1800,    1)
	(2000,    1)
	(2200,    1)
};

\addplot plot coordinates {
	(200,     0)
	(400,    0.3400)
	(600,    0.9000)
	(800,    0.9800)
	(1000,    1)
	(1200,    1)
	(1400,    1)
	(1600,    1)
	(1800,    1)
	(2000,    1)
	(2200,    1)
};

\addplot[mark=*, color=green] plot coordinates {
	(200,     0)
	(400,    0.8200)
	(600,     1)
	(800,     1)
	(1000,    1)
	(1200,    1)
	(1400,    1)
	(1600,    1)
	(1800,    1)
	(2000,    1)
	(2200,    1)
};

\addplot plot coordinates {
	(200,     0)
	(400,     0.96)
	(600,     1)
	(800,     1)
	(1000,    1)
	(1200,    1)
	(1400,    1)
	(1600,    1)
	(1800,    1)
	(2000,    1)
	(2200,    1)
};

\legend{Random normal\\$\alpha = 2$\\$\alpha = 4$\\$\alpha = 8$\\}

\end{axis}
\end{tikzpicture}}
\subfloat[][$s=30$]{
	\begin{tikzpicture}
\begin{axis}
[width=0.39\textwidth,
xlabel= Number of samples $m$, 
ylabel= Probability of recovery,
grid style = dashed,
grid=both,
legend style=
{at={(1.5,0)}, 
anchor=south east, 
} ,
]

\addplot plot coordinates {
	(200,    0)
	(400,    0)
	(600,    0)
	(800,    0.2200)
	(1000,   0.5200)
	(1200,   0.84)
	(1400,   0.92)
	(1600,   0.98)
	(1800,    1)
	(2000,    1)
	(2200,    1)
};

\addplot plot coordinates {
	(200,     0)
	(400,     0)
	(600,    0.7200)
	(800,    0.9600)
	(1000,    1)
	(1200,    1)
	(1400,    1)
	(1600,    1)
	(1800,    1)
	(2000,    1)
	(2200,    1)
};

\addplot[mark=*, color=green] plot coordinates {
	(200,     0)
	(400,    0.1200)
	(600,    0.88)
	(800,     1)
	(1000,    1)
	(1200,    1)
	(1400,    1)
	(1600,    1)
	(1800,    1)
	(2000,    1)
	(2200,    1)
};

\addplot plot coordinates {
	(200,     0)
	(400,     0.08)
	(600,     0.98)
	(800,     1)
	(1000,    1)
	(1200,    1)
	(1400,    1)
	(1600,    1)
	(1800,    1)
	(2000,    1)
	(2200,    1)
};


\end{axis}
\end{tikzpicture}}
	\caption{\sl Variation of phase transition for CoPRAM at (a) $s=20$ and (b) $s=30$ at decay rates $\alpha=2,4,8$, for a signal of length $n=3,000$, compared to standard $s$-sparse signal with coefficients picked random normally.} \label{fig:phase_trans_pl}
\end{figure}
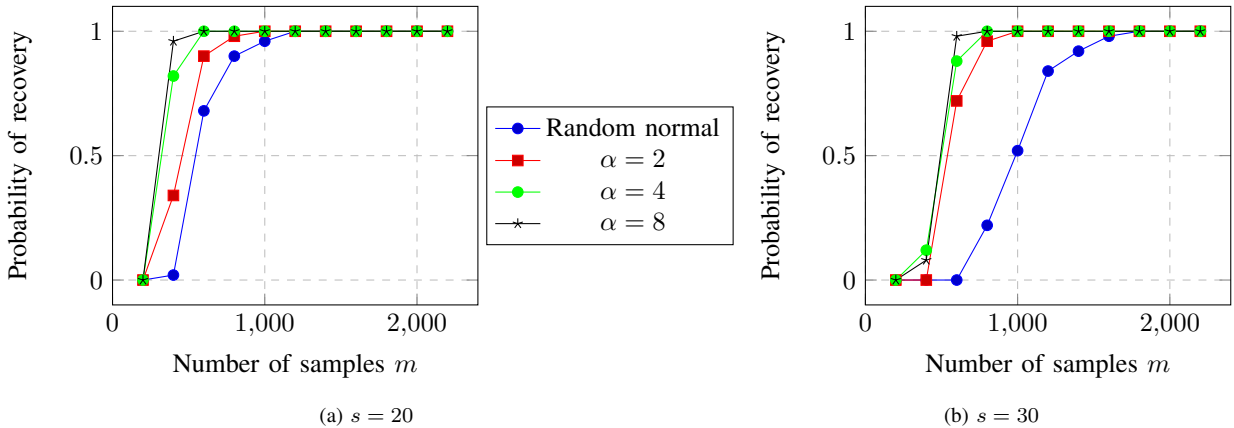

\textbf{\emph{Power-law decay.}} For our fourth experiment, we verify the claims in Theorem \ref{thm:PLdecay}. We set the signal length to $n=3,000$. We analyze the effect of power law decay on signals with sparsities $s=20$ and $s=30$ for different rates of decay $\alpha = 2,4,8$ and compare this to the case with no power law decay (coefficients picked random normally). We observe an improvement in sample complexity with respect to the "no powerlaw decay" case, as seen in Figure \ref{fig:phase_trans_pl}. The improvement is more prominent as we increase the sparsity from $s=20$ to $s=30$.

\textbf{\emph{Experiments on real images.}} For our final experiment, we evaluated the performance of our algorithm on a real image, with induced sparsity in the wavelet basis (db1). We used a $128 \times 128$ image of Lovett Hall, and used the thresholded wavelet transform (using Haar wavelet) of this image as the sparse signal with $s = 0.09n$. This image was reconstructed using $m=16,384$ samples, using CoPRAM and the standard AltMinPhase algorithm described in \cite{netrapalli}. We used signal-to-noise ratio (SNR) to quantify quality of reconstruction. In Figure \ref{fig:lovett}, we demonstrate how enforcing a sparsity constraint enables recovery of the same image using fewer samples and better reconstruction quality.

\begin{figure}[!h]
	\centering
		\subfloat[][Original]{
		\includegraphics[width = 0.25\textwidth]{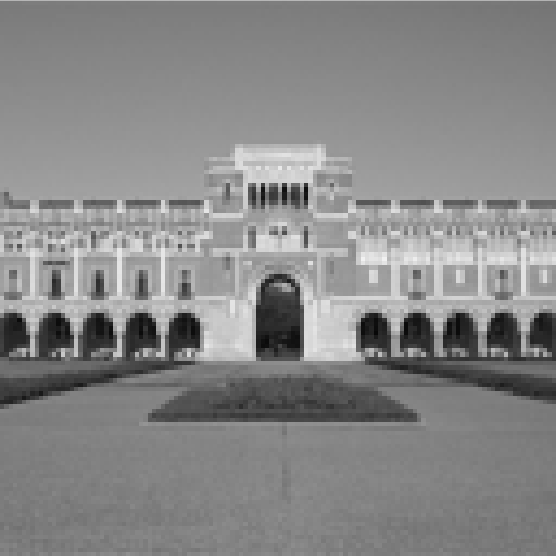}} \quad\quad \quad
	\subfloat[][AltMinPhase, SNR=-0.71dB]{
		\includegraphics[width = 0.25\textwidth]{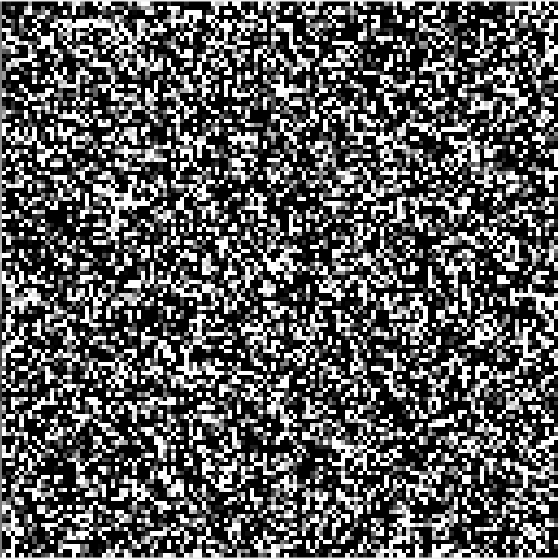}} \quad\quad\quad
		\subfloat[][CoPRAM, SNR=82.86dB]{
		\includegraphics[width = 0.25\textwidth]{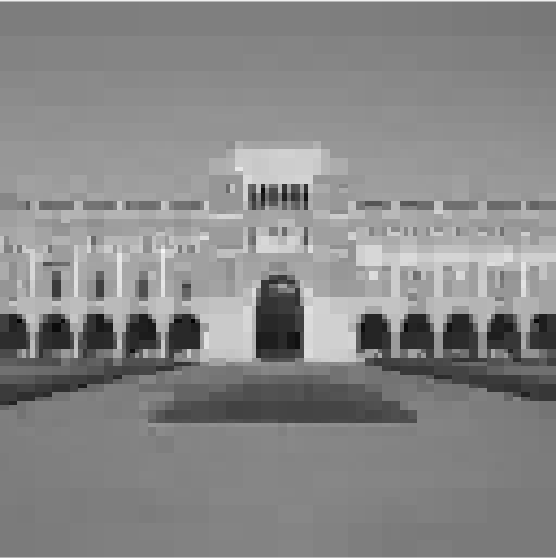}} 
	\caption{\sl Reconstruction of the original Lovett (a) image using (b) AltMinPhase for $m=16,384$, (c) CoPRAM for $m=16,384$ measurements, where $f=m/n$.} \label{fig:lovett}
\end{figure}


\section{Discussion} \label{sec:remarks}

In this paper, we have introduced a set of new algorithmic approaches for sparse as well as structured sparse phase retrieval. Our algorithms are conceptually simple and indeed are reminiscent of classical heuristics for phase retrieval; however, our analysis also shows an asymptotic reduction in sample complexity when additional structures on top of standard sparsity are leveraged within the reconstruction process. Several open questions remain, including expanding our analysis to more sophisticated sparsity models, such as clusters, trees, groups, and graphical models~\cite{surveyEATCS}. Moreover, our analysis only applies to the case of Gaussian samples, and extending our results to more realistic  measurement schemes such as Fourier samples and coded diffraction patterns~\cite{candes2015phase} will be an interesting direction of future study; indeed, our preliminary experimental results in~\cite{sfp,lrfp} have shown the empirical benefits of our algorithms for such challenging measurement setups.


\appendices


\section{CoPRAM Initialization} \label{sec:appendixA}
In this section we state the proofs related to the \textit{initialization} in Algorithm~\ref{algo:initial_copram}, for compressive phase retrieval. This includes the proofs of Lemmas~\ref{lem:smallmarginals}~-~\ref{lem:xin} which complete the proof of Theorem~\ref{thm:main_initial}.\\

The outline of the proof is sketched out as follows. Using Lemma \ref{lem:smallmarginals}, we can find an upper bound on marginals $M_{jj}$ for $j\in S$. Consequently,
\begin{align} \label{eq:margin_ub}
\max_{j\in S^c} M_{jj} \leq \rbrak{1+11 \sqfr{\log mn}{m}} \twonorm{\xo}^2 = \Theta_1
\end{align}
with probability greater than $1-\frac{5}{m}$.
Marginals $M_{jj}$ for $j\in S$ can be evaluated in two ways:
\begin{enumerate}
	\item Assuming a bound on the minimum element of $\xo$: $x_{min}^{*2} \equiv \min_{j\in S} x_j^{*2} = \frac{C}{s}\twonorm{\xo}^2$. The proof then carries forward from the work in \cite{sparta}, where they arrive at the lower bound on the minimum marginal for $j\in S$, with probability greater than $1-\frac{1}{m}$,
	\begin{align*}
	\min_{j\in S} M_{jj} &\geq \twonorm{\xo}^2 + x_{min}^{*2} \\
	&= \left(1+\frac{C}{s}\right)\twonorm{\xo}^2 = \Theta_2,
	\end{align*}
	given that $m \geq C_0 s^2 \log(mn)$. This proof is similar to that mentioned in Lemma~\ref{lem:largemarginals}.  Piecing these two together,
	\begin{align*}
	\min_{j\in S} M_{jj} &\geq \left(1+\frac{C}{s}\right)\twonorm{\xo}\\
	 &> \left(1+11 \sqfr{\log mn}{m} \right)\twonorm{\xo}^2\\
	 &\geq \max_{j\in S^c} M_{jj} .
	\end{align*}
	which implies that the support picked up using the top $s$-marginals $M_{jj}$ is the true support with probability greater than $1-\frac{6}{m}$, given $m\geq C_0 s^2 \log(mn)$, as long as there is a clear separation between $\Theta_1$ and $\Theta_2$. They proceed to show that with a high probability, $\dist{\xin}{\xo} \leq \delta_0\twonorm{\xo}$, using Proposition 1 of \cite{TAF}, completing the proof of Theorem~\ref{thm:main_initial}.
	\item If there is no such assumption on the minimum entry $x_{min}^{*2}$, we proceed with a longer proof, as stated below using Lemmas \ref{lem:largemarginals}-\ref{lem:xin}. The idea is to show that $\xo \approx \xShat$ and subsequently $\xShat \approx \xin $, effectively implying that $\xin \approx \xo$.
\end{enumerate}

This idea and the partition of support sets used in the proof have been illustrated in Figures \ref{fig:illus1} and \ref{fig:illus2}.

\begin{figure}[h]
	\centering


\begin{tikzpicture}[scale=1]
\centering
\draw [thin, fill = orange, fill opacity=0.2]
(0,0) -- (0,2) -- (6,2) -- (6,0) -- (0,0) ;
\node at (3,1) {$|S^c| = n-s$};
\draw [green, thick, dashed]
(6,2) -- (7,2);
\draw [green, thick, dashed]
(6,0) -- (7,0);
\draw [thin, fill = blue, fill opacity=0.2]
(7,0) -- (9,0) -- (9,2) -- (7,2) -- (7,0) ; 
\node at (8,1) {$|S| = s$};
\draw [red, thick, dashed]
(6,0) -- (6,2); 
\draw [yellow, thick, dashed]
(7,0) -- (7,2); 
\node at (0,3) {low $M_{jj}$};
\draw [->] (0,2.8) -- (0,2.2);
\node at (9,3) {high $M_{jj}$};
\draw [->] (9,2.8) -- (9,2.2);
\node at (7,3) {$\Theta_2$};
\draw [->] (7,2.8) -- (7,2.2);
\node at (6,3) {$\Theta_1$};
\draw [->] (6,2.8) -- (6,2.2);
\draw[decoration={brace,mirror,raise=5pt},decorate]
(0,0) -- node[below=6pt] {bottom $(n-s)$ marginals} (5.9,0);
\draw[decoration={brace,mirror,raise=5pt},decorate]
(6.1,0) -- node[below=6pt] {top $s$ marginals} (9,0);
\end{tikzpicture}
	\caption{Partition of supports considered for analysis of proof approach 1: assumption on $x_{min}^*$.} \label{fig:illus1}
\end{figure}
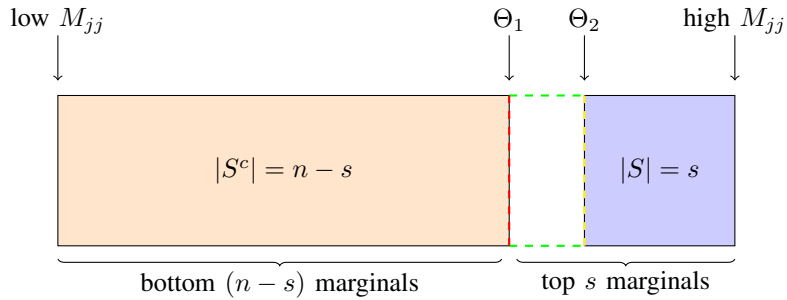

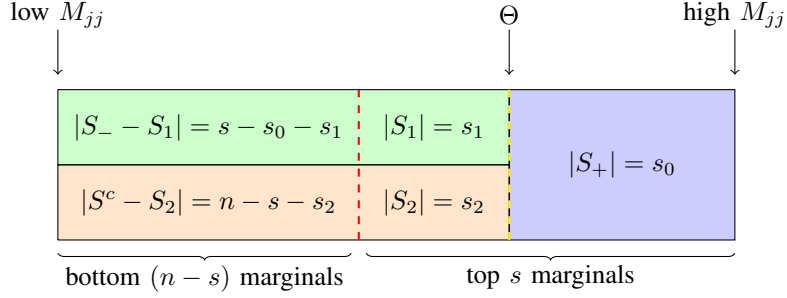
\begin{figure}[h]
	\centering


\begin{tikzpicture}[scale=1]
\centering
\draw [thin, fill = orange, fill opacity=0.2]
(0,0) -- (0,1) -- (6,1) -- (6,0) -- (0,0) ;
\node at (2,0.5) {$|S^c-S_2| = n-s-s_2$};
\node at (5,0.5) {$|S_2| = s_2$};
\draw [thin, fill = green, fill opacity=0.2]
(0,1) -- (6,1) -- (6,2) -- (0,2) -- (0,1) ; 
\node at (2,1.5) {$|S_{-}-S_1| = s-s_0 - s_1$};
\node at (5,1.5) {$|S_1| = s_1$};
\draw [thin, fill = blue, fill opacity=0.2]
(6,0) -- (9,0) -- (9,2) -- (6,2) -- (6,0) ; 
\node at (7.5,1) {$|S_+| = s_0$};
\draw [red, thick, dashed]
(4,0) -- (4,2); 
\draw [yellow, thick, dashed]
(6,0) -- (6,2); 
\node at (0,3) {low $M_{jj}$};
\draw [->] (0,2.8) -- (0,2.2);
\node at (9,3) {high $M_{jj}$};
\draw [->] (9,2.8) -- (9,2.2);
\node at (6,3) {$\Theta$};
\draw [->] (6,2.8) -- (6,2.2);
\draw[decoration={brace,mirror,raise=5pt},decorate]
(0,0) -- node[below=6pt] {bottom $(n-s)$ marginals} (3.9,0);
\draw[decoration={brace,mirror,raise=5pt},decorate]
(4.1,0) -- node[below=6pt] {top $s$ marginals} (9,0);
\end{tikzpicture}
	\caption{Partition of supports considered for analysis of proof approach 2.} \label{fig:illus2}
\end{figure}

\begin{lemma} \label{lem:smallmarginals}
	For all $j\in S^c$, with probability greater than $1-\frac{5}{m}$, the corresponding marginals are upper-bounded as
	\begin{align}
	M_{jj} \leq \rbrak{1 + 11\sqrt{\frac{\log{mn}}{m}}}\twonorm{\xo}^2 = \Theta.
	\end{align}
\end{lemma}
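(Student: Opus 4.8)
The plan is to exploit that $j\in S^c$ means $x_j^*=0$, so that the $j$-th column of $\A$, the entries $\cbrak{a_{ij}}_{i=1}^m$, is statistically \emph{independent} of the measurements $\cbrak{y_i}_{i=1}^m$, since $y_i^2=\iprod{\ai}{\xo}^2$ depends only on the columns of $\A$ indexed by $S$. I would condition on all of those columns (equivalently, on $\y$). Conditionally, $M_{jj}=\frac1m\sum_{i=1}^m y_i^2 a_{ij}^2$ is a nonnegative-weighted sum of i.i.d.\ $\chi^2$ variables $a_{ij}^2$ with weights $w_i:=y_i^2/m$, whose conditional mean is exactly $\phi^2=\frac1m\sum_i y_i^2$ (this matches $\expec{M_{jj}}=\twonorm{\xo}^2$ up to the concentration of $\phi^2$). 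A Bernstein-type (Laurent--Massart) upper-tail bound for weighted $\chi^2$ sums then gives, for any $u>0$,
\[
M_{jj}\ \le\ \phi^2 + 2\twonorm{w}\sqrt{u} + 2\infnorm{w}\,u
\]
with conditional probability at least $1-e^{-u}$, where $\twonorm{w}$ and $\infnorm{w}$ are the $\ell_2$ and $\ell_\infty$ norms of the weight vector $w$.

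It then remains to control the three quantities on the right. Writing $g_i:=\iprod{\ai}{\xo}/\twonorm{\xo}\sim\normal$ so that $y_i^2=\twonorm{\xo}^2 g_i^2$, I would establish, each with failure probability $\order{1/m}$ via standard concentration: (i) $\phi^2=\frac{\twonorm{\xo}^2}{m}\sum_i g_i^2\le \twonorm{\xo}^2\rbrak{1+c_1\sqfr{\log mn}{m}}$ by $\chi^2$ concentration; (ii) $\infnorm{w}=\frac{\twonorm{\xo}^2}{m}\max_i g_i^2\le c_2\twonorm{\xo}^2\frac{\log mn}{m}$ by a union bound over the at most $mn$ Gaussian entries of $\A$; and (iii) $\twonorm{w}^2=\frac{\twonorm{\xo}^4}{m^2}\sum_i g_i^4\le c_3\twonorm{\xo}^4/m$, using that $\sum_i g_i^4$ concentrates around $3m$. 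Choosing $u=c_4\log mn$ and substituting (i)--(iii), the slack in $\phi^2$ and the cross term $2\twonorm{w}\sqrt u$ are both $\order{\twonorm{\xo}^2\sqfr{\log mn}{m}}$, while $2\infnorm{w}u=\order{\twonorm{\xo}^2(\log mn)^2/m}$ is of lower order in the relevant regime $m\gtrsim(\log mn)^3$ (implied by $m\ge Cs^2\log mn$). Collecting the absolute constants yields $M_{jj}\le\rbrak{1+11\sqfr{\log mn}{m}}\twonorm{\xo}^2$. Finally, take $c_4$ large enough that $e^{-u}$ survives a union bound over the at most $n$ coordinates $j\in S^c$, add the $\order{1/m}$ failure probabilities of events (i)--(iii), and arrange constants so the total is at most $\tfrac{5}{m}$.

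The main obstacle is that the per-sample summands $y_i^2 a_{ij}^2$ are \emph{products} of two chi-square–type variables (correlated through $\twonorm{\xo}^2$), hence strictly heavier-tailed than sub-exponential, so a naive Bernstein inequality does not apply directly. The conditioning step is exactly what circumvents this: once the columns indexed by $S$ are frozen, $M_{jj}$ becomes an \emph{affine} function of the single independent $\chi^2$ vector $\rbrak{a_{ij}^2}_i$ — valid only because $j\in S^c$ — for which sharp Bernstein bounds are available. The one quantitative subtlety is step (iii): the crude bound $\sum_i g_i^4\le\infnorm{g}^2\twonorm{g}^2=\order{m\log mn}$ would cost an extra $\sqrt{\log mn}$ and give only a rate of $\log(mn)/\sqrt m$; obtaining the advertised $\sqfr{\log mn}{m}$ requires the sharper $\sum_i g_i^4=\order{m}$, which needs a dedicated concentration bound for the sub-Weibull (order $1/2$) variables $g_i^4$ rather than a norm inequality. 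An alternative route avoids conditioning entirely: observe that $y_i^2 a_{ij}^2$ is sub-Weibull of order $1/2$ with $\order{1}$ Orlicz norm and invoke a generalized Bernstein inequality for sums of independent sub-Weibull variables; this gives the same rate and probability and is perhaps more transparent, at the cost of relying on a less textbook tail inequality.
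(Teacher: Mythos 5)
Your proposal follows essentially the same route as the paper's proof: exploit the independence of $a_{ij}$ from $y_i$ for $j\in S^c$, apply the Laurent--Massart weighted-$\chi^2$ tail bound with weights $y_i^2$, separately control $\phi^2$, $\sum_i y_i^4$, and $\max_i y_i^2$, and then union bound over the auxiliary events and over all $j\in S^c$. The only (minor) divergence is your step (iii): the paper controls $\sum_i y_i^4$ via Chebyshev's inequality at failure level $1/(mp)$, whereas you invoke a sharper concentration of $\sum_i y_i^4$ around $3m\twonorm{\xo}^4$ — your version is, if anything, the cleaner way to land the advertised $\sqrt{\log(mn)/m}$ rate.
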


\begin{proof}
	Evaluating the marginals:
	\begin{align*}
	M_{jj} -\phi^2= \frac{1}{m}\sum_{i=1}^{m} y_i^2\rbrak{a_{ij}^2 - 1},
	\end{align*}
	where $y_i$ is independent of $a_{ij}$ for all $j \in S^c$. Evaluating the tail bound in terms of a series of tail bounds for independent random variables $y_i$ and $a_{ij}$, one can use Lemma 4.1 of \cite{chisquare} for the $\chi_1^2$ variables $a_{ij}^2$ with weights $y_i^2$ (here $p \equiv n-s$):
	\begin{gather*}
\prob{\sum_{i=1}^{m} y_i^2\rbrak{a_{ij}^2 - 1} > 2\sqrt{t} \rbrak{\sum_{i=1}^{m} y_i^4}^{\frac{1}{2}} + 2\rbrak{\max_i y_i^2}t}
\leq \exp(-t) = \frac{1}{mp}.
\end{gather*}
	Further, using the Chebyshev's inequality for $y_i^2$:
	\begin{align*}
	\prob{\sum_{i=1}^{m} \frac{y_i^4}{\twonorm{\xo}^4} > 3m + \sqrt{96m} t} \leq \frac{1}{t^2}  = \frac{1}{mp}. 
	\end{align*} 
	Using the Gaussian tail bound for $y_i^2$ followed by union bound:
	\begin{align*}
	\prob{\max_i \frac{y_i^2}{\twonorm{\xo}^2} > t} \leq 2m\exp\rbrak{\frac{-t}{2}} = \frac{2}{mp^2} \leq \frac{2}{mp}.
	\end{align*}
	With probability at most $\frac{4}{mp}$, for each $j\in S^c$, using a union bound on these three tail bounds,
	\begin{align*}
	\frac{1}{m} \sum_{i=1}^{m} y_i^2 (a_{ij}^2 - 1) &> 2\sqrt{3+\sqrt{96p}} \twonorm{\xo}^2 \sqfr{\log mp}{m} + 8 \twonorm{\xo}^2 \frac{\rbrak{\log mp}^2}{m},\\
	&> 2\sqrt{3+\sqrt{96}} \twonorm{\xo}^2 \sqfr{\log mp}{m} + 8 \twonorm{\xo}^2 \frac{\rbrak{\log mp}^2}{m}.
	\end{align*}
	Using a union bound for all $j \in S^c$ ($p$ such), with probability at least $1-\frac{4}{m}$, 
	\begin{align*} 
\frac{1}{m} \sum_{i=1}^{m} y_i^2 (a_{ij}^2 - 1) &\leq 2\sqrt{3+\sqrt{96}} \twonorm{\xo}^2 \sqfr{\log mp}{m} + 8 \twonorm{\xo}^2 \frac{\rbrak{\log mp}^2}{m},\\ 
&\leq 8\sqfr{\log mp}{m}\twonorm{\xo}^2.
\end{align*}
	Using Lemma \ref{lem:signalpower}, for $m > C$, and using the fact that $p \leq n$: 
	\begin{align} \nonumber
	M_{jj} &= \frac{1}{m} \sum_{i=1}^{m} y_i^2 a_{ij}^2 \leq 8 \sqfr{\log mn}{m} \twonorm{\xo}^2 + \phi^2,\nonumber\\ \label{eq:marginal_ub}
	M_{jj} &\leq \rbrak{1+11\sqfr{\log mn}{m}}\twonorm{\xo}^2 = \Theta,
	\end{align}
	which establishes the upper bound on marginals associated with the zero-locations $j \in S^c$,  with probability greater than $1-\frac{5}{m}$.
\end{proof}

\begin{lemma}
	\label{lem:largemarginals}
	For $j\in S_+ \subset S$, with probability greater than $1-\frac{2}{m}$, the corresponding marginals are lower-bounded as
	\begin{align} \label{eq:marg_lb}
	M_{jj} \geq \rbrak{1 + 11\sqrt{\frac{\log{mn}}{m}}}\twonorm{\xo}^2 = \Theta,
	\end{align}
	where $S_{+}$ is defined as
	\begin{align} \label{eq:largeelements}
	S_{+} = \cbrak{j \in S \mid  x_j^{*2} > {15}\sqfr{\log{mn}}{m}\twonorm{\xo}^2} .
	\end{align}
	Subsequently, we can define $S_{-}$ as
	\begin{align} \label{eq:smallelements}
	S_{-} = \cbrak{j \in S \mid  x_j^{*2} \leq 15\sqfr{\log{mn}}{m}\twonorm{\xo}^2} ,
	\end{align}
	with $S_{+}$ and $S_{-}$ forming a partition of $S$ and the corresponding energy in the elements $x_j, j\in S_{-}$ is lower-bounded as	
	\begin{align} \label{eq:ubsmall}
	\twonorm{\xSmin}^2 \leq 15 \sqfr{s^2 \log{mn}}{m}\twonorm{\xo}^2.
	\end{align}
\end{lemma}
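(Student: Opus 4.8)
The plan is to dispatch first the two claims that do not involve probability, and then focus on the lower bound \eqref{eq:marg_lb}. The sets $S_+$ and $S_-$ in \eqref{eq:largeelements} and \eqref{eq:smallelements} are defined by the complementary conditions $x_j^{*2} > 15\sqfr{\log mn}{m}\twonorm{\xo}^2$ and $x_j^{*2} \leq 15\sqfr{\log mn}{m}\twonorm{\xo}^2$ over $j\in S$, so they partition $S$; and since $\card{S_-}\leq\card{S}=s$,
\begin{align*}
\twonorm{\xSmin}^2 \;=\; \sum_{j\in S_-} x_j^{*2} \;\leq\; \card{S_-}\cdot 15\sqfr{\log mn}{m}\twonorm{\xo}^2 \;\leq\; 15\sqfr{s^2\log mn}{m}\twonorm{\xo}^2,
\end{align*}
which is \eqref{eq:ubsmall}.

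For \eqref{eq:marg_lb} the first step is to decouple $y_i$ from $a_{ij}$, which are \emph{dependent} when $j\in S$. Fix $j\in S_+$ and write $\iprod{\ai}{\xo} = x_j^* a_{ij} + u_i$, where $u_i := \sum_{k\in S\setminus\{j\}} a_{ik}x_k^* \sim \gauss\rbrak{0,\twonorm{\xo}^2 - x_j^{*2}}$ is independent of $a_{ij}$. Expanding $y_i^2 a_{ij}^2 = (x_j^* a_{ij}+u_i)^2 a_{ij}^2$ and averaging gives
\begin{align*}
M_{jj} \;=\; x_j^{*2}\,T_1 + 2x_j^*\,T_2 + T_3, \qquad T_1 := \frac1m\sum_{i=1}^m a_{ij}^4,\quad T_2 := \frac1m\sum_{i=1}^m a_{ij}^3 u_i,\quad T_3 := \frac1m\sum_{i=1}^m a_{ij}^2 u_i^2,
\end{align*}
whose unconditional mean is $3x_j^{*2} + (\twonorm{\xo}^2 - x_j^{*2}) = \twonorm{\xo}^2 + 2x_j^{*2}$, consistent with \eqref{eq:def_marginals}.

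Next I would lower-bound each term with the same toolkit as in Lemma~\ref{lem:smallmarginals} — Gaussian-max and $\chi^2$ tail bounds and Lemma 4.1 of \cite{chisquare} — taking deviation parameters of order $\log mn$ throughout. For $T_1$, an i.i.d.\ average of $a_{ij}^4$ of mean $3$, a sub-exponential tail bound gives $T_1 \geq 3 - C\sqfr{\log mn}{m}$. For $T_3$, conditioning on $\{a_{ij}\}_i$ makes it a nonnegative-weighted sum of the independent $\chi_1^2$ variables $u_i^2/(\twonorm{\xo}^2 - x_j^{*2})$ with weights $a_{ij}^2$ and conditional mean $(\twonorm{\xo}^2 - x_j^{*2})\tfrac1m\sum_i a_{ij}^2$; Lemma 4.1 of \cite{chisquare} together with $\tfrac1m\sum_i a_{ij}^2 \geq 1 - C\sqfr{\log mn}{m}$ yields $T_3 \geq \twonorm{\xo}^2 - x_j^{*2} - C\sqfr{\log mn}{m}\twonorm{\xo}^2$. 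For $T_2$, conditioning on $\{a_{ij}\}_i$ makes it a zero-mean Gaussian of variance $\tfrac{\twonorm{\xo}^2 - x_j^{*2}}{m^2}\sum_i a_{ij}^6 \leq C\twonorm{\xo}^2/m$ on the high-probability event $\{\tfrac1m\sum_i a_{ij}^6 \leq C\}$, so $2x_j^*\abs{T_2} \leq C\twonorm{\xo}^2\sqfr{\log mn}{m}$ after using $x_j^* \leq \twonorm{\xo}$. Assembling these and absorbing $x_j^{*2}\leq\twonorm{\xo}^2$ into the error,
\begin{align*}
M_{jj} \;\geq\; \twonorm{\xo}^2 + 2x_j^{*2} - C'\sqfr{\log mn}{m}\twonorm{\xo}^2 \;\geq\; \twonorm{\xo}^2 + (30 - C')\sqfr{\log mn}{m}\twonorm{\xo}^2,
\end{align*}
using $j\in S_+$, i.e.\ $2x_j^{*2} > 30\sqfr{\log mn}{m}\twonorm{\xo}^2$; the constant $15$ in \eqref{eq:largeelements} is chosen so that $30 - C' \geq 11$, which gives $M_{jj} \geq \rbrak{1 + 11\sqfr{\log mn}{m}}\twonorm{\xo}^2 = \Theta$. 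A union bound over the (at most $s$) indices $j\in S$, with each tail event above having probability $O(1/(mn))$, keeps the total failure probability below $\tfrac2m$.

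The main obstacle is precisely the dependence of $y_i$ on $a_{ij}$ for $j\in S$; the orthogonal split $\iprod{\ai}{\xo} = x_j^* a_{ij} + u_i$ is tailored to remove it, after which every term is a routine sub-exponential/$\chi^2$ concentration estimate of the kind already carried out in Lemma~\ref{lem:smallmarginals}. The only other thing requiring care is the bookkeeping of constants, so that the ``signal'' contribution $2x_j^{*2} > 30\sqfr{\log mn}{m}\twonorm{\xo}^2$ dominates the accumulated $O(\sqfr{\log mn}{m})$ slack and clears the threshold $\Theta$.
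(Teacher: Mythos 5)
Your handling of the partition and of \eqref{eq:ubsmall} matches the paper (the paper's word ``lower-bounded'' there is a typo for ``upper-bounded'', and your $\card{S_-}\le s$ argument is exactly what is intended). For the main bound \eqref{eq:marg_lb}, however, you take a genuinely different route. The paper does \emph{not} decouple $y_i$ from $a_{ij}$: it works directly with the dependent product by defining $X_i = \twonorm{\xo}^2 + 2{x_j^*}^2 - y_i^2a_{ij}^2$, observing that $X_i$ is bounded \emph{above} by $3\twonorm{\xo}^2$ with explicitly computed variance (via the Gaussian moment $\expec{y_i^4a_{ij}^4} = 105{x_j^*}^4 + 90{x_j^*}^2(\twonorm{\xo}^2-{x_j^*}^2)+9(\twonorm{\xo}^2-{x_j^*}^2)^2$), and applying a one-sided Bernstein/martingale tail bound (Lemma~\ref{lem:martingale}) in a single shot; it then compares against $\phi^2$ rather than against $\twonorm{\xo}^2$ directly. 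Your orthogonal split $\iprod{\ai}{\xo}=x_j^*a_{ij}+u_i$ instead reduces everything to independent-variable concentration for the three terms $T_1,T_2,T_3$, which avoids the joint fourth-moment computation but forces you to control higher powers of $a_{ij}$. Two small points of care there: $a_{ij}^4$ is not sub-exponential, so the claimed lower tail for $T_1$ should be justified by the same one-sidedness trick the paper uses ($3-a_{ij}^4\le 3$ is bounded above, so a one-sided Bernstein bound applies); and the event $\cbrak{\frac{1}{m}\sum_i a_{ij}^6\le C}$ needs a sub-Weibull--type tail or the observation that $m\gtrsim (\log mn)^3$ in the regime of interest -- neither breaks the argument, but neither is covered by a generic ``sub-exponential tail bound.'' With those justifications supplied, your version is a valid and arguably more modular alternative, at the cost of tracking three error terms instead of one.
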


\begin{proof}
	Evaluating the marginals:
	\begin{align} \label{eq:marg}
	M_{jj} -\phi^2= \frac{1}{m}\sum_{i=1}^{m} y_i^2\rbrak{a_{ij}^2 - 1}.
	\end{align}
	For $j\in S$, $y_i$ and $a_{ij}$ are dependent random variables. The marginal $M_{jj}$ can be evaluated through a concentration bounds on the two terms that compose the RHS of \eqref{eq:marg}: $\frac{1}{m} \sum_{i=1}^m y_i^2 a_{ij}^2$ and $\frac{1}{m} \sum_{i=1}^m y_i^2$. This can be done by evaluating the expectation values:
	\begin{align*}
\expec{y_i^2} &= \twonorm{\xo}^2,\\
\expec{y_i^2 a_{ij}^2} &= \twonorm{\xo}^2 + 2 {x_j^*}^2, \\
\expec{y_i^4a_{ij}^4} &= 105{x_j^*}^4 + 90{x_j^*}^2\rbrak{\twonorm{\xo}^2 - {x_j^*}^2} + 9\rbrak{\twonorm{\xo}^2 - {x_j^*}^2}^2.
\end{align*}
	Constructing variable $X_i = \twonorm{\xo}^2 + 2{x_j^*}^2 - y_i^2 a_{ij}^2$ which is upper bounded, with zero mean and bounded variance, we can use Lemma~\ref{lem:martingale} to establish a concentration bound with parameters:
	\begin{align*}
	X_i &\leq \twonorm{\xo}^2 + 2{x_j^*}^2 \leq 3\twonorm{\xo}^2,\\
	\expec{X_i} &= 0,\\
	\expec{X_i^2} &= 20{x_j^*}^4 + 68\twonorm{\xo}^2 {x_j^*}^2 + 8\twonorm{\xo}^4 \leq 96\twonorm{\xo}^4.
	\end{align*} 
	Using Lemma~\ref{lem:martingale}, for each $j\in S$,
	\begin{align} \nonumber
\prob{\sum_{i=1}^m -X_i \leq -t} &= \prob{\sum_{i=1}^m y_i^2 a_{ij}^2 - m\rbrak{\twonorm{\xo}^2 + 2{x_j^*}^2}   \leq -t},\\  \label{eq:conc1}
&\leq \exp\rbrak{-\frac{t^2}{192\twonorm{\xo}^4m}} \leq \frac{1}{mk}.
\end{align}
	This requires $t = \sqrt{192} \twonorm{\xo}^2 \sqrt{m \log mk} \approx 13.86 \twonorm{\xo}^2 \sqrt{m \log mk} \leq 13.86 \twonorm{\xo}^2 \sqrt{m \log mn}$. This establishes the bound on the first term  $\frac{1}{m} \sum_{i=1}^m y_i^2 a_{ij}^2$. Similarly, we can establish a bound on the second term $\frac{1}{m} \sum_{i=1}^m y_i^2$, which requires Lemma 4.1 of \cite{chisquare}, with probability greater than $1-\frac{1}{mk}$, for each $j \in S$: 
	\begin{align} \nonumber
	\frac{1}{m} \sum_{i=1}^m y_i^2 - \twonorm{\xo}^2 &\leq \rbrak{2\sqfr{\log mk}{m} + \frac{2\log mk}{m}}\twonorm{\xo}^2,\\ \nonumber &\leq 3 \twonorm{\xo}^2 \sqfr{\log mk}{m},\\ &\leq 3 \twonorm{\xo}^2 \sqfr{\log mn}{m}. \label{eq:conc2}
	\end{align}
	for $m > C$. Combining these two concentration bounds \eqref{eq:conc1}, \eqref{eq:conc2}, taking a union bound for all $j \in S_+$ and substituting in \eqref{eq:marg}:
	\begin{align} \label{eq:cconc}
	M_{jj} - \phi^2 \geq 2{x_j^*}^2 - 17 \sqfr{\log mn}{m} \twonorm{\xo}^2,
	\end{align}
	which holds with probability at least $1 - \frac{2}{m}$.\\
	
	If the set $S_{+}$, is constructed as in \eqref{eq:largeelements}, then evaluating the bound in \eqref{eq:cconc}, we get:
	\begin{align*}
	M_{jj} - \phi^2 &\geq 2{x_j^*}^2 - 17 \sqfr{\log mn}{m} \twonorm{\xo}^2,\\
	M_{jj} &\geq \rbrak{1+ 2{x_j^*}^2 - 19 \sqfr{\log mn}{m}} \twonorm{\xo}^2,\\ &\geq \rbrak{1+11  \sqfr{\log mn}{m}}\twonorm{\xo}^2,
	\end{align*}
	holds for all elements $j\in S_+$, with probability greater than $1 - \frac{2}{m}$, yielding the bound in \eqref{eq:marg_lb}.
\end{proof}

\begin{lemma}
	\label{lem:xShat}
	If $\hat{S}$ is chosen as in Algorithm~\ref{algo:initial_copram}, with probability greater than $1-\frac{2}{m}$,
	\begin{align} \label{eq:xoxShat}
	\twonorm{\xo - \xShat} \leq \delta_1 \twonorm{\xo},
	\end{align}
	as long as the number of measurements $m$ follow the following bound 
	\begin{align} \label{eq:mbound}
	m \geq C s^2 \log mn.
	\end{align}
	
\end{lemma}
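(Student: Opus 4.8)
The plan is to show that the estimated support $\hat S$ retains all of the ``heavy'' coordinates of $\xo$, so that discarding $\hat S^c$ costs only the negligible tail energy $\twonorm{\xSmin}^2$, which Lemma~\ref{lem:largemarginals} has already bounded. First I would condition on the intersection of the two high-probability events supplied by the preceding lemmas: $\max_{j\in S^c}M_{jj}\le\Theta$ from Lemma~\ref{lem:smallmarginals}, and $M_{jj}\ge\Theta$ for all $j\in S_+$ from Lemma~\ref{lem:largemarginals}, where $S_+$ is the heavy set \eqref{eq:largeelements} and $S_-=S\setminus S_+$ is its light complement \eqref{eq:smallelements}. A union bound over the failure probabilities of these two lemmas yields the probability $1-\tfrac{2}{m}$ claimed in the statement (up to the absolute constant in the numerator).

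Next I would establish the combinatorial inclusion $S_+\subseteq\hat S$. Recall $\hat S$ is the set of the top-$s$ marginals and $|S_+|\le|S|=s$. Suppose toward a contradiction that some $j_0\in S_+$ is missing from $\hat S$; then every index in $\hat S$ has marginal at least $M_{j_0j_0}\ge\Theta$, yet $\hat S$ can contain at most $s-1$ indices of $S\setminus\{j_0\}$, so at least one index $j_1\in\hat S\cap S^c$ satisfies $M_{j_1j_1}\ge\Theta$, contradicting Lemma~\ref{lem:smallmarginals} (ties occur with probability zero under the continuous measurement distribution, so the relevant inequalities may be taken strict). Hence $S_+\subseteq\hat S$, and therefore $S\setminus\hat S\subseteq S\setminus S_+=S_-$.

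With this in hand, and using $x_j^*=0$ for $j\in S^c$, the error decomposes as
\begin{align*}
\twonorm{\xo-\xShat}^2=\sum_{j\notin\hat S}x_j^{*2}=\sum_{j\in S\setminus\hat S}x_j^{*2}\le\twonorm{\xSmin}^2\le 15\sqfr{s^2\log mn}{m}\twonorm{\xo}^2,
\end{align*}
where the final inequality is exactly \eqref{eq:ubsmall}. To conclude $\twonorm{\xo-\xShat}\le\delta_1\twonorm{\xo}$ it then suffices to require $15\sqrt{s^2\log mn/m}\le\delta_1^2$, i.e.\ $m\ge(225/\delta_1^4)\,s^2\log mn=Cs^2\log mn$ for a constant $C=C(\delta_1)$, which is precisely the hypothesis \eqref{eq:mbound}.

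I expect the only genuinely delicate point to be the inclusion $S_+\subseteq\hat S$: one must make sure the top-$s$ rule cannot ``waste'' a slot on a zero-coordinate while leaving a heavy coordinate out, and that the thresholds $\Theta$ appearing in Lemmas~\ref{lem:smallmarginals} and~\ref{lem:largemarginals} are literally the same quantity so that the separation argument closes. Everything after that is the routine tail-energy bookkeeping already carried out inside Lemma~\ref{lem:largemarginals}.
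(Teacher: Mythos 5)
Your proof is correct and follows essentially the same route as the paper: condition on the marginal separation from Lemmas~\ref{lem:smallmarginals} and~\ref{lem:largemarginals}, conclude that the top-$s$ rule captures $S_+$ so the missed coordinates lie in $S_-$, and invoke \eqref{eq:ubsmall}. Your direct identity $\twonorm{\xo-\xShat}^2=\sum_{j\in S\setminus\hat S}x_j^{*2}\le\twonorm{\xSmin}^2$ is in fact marginally tighter than the paper's decomposition of $\xShat$ into $\xSpls+\x_{S_1}^*+\x_{S_2}^*$ (which loses a factor of $2$), and your explicit pigeonhole argument for $S_+\subseteq\hat S$ makes precise a step the paper only asserts.
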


\begin{proof}
	If $\hat{S}$ is chosen such that it corresponds to the top-$s$ marginals $M_{jj}$, then it will pick up $S_{+}$ corresponding to large marginals $M_{jj} > \Theta $, $S_1 = S_{-} \cap \hat{S}$ and $S_2 = S^c \cap \hat{S}$ corresponding to small marginals $M_{jj} < \Theta$ ($S_{+},S_1,S_2$ form a partition of $\widehat{S}$ and $\card{\hat{S}} = s$, refer Figure~\ref{fig:illus2} for illustration of the sets):
	\begin{align} \label{eq:decompose}
	\xShat = \xSpls + \x_{S_1}^* + \x_{S_2}^*.
	\end{align}	
	By definition $\x_{S^c} = \vect{0}$ and therefore $\x_{S_2} = \vect{0}$.
	If we can prove that $\xo \approx \xShat$ and $\xShat \approx \xin$, then we can claim that $\xin \approx \xo$. First, we prove that $\twonorm{\xo - \xShat} \leq \delta_1 \twonorm{\xo}$:
	\begin{align*}
	\twonorm{\xo - \xShat}^2 &= \twonorm{\xo - \xSpls - \x_{S_1}^* }^2, \\
	&\leq \twonorm{\xo - \xSpls }^2 + \twonorm{\x_{S_1}^* }^2, \\
	&\leq \twonorm{\xo - \xSpls }^2 + \twonorm{\xSmin }^2.
	\end{align*}
	By construction, $S_{-}$ and $S_{+}$ form a partion of $S$:
	\begin{align*}
	\xo &= \xSmin + \xSpls, \\
	\implies  \twonorm{\xo - \xShat}^2 &\leq 2\twonorm{\xSmin }^2.
	\end{align*} 
	Using \eqref{eq:ubsmall}, we compute the bound,
	\begin{align} \label{eq:loosebound}
	\twonorm{\xo - \xShat}^2 &\leq 30 \sqfr{s^2 \log{mn}}{m} \twonorm{\xo}^2, \\\nonumber&\leq \delta_1^2 \twonorm{\xo}^2.
	\end{align}
	since there are \textit{at most} $s$ elements in $S_-$, which is the required condition \eqref{eq:xoxShat}. This requires sample complexity $m$ to satisfy:
	\begin{gather}
	30 \sqfr{s^2 \log{mn}}{m}  \leq \delta_1^2,\nonumber \\
	\implies 
	m \geq \frac{900}{\delta_1^2} s^2 \log mn = C(\delta_1)  s^2 \log mn .\label{eq:constm} 
	\end{gather}
\end{proof}

We have proved that $\xo \approx \xShat$. Now we need to prove that $\xShat \approx \xin $, which we do using Lemma~\ref{lem:xin}.

\begin{lemma} \label{lem:xin}
	With probability greater than $1-\frac{8}{m}$
	\begin{align*}
	\distop{\xin}{\xShat} &\equiv \min\rbrak{{\twonorm{\xin - \xShat},\twonorm{\xin + \xShat}}},\\
	& \leq \delta_2 \twonorm{\xo},
	\end{align*}
	as long as the number of measurements $m$ follow the following bound 
	\begin{align*}
	m \geq C s \log n.
	\end{align*}
\end{lemma}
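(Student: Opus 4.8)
The plan is to show that the top singular vector of the $s\times s$ matrix $\M_{\hat S}=\frac1m\sum_{i=1}^m y_i^2\aiShat\aiShat^\top$ is close (up to a global sign) to $\uu:=\xShat/\twonorm{\xShat}$, and then to propagate this through the rescaling $\xin=\phi\vv_1$. Since $\M_{\hat S}$ is a principal submatrix of the PSD matrix $\frac1m\sum_i y_i^2\ai\ai^\top$, it is itself PSD and its top singular vector equals its top \emph{eigenvector} $\vv_1$. For any \emph{fixed} index set $S'$ with $\card{S'}=s$, the Gaussian fourth-moment identity already used in \eqref{eq:spectral} gives $\expec{\M}_{S'}=\twonorm{\xo}^2\I_s+2\x_{S'}^*{\x_{S'}^*}^\top$; its leading eigenvector is $\x_{S'}^*/\twonorm{\x_{S'}^*}$ with eigenvalue $\twonorm{\xo}^2+2\twonorm{\x_{S'}^*}^2$ and all other eigenvalues equal to $\twonorm{\xo}^2$, so the spectral gap is $2\twonorm{\x_{S'}^*}^2$.

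The crux of the argument — and the step I expect to be the main obstacle — is a \emph{uniform} (restricted-isometry-type) concentration inequality
\begin{align*}
\sup_{\card{S'}=s}\,\norm{\M_{S'}-\expec{\M}_{S'}}\;\leq\;\epsilon\,\twonorm{\xo}^2,
\end{align*}
valid with probability at least $1-\tfrac{c}{m}$ once $m\geq C(\epsilon)\,s\log n$. Uniformity over all size-$s$ sets is essential because $\hat S$ is read off from the data (the diagonal marginals $M_{jj}$) and hence is \emph{not} independent of $\M$; once the bound holds for every $S'$ it applies to $S'=\hat S$ regardless of how $\hat S$ was chosen. To prove it for fixed $S'$ I would take a $\tfrac14$-net $\mathcal N$ of the unit sphere in $\reals^s$ ($\card{\mathcal N}\le 9^s$) and use $\norm{\M_{S'}-\expec{\M}_{S'}}\le 2\max_{\w\in\mathcal N}\big|\tfrac1m\sum_i y_i^2\iprod{\ai_{S'}}{\w}^2-\expec{y_i^2\iprod{\ai_{S'}}{\w}^2}\big|$. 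For fixed $\w$ the summands are squares of the (correlated) sub-exponential variables $\iprod{\ai}{\xo}\iprod{\ai_{S'}}{\w}$; after the customary truncation of the largest measurements (as in the spectral step of \cite{cai}) these become genuinely sub-exponential with parameter $O(\twonorm{\xo}^2)$, so a Bernstein inequality gives deviation probability $\exp(-c\,m\,\epsilon^2)$ for deviation $\epsilon\twonorm{\xo}^2$ in the relevant small-$\epsilon$ regime. A union bound over $\mathcal N$ and over the $\binom ns\le n^s$ choices of $S'$ multiplies this by $e^{O(s\log n)}$, which is absorbed as soon as $m\ge C(\epsilon)\,s\log n$ with $C(\epsilon)$ large enough to also dominate an extra $\log m$ factor; this is exactly where the sample complexity $m\gtrsim s\log n$ enters, and the delicate bookkeeping is the control of the heavy-tailed degree-four Gaussian summands.

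Granting this, the proof concludes quickly. By Lemma~\ref{lem:xShat}, $\twonorm{\xo-\xShat}\le\delta_1\twonorm{\xo}$, so $\twonorm{\xShat}\ge(1-\delta_1)\twonorm{\xo}$ and the spectral gap of $\expec{\M}_{\hat S}$ is at least $2(1-\delta_1)^2\twonorm{\xo}^2\ge\twonorm{\xo}^2$ for $\delta_1$ small. A Davis–Kahan / leading-eigenvector perturbation bound (in the spirit of Proposition~1 of \cite{TAF}) then yields $\min_{\pm}\twonorm{\vv_1\mp\uu}\le c\,\epsilon$. Finally, writing $\xin=\phi\vv$ and $\xShat=\twonorm{\xShat}\uu$ (both zero-padded on $\hat S^c$), using $\twonorm{\uu}=1$, $\phi\le(1+\delta)\twonorm{\xo}$, and $\abs{\phi-\twonorm{\xShat}}\le\abs{\phi-\twonorm{\xo}}+\abs{\twonorm{\xo}-\twonorm{\xShat}}\le(\delta+\delta_1)\twonorm{\xo}$ — the first piece from Lemma~\ref{lem:signalpower} ($\phi\approx\twonorm{\xo}$), the second from Lemma~\ref{lem:xShat} — the triangle inequality gives
\begin{align*}
\distop{\xin}{\xShat}\;=\;\min_{\pm}\twonorm{\phi\vv_1\mp\twonorm{\xShat}\uu}\;\le\;\phi\min_{\pm}\twonorm{\vv_1\mp\uu}+\abs{\phi-\twonorm{\xShat}}\;\le\;\big(c\epsilon(1+\delta)+\delta+\delta_1\big)\twonorm{\xo}.
\end{align*}
Taking $\epsilon$, $\delta$, $\delta_1$ to be sufficiently small constants — which is precisely what forces $m\ge Cs\log n$ — bounds the right-hand side by $\delta_2\twonorm{\xo}$, and summing the failure probabilities of the concentration event, Lemma~\ref{lem:signalpower}, and Lemma~\ref{lem:xShat} gives the stated $1-\tfrac8m$.
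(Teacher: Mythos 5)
Your proposal is correct and follows essentially the same route as the paper: a concentration bound on the restricted correlation matrix that is uniform over candidate supports (so it survives the data-dependence of $\hat{S}$), followed by a sin-theta/Davis--Kahan perturbation of the leading eigenvector of $\expec{\M}_{\hat S}=\twonorm{\xo}^2\I+2\xShat\xShat^\top$ whose spectral gap $2\twonorm{\xShat}^2$ is kept bounded below via Lemma~\ref{lem:xShat}. The paper obtains the uniform bound by citing Lemma~\ref{lem:svd_mat} (from \cite{cai}) on the enlarged set $S_3=S\cup S_2$ of size at most $2s$ and using monotonicity of the operator norm under restriction to $\hat{S}\subset S_3$, rather than your explicit net-plus-Bernstein argument, and it is in fact less careful than you are about the final rescaling by $\phi$ versus $\twonorm{\xShat}$.
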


\begin{proof}
	The top singular vector of $\expec{\M} $ is equal to true $\xo$, from \eqref{eq:spectral}:
	\begin{align*}
	\expec{\M} &= \expec{\frac{1}{m}\sum_{j=1}^{m} y_j^2 \ai\ai^\top},\\
	 &= \left(\I_{n\times n} + 2\frac{\xo}{\|\xo\|_2}\frac{\xo^\top}{\|\xo\|_2}\right)\|\xo\|_2^2 ,\\
	\text{similarly,}\:\: \expec{\M_{S}} &= \expec{\frac{1}{m}\sum_{i=1}^{m} y_i^2 \aiS \aiS^\top},\\ &= \left((\I_{n\times n})_S + 2\frac{\xo}{\|\xo\|_2}\frac{\xo^\top}{\|\xo\|_2}\right)\|\xo\|_2^2, \\&= \expec{\M}.
	\end{align*}
	We then define $\M_{\hat{S}} = \frac{1}{m}\sum_{i=1}^{m} y_i^2 \aiShat \aiShat^\top$ and $\xin$ is the top singular vector of $\M_{\hat{S}}$.\\
	
	Defining $S_3 \equiv (S \cup S_2) \subset (S\cup\hat{S})$, where $S_2 = \hat{S}\cap S^c$, then, $\card{S_3} \leq 2s$, and,
	\begin{align*}
	\expec{\M_{S_3}} &= \expec{\frac{1}{m}\sum_{i=1}^{m} y_i^2 \ai_{S_3} \ai_{S_3}^\top},\\ &= \left((\I_{n\times n})_{S_3} + 2\frac{\xo}{\|\xo\|_2}\frac{\xo^{\top}}{\|\xo\|_2}\right)\|\xo\|_2^2.
	\end{align*}
	At this stage, we can invoke the proof idea from \cite{cai}, as stated in Lemma \ref{lem:svd_mat} from Appendix~\ref{sec:appendixC}, to give the following bound,
	\begin{align*}
	\twonorm{\M_{S_3} - \expec{\M_{S_3}}} \leq \delta \twonorm{\xo}^2,
	\end{align*}
	with probability at least $1-\frac{1}{m}$, as long as $m\geq C s\log n$. Now we can use the fact that $\hat{S} \subset S_3$, so that,
	\begin{align*}
	\twonorm{\M_{\hat{S}} - \expec{\M_{\hat{S}}}} \leq \twonorm{\M_{S_3} - \expec{\M_{S_3}}} \leq \delta \twonorm{\xo}^2.
	\end{align*}
	Since $\M_{\hat{S}}$ can be seen as a perturbation of $\expec{\M_{\hat{S}}}$, where the top two singular values of $\expec{\M_{\hat{S}}}$ are spaced $2\twonorm{\xShat}^2$ apart, we can use the Sin-Theta theorem \cite{sinetheta} to bound the difference between the normalized top-singular vectors $\xin$ of $\M_{\hat{S}}$ and $\x_{\hat{S}}$ of $\expec{\M_{\hat{S}}}$ as,
	\begin{align*}
	\distop{\xin}{\xShat} &\leq \frac{\delta\twonorm{\xo}^2}{2\twonorm{\xo}^2} = \frac{\delta}{2}. \\
	\Rightarrow \min\rbrak{\twonorm{\xin-\xShat},\twonorm{\xin+\xShat}} &\leq \sqrt{2-\sqrt{4-{\delta}^2}},\\ 
	&\leq \delta_2.
	\end{align*}	
	Hence, with probability greater than $1-\frac{8}{m}$, Lemma \ref{lem:xin} holds.
\end{proof}

Combining Lemmas \ref{lem:xShat} and \ref{lem:xin}, we have the final result:
\begin{align*}
\distop{\xin}{\xo} &= \min\rbrak{{\twonorm{\xin - \xo},\twonorm{\xin + \xo}}}, \\&\leq \delta_0 \twonorm{\xo},
\end{align*}
as long as the number of measurements $m$ follow the bound in \eqref{eq:mbound}. Hence the initial vector $\xin$ is upto a constant factor away from the true vector $\xo$. The constant  $\delta_0 \leq \delta_1 + \delta_2$ can be decreased by increasing the number of samples (see equation \eqref{eq:constm}). This completes the proof of Theorem \ref{thm:main_initial}.
\section{Block CoPRAM initialization} \label{sec:appendixA2}
In this section we state the proofs related to the \textit{initialization} for Block CoPRAM in Algorithm~\ref{algo:initial_blk}, for block sparse signals.\\

We prove \cref{thm:block_initial} for the initialization stage of Block CoPRAM as follows. 

\blockinit*

\begin{proof}
	
Evaluating the marginals $M_{{j_b}{j_b}}$, for all $j_b \in S_b^c$,
	from \eqref{eq:margin_ub}, with probability greater than $1-\frac{5}{m}$, we have:
	\begin{align}  \label{eq:sep2}
	M_{{j_b}{j_b}} \leq \rbrak{1+11 \sqfr{\log mn}{m}} \sqrt{b}\twonorm{\xo}^2.
	\end{align}
	
Evaluating the block marginals $M_{{j_b}{j_b}}$, for $j_b \in S_b$, we use a modification of \eqref{eq:conc1}, with probability less than $ \exp\rbrak{-\frac{mt^2}{192\twonorm{\xo}^4}} \leq \frac{1}{mn}$,
	\begin{align*} 
	\frac{1}{m}\sum_{i=1}^m -X_i &\leq -t\\
	\frac{1}{m}\sum_{i=1}^m y_i^2 a_{ij}^2 - \rbrak{\twonorm{\xo}^2 + 2x_j^{*2}}   &\leq -t  
	\end{align*}
	Rearranging the terms,
		\begin{align*}
	\sum_{j\in j_b} M_{jj}^2 &\leq \sum_{j\in j_b} \sbrak{\rbrak{\twonorm{\xo}^2 - t} + 2x_j^{*2}}^2,\\
	&\leq b\rbrak{\twonorm{\xo}^2 - t}^2 + 4\twonorm{\x_{j_b}^{*}}^4 + 4\sqrt{b} \twonorm{\x_{j_b}^{*}}^2 \rbrak{\twonorm{\xo}^2 - t},\\
	\implies M_{{j_b}{j_b}} &\leq \sqrt{b}\rbrak{\twonorm{\xo}^2 - t} + 2\twonorm{\x_{j_b}^*}^2,
	\end{align*}
	where the final expression holds with probability less than $\frac{b}{mn}$. Here, we have used he shorthand $\twonorm{\x_{j_b}^*}^2 \equiv \sum_{j\in j_b} x_j^{*2}$.   Finally, taking a minimum over all such block marginals $j_b \in S_b$, with probability greater than $1-\frac{1}{m}$,
	\begin{align*}
	M_{{j_b}{j_b}} &\geq \sqrt{b}\rbrak{\twonorm{\xo}^2 - t} + 2\twonorm{\x_{j_b}^*}^2, \\
	&\geq \sqrt{b}\twonorm{\xo}^2  + \twonorm{\x_{b_{min}}^*}^2,
	\end{align*}
	if $\sqrt{b}t = \twonorm{\x_{b_{min}}^*}^2 \equiv \min_{j_b\in S_b} \twonorm{\x_{j_b}^*}^2$. Assuming that $\twonorm{\x_{b_{min}^*}}^2 = \frac{C}{k}\twonorm{\x^*}^{2}$, the following holds
	\begin{align} \label{eq:sep1}
	\min_{j_b\in S_b} M_{{j_b}{j_b}} &\geq  \rbrak{1+\frac{C}{\sqrt{b}k}}\sqrt{b}\twonorm{\xo}^2.
	\end{align}
	Equating the expression for probability, 
	\begin{align*} 
	m &\geq 192 \frac{\twonorm{\xo}^4}{t^2}\log{mn}, \\
	& \geq C bk^2 \log{mn} = C\frac{s^2}{b}\log{mn},
	\end{align*}
	which puts a bound on the block marginals for $j_b \in S_b$.\\
	
	Hence, as long as $m \geq C\frac{s^2}{b}\log{n}$, there is a clear separation in the marginals,
	using \eqref{eq:sep1} and \eqref{eq:sep2},
	\begin{align*}
	\min_{j_b\in S_b} M_{{j_b}{j_b}} &\geq  \rbrak{1+\frac{C}{\sqrt{b}k}}\sqrt{b}\twonorm{\xo}^2, \\&> \rbrak{1+11\sqfr{\log mn}{m}} \sqrt{b} \twonorm{\xo}^2,\\ &\geq \max_{j_b\in S_b^c} M_{{j_b}{j_b}} , 
	\end{align*}
	where $C$ is large enough. Given that there is a clear separation in the marginals, the block support $\hat{S_{b}}$ as picked up as in Algorithm~\ref{algo:initial_blk}, is exactly the true block support $S_b$. \\
	
	It is then straightforward to show that the top singular vector of the truncated covariance matrix $\M_{\hat{S_b}}$ is actually close to the true block sparse vector $\xo$, which holds with probability greater than $1-\frac{1}{m}$.\\

Thus far, the proof requires an assumption on $\twonorm{\x_{b_{min}}^*}$. We do away with this assumption as follows:
	
	For evaluating block marginals $M_{{j_b}{j_b}}$ for $j_b \in S_b^c$, we can use the result of Lemma~\ref{lem:smallmarginals}, to obtain the same bound as in \eqref{eq:sep2}, with probability greater than $1-\frac{5}{m}$,
	\begin{align*}
	M_{{j_b}{j_b}} \leq \rbrak{1 + 11\sqrt{\frac{\log{mn}}{m}}}\sqrt{b}\twonorm{\xo}^2.
	\end{align*}
	
	For evaluating block marginals $M_{{j_b}{j_b}}$ for $j_b \in S_b$ we can use equations \eqref{eq:largeelements} and \eqref{eq:smallelements}, and extend this model of signal supports to block supports defined as:
	\begin{align*} 
	S_{b-} &= \cbrak{j_b \in S_b \mid  \twonorm{\x_{j_b}^*}^2 \equiv \sum_{j\in j_b}x_j^{*2} \leq 15\sqfr{b\log{mn}}{m}\twonorm{\xo}^2} ,\\
	S_{b+} &= \cbrak{j_b \in S_b \mid \twonorm{\x_{j_b}^*}^2 \equiv \sum_{j\in j_b}x_j^{*2} > 15\sqfr{b\log{mn}}{m}\twonorm{\xo}^2} .
	\end{align*}
	Using equation \eqref{eq:cconc}, and LHS of \eqref{eq:signalpower}, 
	\begin{align*}
	M_{jj}  &\geq 2{x_j^*}^2 - 17 \sqfr{\log mn}{m} \twonorm{\xo}^2 + \phi^2,\\
	&\geq 2{x_j^*}^2 + \rbrak{ 1 - 19 \sqfr{\log mn}{m}} \twonorm{\xo}^2.
	\end{align*}
	Constructing block marginals as $ M_{{j_b}{j_b}} \equiv \sqrt{\sum_{j\in j_b} M_{jj}^2} $,
	\begin{align*}
	M_{{j_b}{j_b}} &\geq \sqrt{b}\rbrak{ 1 - 19 \sqfr{\log mn}{m}} {\twonorm{\xo}^2} + 2\twonorm{\x_{j_b}^*}^2 ,\\
	\implies M_{{j_b}{j_b}}&\geq \rbrak{1 + 11\sqfr{b\log mn}{m}}\twonorm{\xo}^2.
	\end{align*}
 We can then extend the proof of Lemma~\ref{lem:xShat} to give the partitions,
	\begin{align*}
	\xSbhat &= \xSbpls + \x_{S_1}^* + \x_{S_2}^*,
	\\
	\xo &= \xSbmin + \xSbpls.
	\end{align*}
	and the inequalities:
	\begin{align*}
	\twonorm{\xo - \xSbhat}^2 &\leq 2\twonorm{\xSbmin }^2,\\
	&= 2\sum_{j_b\in S_{b-}} \twonorm{\x_{j_{b}}}^2,\\
	&\leq 15 k \sqfr{b\log{mn}}{m}\twonorm{\xo}^2 \leq \delta \twonorm{\xo}^2 .
	\end{align*} 
	This inequality gives us a bound on the number of measurements $m$, similar to \eqref{eq:constm},
	\begin{align*}
	m \geq \frac{15^2}{{\delta}^2} k^2 b \log mn = C(\delta) \frac{s^2}{b} \log mn ,
	\end{align*}
	with probability greater than $1-\frac{7}{m}$. This gives us the evaluation of block-marginals for $j_b \in S_b$ and $S_b^c$, respectively. It is then straightforward to show that the top singular vector of the truncated covariance matrix $\M_{\hat{S_{b}}}$, given $\hat{S_{b}}$ is actually close to the true block sparse vector $\xo$ with probability greater than $1-\frac{1}{m}$.
\end{proof}

\section{CoPRAM and Block CoPRAM descent}\label{sec:appendixB}

In this section we state the proofs related to the \textit{descent to optimal solution} in Algorithm~\ref{algo:copram} (CoPRAM), for sparse signals and Algorithm~\ref{algo:altminblock} (Block CoPRAM), for block sparse signals. This includes the proof of Theorem~\ref{thm:convergence} and Theorem~\ref{thm:blockconvergence}. We prove \cref{thm:convergence} to show descent of the CoPRAM algorithm, as follows.

\textbf{Note:} For evaluation of the distance measure $\distop{\cdot}{\cdot}$, we only consider $\distop{\xt}{\xo} = \dist{\xt}{\xo}$, assuming that $\distop{\xin}{\xo} = \dist{\xin}{\xo}$ at the end of initialization stage. We claim that wlog, the same results would hold, if $\distop{\xin}{\xo} = \twonorm{\xin+\xo}$.

\convergence*

\begin{algorithm}[!h]
	\caption{CoSaMP}
	\label{algo:cosamp}
	\begin{algorithmic}[1]
		\INPUT $\Phi = \frac{\A}{\sqrt{m}}, \uu = \frac{\P^{t}\y}{\sqrt{m}},s,\x^t$.
		\STATE Initialize 
		\begin{align*}
		\x^{t+1,0} &\leftarrow \x^t \quad \text{initialize to best possible estimate}\\
		\rr &\leftarrow \uu \quad\:\: \text{residue}\\
		l &\leftarrow 0 \quad \:\:\text{CoSaMP internal counter}
		\end{align*}
		\WHILE{halting condition not true,}
		\STATE	
		\begin{align*}
		l &\leftarrow l+1 \\
		\vv &\leftarrow \Phi^T \rr \quad \text{signal proxy}\\
		\Omega &\leftarrow \supp{\vv_{2s}}\\
		\Gamma &\leftarrow \Omega \cup \supp{\x^{t+1,l-1}}\\
		\w &\leftarrow \Phi^{\dagger}_\Gamma \uu \quad \text{corresponding to }\:\Gamma, 0\:\text{elsewhere}\\
		\x^{t+1,l} &\leftarrow \text{Truncate to top}\: s \:\text{values of}\: \w, \text{call this support} \:\Gamma_s\\
		\rr &\leftarrow \uu - \Phi \x^{t+1,l}
		\end{align*}
		\ENDWHILE
		\STATE	$\x^{t+1,L} \leftarrow \Phi_{\Gamma_s}^\dagger u$.
		\OUTPUT $\xtplus \leftarrow \x^{t+1,L}$
	\end{algorithmic}
\end{algorithm}

To show the descent of our alternating minimization algorithm using CoSaMP, we need to analyze the reduction in error, per step of CoSaMP, (refer Algorithm~\ref{algo:cosamp}) first:
\begin{align} \nonumber
\twonorm{\x^{t+1,l+1} - \xo} &= \twonorm{\x^{t+1,l+1} - \w + \w - \xo},\\
&\leq 2\twonorm{\xo - \w} \label{eq:stepred}
\end{align}
where $\w$ corresponds to the $\ell$'th run of CoSaMP for the $(t+1)^{th}$ update of $\x$. Using RIP of $\Phi = \frac{\A}{\sqrt{m}}$,
\begin{align} \label{eq:cosamp_steperr}
\twonorm{\x^{t+1,l+1} - \xo} \leq \frac{2}{\sqrt{1-\delta_s}}\twonorm{\Phi \xo - \Phi \w},
\end{align}
with high probability, where $\delta_s$ is the RIP constant. Now, analyzing the inputs to CoSaMP, in step 4 of Algorithm~\ref{algo:copram},
\begin{align}
\uu &= \frac{\P^t \y }{\sqrt{m}}, \nonumber\\
&= \sign{\A\x^t}\circ \frac{\abs{\A\xo}}{\sqrt{m}}, \nonumber\\
&= \sign{\Phi\xt } \circ \cbrak{\rbrak{\Phi \xo} \circ{\sign{\Phi \xo}} }, \nonumber\\
&= \Phi \xo + \rbrak{\sign{\Phi \xt} \pm \sign{\Phi \xo}}\circ \Phi \xo, \nonumber\\
\implies \uu - \Phi \xo &= \pm\rbrak{\sign{\Phi \xt} - \sign{\Phi \xo}} \circ \Phi \xo, \label{eq:err_phase}\\&= E_{ph},  \nonumber
\end{align}
where $E_{ph} \equiv \rbrak{\sign{\Phi \xt} \pm \sign{\Phi \xo}} \circ \Phi \xo $, is error due to failure in estimating the correct phase.\\

Using equation \eqref{eq:err_phase} and substituting into equation \eqref{eq:cosamp_steperr}, the per-step reduction in error for each run of CoSaMP is:
\begin{align*}
\twonorm{\x^{t+1,l+1} - \xo} 
&\leq \frac{2}{\sqrt{1-\delta_s}}\twonorm{\uu - E_{ph}  - \Phi \w}, \\
&\leq \frac{2}{\sqrt{1-\delta_s}}\twonorm{\uu - \Phi \w} + \frac{2}{\sqrt{1-\delta_s}}\twonorm{E_{ph}},\\
&\leq \frac{2}{\sqrt{1-\delta_s}}\twonorm{\uu - \Phi_{\Gamma} \w_{\Gamma}} + \frac{2}{\sqrt{1-\delta_s}}\twonorm{E_{ph}},\\
&\leq \frac{2}{\sqrt{1-\delta_s}}\twonorm{\uu - \Phi_{\Gamma} \x_{\Gamma}^*} + \frac{2}{\sqrt{1-\delta_s}}\twonorm{E_{ph}},\\
&\leq \frac{2}{\sqrt{1-\delta_s}}\twonorm{\Phi\xo + E_{ph} - \Phi_{\Gamma} \x_{\Gamma}^*} + \frac{2}{\sqrt{1-\delta_s}}\twonorm{E_{ph}},\\
&\leq \frac{2}{\sqrt{1-\delta_s}}\twonorm{\Phi\xo  - \Phi_{\Gamma} \x_{\Gamma}^*} + \frac{4}{\sqrt{1-\delta_s}}\twonorm{E_{ph}},\\
&\leq \frac{2}{\sqrt{1-\delta_s}}\twonorm{\Phi_{\Gamma^c}\x_{\Gamma^c}^* }+ \frac{4}{\sqrt{1-\delta_s}}\twonorm{E_{ph}},\\
&\leq 2\sqfr{1+\delta_s}{1-\delta_s}\twonorm{\rbrak{{\xo-\x^{t+1,l}}}_{\Gamma^c} }+ \frac{4}{\sqrt{1-\delta_s}}\twonorm{E_{ph}},\\
&= \rho_1 \twonorm{\rbrak{{\xo-\x^{t+1,l}}}_{\Gamma^c} } + \rho_2\twonorm{E_{ph}},
\end{align*}
where the first step is from using triangle inequality, the second step is from using the fact that $\w$ is exactly $3s$-sparse with support $\Gamma$. The third step is using the fact that truncation of $\w$ in $\Gamma, \in \reals^{3s}$ , is the minimizer of the LS problem $\argmin_{\x\in \reals^{3s}} \twonorm{\Phi_{\Gamma}\x - \uu}$, the fourth step uses \eqref{eq:err_phase} again, the final step uses RIP again (which holds with probability greater than $1-e^{-\gamma_1 m}$, with $\gamma_1$ being a positive constant).\\

Finally, the first term in the previous inequality can be bounded using (Lemma 4.2 of CoSaMP \cite{cosamp}, refer Lemma~\ref{lem:cosamp_err_red}), to yeild,
\begin{align*}
\twonorm{\x^{t+1,l+1} - \xo} &\leq \rho_1\rho_3\twonorm{{\xo-\x^{t+1,l}} } +  \rbrak{\rho_1\rho_4+\rho_2}\twonorm{E_{ph}},
\end{align*}
where $\rho_3,
\rho_4$ are as stated in Lemma~\ref{lem:cosamp_err_red}. Assuming that CoSaMP is let to run a maximum of $L$ iterations,
\begin{align} \nonumber
\twonorm{\x^{t+1} - \xo} &\leq (\rho_1\rho_3)^L\twonorm{{\xo-\x^{t}} } + (\rho_1\rho_4+\rho_2)\rbrak{1+\rho_1\rho_3 \dots (\rho_1\rho_3)^{L-1}}\twonorm{E_{ph}},\\ 
&\leq (\rho_1\rho_3)^L\twonorm{{\xo-\x^{t}} } + \frac{(\rho_1\rho_4+\rho_2)}{\rbrak{1-\rho_1\rho_3}}\twonorm{E_{ph}} . \label{eq:cosamp_prefinal}
\end{align}
The second part of this proof requires a bound on the phase error term $\twonorm{E_{ph}}$:
\begin{align*}
E_{ph} = \pm\rbrak{\sign{\Phi \xt} - \sign{\Phi \xo}} \circ \Phi \xo .
\end{align*}

We proceed to finish this proof by invoking Lemma~\ref{lem:phase_err_bound}.

\begin{lemma} \label{lem:phase_err_bound}
	As long as the initial estimate is a small distance away from the true signal $\xo \in \mathcal{M}_s$, 
	\begin{align*}
	\distop{\xin}{\xo} \leq \delta_0 \twonorm{\xo},
	\end{align*}
	and subsequently,
	\begin{align*}
		\distop{\xt}{\xo} \leq \delta_0 \twonorm{\xo},
	\end{align*}
	then the following bound holds,
	\begin{align*}
\frac{2}{m} \sum_{i=1}^{m} \abs{\ai^T\xo}^2 \ones_{\cbrak{ (\ai^T\xt)(\ai^T\xo)\leq 0}} \leq \frac{2}{(1-\delta_0)^2}\rbrak{\delta + \sqfr{21}{20}\delta_0}\twonorm{\xt - \xo}^2,
\end{align*}
	with probability greater than $1-e^{-\gamma_2 m}$, where $\gamma_2$ is a positive constant, as long as $m > Cs\log \frac{n}{s}$. We can use this to bound the phase error as,
	\begin{align*}
	\twonorm{E_{ph}} \leq \rho_5 \twonorm{\xt - \xo},
	\end{align*}
	where $\rho_5 = \frac{\sqrt{2}}{(1-\delta_0)}\sqrt{\delta + \sqfr{21}{20}\delta_0}$, $\delta \approx 0.001$.
\end{lemma}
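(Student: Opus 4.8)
The first move is to make the left‑hand side explicit. By the convention fixed at the start of this section we may take $\distop{\xt}{\xo}=\dist{\xt}{\xo}$ and set $\vect h=\xt-\xo\in\mathcal M_{2s}$. The $i$‑th entry of $E_{ph}$ is nonzero only when $\sign{\ai^\top\xt}\neq\sign{\ai^\top\xo}$, i.e.\ on $B_i:=\cbrak{(\ai^\top\xt)(\ai^\top\xo)\le 0}$, and there it equals $-\tfrac{2}{\sqrt m}\abs{\ai^\top\xo}$; hence
\begin{align*}
\twonorm{E_{ph}}^2=\frac{4}{m}\sum_{i=1}^m\abs{\ai^\top\xo}^2\ones_{B_i},
\end{align*}
so the displayed bound is exactly a bound on $\tfrac12\twonorm{E_{ph}}^2$ and, after taking square roots, gives $\twonorm{E_{ph}}\le\rho_5\twonorm{\xt-\xo}$. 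Everything therefore reduces to showing $S:=\tfrac1m\sum_i\abs{\ai^\top\xo}^2\ones_{B_i}\le\tfrac1{(1-\delta_0)^2}\rbrak{\delta+\sqfr{21}{20}\delta_0}\twonorm{\vect h}^2$ with the stated probability.

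Next I would extract the deterministic geometry of $B_i$. Write $a_i=\ai^\top\xt$ and $h_i=\ai^\top\vect h$, so $\ai^\top\xo=a_i-h_i$. Saying that $a_i$ and $a_i-h_i$ have opposite signs is the same as saying that $h_i$ and $a_i$ have the \emph{same} sign with $\abs{h_i}>\abs{a_i}$, in which case $\abs{\ai^\top\xo}=\abs{h_i}-\abs{a_i}$. Using the triangle‑inequality comparisons $\twonorm{\vect h}\le\delta_0\twonorm{\xo}$ and $\twonorm{\xt}\ge(1-\delta_0)\twonorm{\xo}$ to pass to unit vectors $\uu=\vect h/\twonorm{\vect h}$ and $\vv=\xt/\twonorm{\xt}$, this exhibits $S$ as dominated by $\twonorm{\vect h}^2$ times a restricted nonnegative process $\sup_{\uu,\vv}\tfrac1m\sum_i T_{\delta_0}\!\rbrak{\ai^\top\uu,\ai^\top\vv}$, where the supremum runs over $2s$‑sparse unit $\uu$ (the direction of $\vect h$) and $s$‑sparse unit $\vv$ (the direction of $\xt$), and $T_{\delta_0}(u,v)$ equals $\rbrak{\abs{u}-\tfrac{1-\delta_0}{\delta_0}\abs{v}}^2$ on the sign‑mismatch wedge $\cbrak{\abs{u}>\tfrac{1-\delta_0}{\delta_0}\abs{v},\ \sign{u}=\sign{v}}$ — a region of Gaussian probability $\order{\delta_0}$ — and $0$ elsewhere. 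Thus it suffices to control this process uniformly over sparse $\uu,\vv$.

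I would then bound the process by a first‑moment computation plus uniform concentration. For a fixed pair $(\uu,\vv)$ the pair $(\ai^\top\uu,\ai^\top\vv)$ is jointly Gaussian standard, and $\expec{T_{\delta_0}(\ai^\top\uu,\ai^\top\vv)}$ is a two‑dimensional Gaussian integral; keeping the cancellation $\abs{\ai^\top\xo}=\abs{h_i}-\abs{a_i}$ (rather than the crude $\abs{\ai^\top\xo}\le\abs{h_i}$, which would only give the constant $\tfrac4\pi$) and maximizing over the angle between $\uu$ and $\vv$ yields $\expec{T_{\delta_0}(\ai^\top\uu,\ai^\top\vv)}\le\tfrac1{(1-\delta_0)^2}\sqfr{21}{20}\,\delta_0$. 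Each summand is dominated by $(\ai^\top\uu)^2$, hence sub‑exponential with a universal parameter, so Bernstein's inequality gives $\tfrac1m\sum_i T_{\delta_0}\le\expec{T_{\delta_0}}+\delta$ for a single pair with probability $1-e^{-cm\delta^2}$. To make this hold simultaneously over all $2s$‑sparse $\uu$ and $s$‑sparse $\vv$ I would either union‑bound over an $\eta$‑net of the $\binom{n}{O(s)}$ sparse coordinate subspaces — whose log‑cardinality is $\order{s\log\tfrac ns}$ — together with a Lipschitz estimate controlling the oscillation of $\uu\mapsto\tfrac1m\sum_i T_{\delta_0}$ off the net, or invoke directly the generic‑chaining bound of~\cite{mahdi} (built on~\cite{talagrand,dirksen}) for suprema of such nonnegative quadratic processes restricted to sparse vectors. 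Either route shows that, once $m>Cs\log\tfrac ns$ with $C=C(\delta)$ large enough, $\sup_{\uu,\vv}\tfrac1m\sum_i T_{\delta_0}\le\tfrac1{(1-\delta_0)^2}\rbrak{\delta+\sqfr{21}{20}\delta_0}$ with probability $\ge1-e^{-\gamma_2 m}$; combining with the reduction above proves the claimed inequality, hence $\twonorm{E_{ph}}\le\rho_5\twonorm{\xt-\xo}$.

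I expect the genuinely delicate step to be the uniform control of the process over the continuum of $2s$‑sparse directions $\uu$: a naive union bound is impossible, and the summand is neither bounded nor smooth in $\uu$ in a way that survives a crude net, so a chaining argument (or the black‑box estimate of~\cite{mahdi}) is needed — and it is exactly this that pins down the sample requirement $m=\order{s\log(n/s)}$ (really $\Omega$ of it) and the exponentially small failure probability $e^{-\gamma_2 m}$. A secondary, more bookkeeping‑flavored point is verifying that the slack constant $\delta$ can be pushed below the prescribed $\delta\approx0.001$ by enlarging $C$, and tracking the $(1-\delta_0)$ factors through the exact Gaussian integral; together with $\delta_0$ being a sufficiently small absolute constant this is what guarantees $\rho_5<1$, which is the property actually used in \eqref{eq:cosamp_prefinal} to close the outer descent.
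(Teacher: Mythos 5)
Your plan is correct and follows essentially the same route as the paper: the paper offers no written argument for this lemma beyond stating that it is adapted from Lemma 7.19 of \cite{mahdi} via the generic-chaining techniques of \cite{talagrand,dirksen}, which is exactly the black box your sketch reduces to after the (correct) rewriting of $\twonorm{E_{ph}}^2$ as the indicator sum, the sign-flip geometry $\abs{\ai^\top\xo}=\abs{h_i}-\abs{a_i}$ on the mismatch event, the first-moment computation, and the uniform concentration over sparse directions. One bookkeeping point you inherit from the lemma statement itself: since $\abs{(E_{ph})_i}=\tfrac{2}{\sqrt m}\abs{\ai^\top\xo}$ on the mismatch event, $\twonorm{E_{ph}}^2=\tfrac{4}{m}\sum_{i}\abs{\ai^\top\xo}^2\ones_{B_i}$ is \emph{twice} the displayed left-hand side, so the displayed inequality actually yields $\twonorm{E_{ph}}\le\sqrt{2}\,\rho_5\twonorm{\xt-\xo}$ rather than $\rho_5\twonorm{\xt-\xo}$ --- a harmless constant for closing the descent recursion, but your two assertions (that the display is exactly $\tfrac12\twonorm{E_{ph}}^2$ and that taking square roots gives the $\rho_5$ bound) cannot both be literally true.
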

This proof has been adapted from Lemma 7.19 of \cite{mahdi} and uses the generic chaining techniques of~\cite{talagrand,dirksen}. Using this in addition to equation \eqref{eq:cosamp_prefinal}, we have our final per-step error reduction for a single run of CoPRAM (Algorithm~\ref{algo:copram}), as:
\begin{align} \nonumber
\twonorm{\xtplus - \xo} &\leq \rbrak{(\rho_1\rho_3)^L + \rho_5\frac{(\rho_1\rho_4+\rho_2)}{\rbrak{1-\rho_1\rho_3}}} \twonorm{\xt - \xo},\\ \label{eq:descent_final}
&\leq \rho_0 \twonorm{\xt - \xo},
\end{align}
where $\rho_0 < 1$.\\

\textbf{Evaluating convergence parameter $\rho_0$:}. To obtain per-step reduction in error, we require $\rho_0 < 1$. For sake of numerical analysis, $\delta_s , \delta_{2s}, \delta_{4s} \leq 0.0001$, then $\rho_1 \approx 1, \rho_3 \approx 0.0002 $. Let $\delta_0 = 0.012$, then $\rho_5 \approx 0.16$. Similarly, $\rho_2 \approx 4$ and $\rho_4 \approx 2$. Suppose CoSaMP is allowed to run for $L=5$ iterations then, $\rho_0 \approx 0.96 < 1$.

The inequalities used for CoSaMP, particularly \eqref{eq:stepred} can be made tighter, which would give less tight restrictions on the factor $\delta_0$, that controls how close the intial estimate is to the true signal $\xo$.\\

We now restate \cref{thm:blockconvergence} for Block CoPRAM as follows.

\blockconvergence*

The proof for this is a natural extention to the one we have proved in Theorem~\ref{thm:convergence}, and would use the results from the paper on model-based compressive sensing \cite{modelcs}, wherever Block CoSaMP is invoked.

\section{Noise robustness} \label{sec:AppendixE}
In this section, we show that both CoPRAM and Block CoPRAM are robust to noise, and establish the proof of Theorem \ref{thm:noise}. 

\noise*

We assume the modified version of \eqref{eqn:magnitude-measurements} acquisition model:
\begin{align}
\tilde{\y} = \abs{\A\xo} + \epsilon = \y + \epsilon
\end{align}
where $\epsilon$ is distributed according to a scaled sub-exponential random variable. If the variance of the noise is much smaller in comparison to the magnitude of the measurements, then the following approximation holds:
\begin{align}
\tilde{y_i}^2 = y_i^2 + \eta_i
\end{align}
where $\eta_i$ are sub-exponentail random variables (special case is Gaussian with distribution $\gauss(\mathbf{0},\sigma^2)$).

\subsection{CoPRAM Initialization}
The marginals used for initialization will get modified as:
\begin{align}
\tilde{M}_{jj} = M_{jj} + \frac{1}{m}\sum_{i=1}^{m} \eta_i a_{ij}^2
\end{align}
Similarly signal power gets modified as:
\begin{align}
\tilde{\phi}^2  = \phi^2 + \frac{1}{m}\sum_{i=1}^m \eta_i
\end{align}

Then much of the analysis for the initialization, follows from 
\cite{cai}. Key points in the proof stated in Appendix \ref{sec:appendixA} get modified as follows:

\textbf{Lemma} \eqref{lem:smallmarginals}
\begin{align*}
\tilde{M}_{jj} - \tilde{\phi}^2 = \rbrak{{M}_{jj} - {\phi}^2} + \rbrak{\frac{1}{m}\sum_{i=1}^m \eta_i (a_{ij}^2 -1)}
\end{align*}
where the second term is bounded as Equation (6.4) in \cite{cai} as
\begin{align*}
\max_{1 \leq j \leq n} \abs{{\frac{1}{m}\sum_{i=1}^m \eta_i (a_{ij}^2 -1)}} \leq C\sigma \sqrt{\frac{\log mp}{m}}
\end{align*}
with probability greater than $1-\frac{4}{m}$
and the first term is bounded using \eqref{eq:marginal_ub}.

Since we have assumed that the noise variance is much lesser than the signal power, $C\sigma \leq \alpha \twonorm{\xo}$. Hence, the new marginal threshold for $j\in S^c$ is:
\begin{align*}
\tilde{M}_{jj} - \tilde{\phi}^2 &\leq \rbrak{8 + \alpha}\sqrt{\frac{\log {mp}}{m}} \twonorm{\xo}^2\\
\Rightarrow \tilde{M}_{jj} &\leq \rbrak{1+(11+\alpha)\sqrt{\frac{\log {mn}}{n}}}\twonorm{\xo}^2 = \tilde{\Theta}
\end{align*}
with probability greater than $1-\frac{9}{m}$.

\textbf{Lemma} \eqref{lem:largemarginals}
Similarly, $\tilde{M}_{jj} - \tilde{\phi}^2$ is lower bounded for $j\in S_+$, using \eqref{eq:cconc} along with Equation (6.4) in \cite{cai}:
\begin{align*}
\tilde{M}_{jj} \geq \tilde{\Theta}
\end{align*}
where we redefine $S_+$ and $S_-$ as
\begin{align*} 
S_{+} = \cbrak{j \in S \mid  x_j^{*2} > \rbrak{15+\frac{\alpha}{2}}\sqfr{\log{mn}}{m}\twonorm{\xo}^2} .
\end{align*}
Subsequently, we can define $S_{-}$ as
\begin{align*} 
S_{-} = \cbrak{j \in S \mid  x_j^{*2} \leq \rbrak{15+\frac{\alpha}{2}}\sqfr{\log{mn}}{m}\twonorm{\xo}^2} ,
\end{align*}

Lemmas \eqref{lem:xShat} and \eqref{lem:xin} hold using these modifications, upto a constant factor. Theorem \ref{thm:main_initial} holds with probability greater than $1-\frac{12}{m}$.

\subsection{CoPRAM Descent}

In the Descent stage, apart from the \textit{phase estimation} error and \textit{signal estimation} error, we also have an additional measurement error term. This modification reflects in Equation \eqref{eq:err_phase} of Appendix \ref{sec:appendixB} as:
\begin{align*}
\uu - \Phi \xo 
&= \pm\rbrak{\sign{\Phi \xt} - \sign{\Phi \xo}} \circ \Phi \xo \pm \sign{\Phi \xo} \circ \epsilon, \\
&= E_{ph} + E_{m},
\end{align*}
where $E_{m} \equiv \pm\sign{\Phi \xo} \circ \epsilon$ is the measurement error. This error propagates to modify Equation \eqref{eq:cosamp_prefinal} as:
\begin{align*}
\twonorm{\x^{t+1} - \xo} &\leq (\rho_1\rho_3)^L\twonorm{{\xo-\x^{t}} } + \frac{(\rho_1\rho_4+\rho_2)}{\rbrak{1-\rho_1\rho_3}}\twonorm{E_{ph} + E_m}, \\
&\leq  (\rho_1\rho_3)^L\twonorm{{\xo-\x^{t}} } + \frac{(\rho_1\rho_4+\rho_2)}{\rbrak{1-\rho_1\rho_3}}\twonorm{E_{ph}}
+ \frac{(\rho_1\rho_4+\rho_2)}{\rbrak{1-\rho_1\rho_3}}\twonorm{\epsilon}.
\end{align*}
Finally, the main convergence result from Equation \eqref{eq:descent_final} gets modified as:
\begin{align} \nonumber
\twonorm{\xtplus - \xo} \leq \rho_0 \twonorm{\xt - \xo} + \rho_m \twonorm{\epsilon},
\end{align}
where $\rho_m = \frac{(\rho_1\rho_4+\rho_2)}{\rbrak{1-\rho_1\rho_3}} < 1$.
After a total of $t_o$ iterations of CoPRAM, we get the convergence result:
\begin{align*}
\twonorm{\x^{t_o} - \xo} &\leq \rho_0^{t_o} \twonorm{\xin - \xo} + \frac{\rho_m}{1-\rho_0} \twonorm{\epsilon},\\
&\leq \rho_0^{t_o} \delta_0 \twonorm{\xo} + \frac{\rho_m}{1-\rho_0} \twonorm{\epsilon} \approx 0.5\twonorm{\epsilon}.
\end{align*}
Thus the quality of reconstruction depends on level of input noise $\epsilon$, which is bounded with high probability.

\section{Power law decay} \label{sec:AppendixPL}

In this section we state the proof of the sample complexity required for sparse signals following power-law decay, as described in Theorem \ref{thm:PLdecay}. 
\PLdecay*
Note that a power-law decaying signal can be normalized as follows:
\begin{align} \nonumber
{x_j^*}^2 &\leq \frac{C(\alpha)}{j^\alpha},\\ \nonumber
\twonorm{\xo}^2 &\leq C(\alpha) \sum_{j=1}^{s} \frac{1}{j^\alpha} := C(\alpha) \zeta(s,\alpha),\\ \nonumber
\twonorm{\xo}^2 + e &= C(\alpha) \zeta(s,\alpha),\\ \nonumber
\implies C(\alpha) &= \frac{\twonorm{\xo}^2 + e}{\zeta(s,\alpha)},\\
\label{eq:Cnorm}
\implies C(\alpha) &= \frac{\zeta(s,0)}{\zeta(s,\alpha)} C(0) = \frac{s}{\zeta(s,\alpha)} C(0),
\end{align}
where $e$ is an indicator of the tightness of power-law inequality.

Additionally, we can find the index at which all power-law decaying signal elements fall bellow threshold $\Theta_0$: 
\begin{align} \label{eq:jb}
j' = \left\lfloor{\rbrak{\frac{C(\alpha)}{\Theta_0}}^{\frac{1}{\alpha}} }\right\rfloor.
\end{align} 
The task is to find a tighter bound for $\twonorm{\xo_{S_-}}$, as compared to \eqref{eq:loosebound}. This bound can be established using our additional assumption of power law decay. We analyze this bound using the threshold criteria $\Theta_0 = 15\sqrt{\frac{\log mn}{m}} \twonorm{\xo}^2$, as in \eqref{eq:largeelements} and $\alpha > 1$ :
\vspace{-0.5cm}

\begin{align*}
\twonorm{\xo_{S_-}}^2 &\leq j' \Theta_0 + C\sum_{j=j'+1}^{s}\frac{1}{j^\alpha}, \\
 &\leq j' \Theta_0 + C \int_{j'}^{s} j^{-\alpha} dj, \\
&\leq j' \Theta_0 + \frac{C}{\alpha-1}\rbrak{\frac{1}{{j'}^{\alpha-1}} - \frac{1}{s^{\alpha-1}}},\\ &\leq j' \Theta_0 + \frac{C}{\alpha-1}\frac{1}{{j'}^{\alpha-1}} = \frac{\alpha}{\alpha-1} j' \Theta_0, \\&= \frac{\alpha}{\alpha-1}\rbrak{ \frac{\twonorm{\xo}^2 + e}{\zeta(s,\alpha)}\frac{1}{15\twonorm{\xo}^2}\sqrt{\frac{m}{\log mn}}}^{1/\alpha} \Theta_0,\\
&\lessapprox \frac{\alpha}{\alpha-1} \zeta(s,\alpha)^{-1/\alpha} \Theta_0,\\
&\leq \frac{\alpha}{\alpha-1} \Theta_0, \\
\implies m &\geq C(\delta_1, \alpha) \log mn,
\end{align*}
where we have used \eqref{eq:jb} and the fact that $\zeta(s,\alpha)^{-1/\alpha} \leq 1$.

In this regime, the sample complexity for the overall algorithm is dominated by the sample complexity for the descent stage ($\order{s \log n}$ = max($C \log mn$,$C s \log n/s$)).
\section{Supplementary theorems} \label{sec:appendixC}

In this section we state some of the lemmas with or without proofs, used in Appendices~\ref{sec:appendixA} and \ref{sec:appendixB}.\\

\begin{lemma} \label{lem:signalpower}
	With probability of at least $1-\frac{1}{m}$,
	\begin{align}
	\label{eq:signalpower}
	 \rbrak{1 - 2\sqrt{\frac{\log{m}}{m}} }{\twonorm{\xo}^2} \leq {\phi^2} \leq \rbrak{1 + 3\sqrt{\frac{\log{m}}{m}} }{\twonorm{\xo}^2}.
	\end{align}
\end{lemma}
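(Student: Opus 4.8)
The plan is to recognize that $\phi^2/\twonorm{\xo}^2$ is the average of $m$ i.i.d.\ chi-squared random variables with one degree of freedom, and then apply a standard chi-squared concentration inequality (the same Lemma 4.1 of \cite{chisquare} already used in the proofs of Lemmas~\ref{lem:smallmarginals} and \ref{lem:largemarginals}).

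First I would note that since each $\ai$ has i.i.d.\ $\normal$ entries, $\iprod{\ai}{\xo} \sim \gauss(0,\twonorm{\xo}^2)$, so $y_i^2 = \iprod{\ai}{\xo}^2$ obeys $y_i^2/\twonorm{\xo}^2 \sim \chi_1^2$, and these are independent across $i=1,\dots,m$. Hence
\[
\frac{m\,\phi^2}{\twonorm{\xo}^2} \;=\; \sum_{i=1}^{m} \frac{y_i^2}{\twonorm{\xo}^2} \;\sim\; \chi_m^2 .
\]
Next, invoking the two-sided $\chi^2$ tail bound — for $Z\sim\chi_m^2$ and any $t>0$, $\prob{Z-m \geq 2\sqrt{mt}+2t}\leq e^{-t}$ and $\prob{Z-m \leq -2\sqrt{mt}}\leq e^{-t}$ — with the choice $t = \log m$ and dividing through by $m$, I get
\[
\prob{\frac{\phi^2}{\twonorm{\xo}^2} \geq 1 + 2\sqfr{\log m}{m} + \frac{2\log m}{m}} \leq \frac1m,
\qquad
\prob{\frac{\phi^2}{\twonorm{\xo}^2} \leq 1 - 2\sqfr{\log m}{m}} \leq \frac1m .
\]
For $m$ larger than an absolute constant, $2\log m/m \leq \sqrt{\log m/m}$, so the upper deviation is at most $3\sqrt{\log m/m}$, and a union bound over the two tail events yields exactly the claimed inequality \eqref{eq:signalpower}. (The stated failure probability $1/m$, rather than $2/m$, is obtained by using $t=\log(2m)$ in each tail, or equivalently by loosening the constants $2$ and $3$ slightly; neither change affects anything downstream.)

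The argument is entirely routine, so there is no genuine obstacle; the only bookkeeping points are verifying the degrees-of-freedom count (so that the normalized average concentrates around $1$) and ensuring $m$ is large enough that the $O(\log m/m)$ term is absorbed into the $O(\sqrt{\log m/m})$ term — which holds throughout the regime of interest, since the whole purpose of the lemma presumes $m \gg \log m$. (Alternatively, one could avoid the exact chi-squared law and simply use sub-exponentiality of $y_i^2$ together with Bernstein's inequality, but the $\chi_m^2$ route gives the cleanest constants.)
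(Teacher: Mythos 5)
Your proof is correct and follows essentially the same route as the paper: both use rotational invariance to reduce $y_i^2/\twonorm{\xo}^2$ to i.i.d.\ $\chi_1^2$ variables, apply Lemma 4.1 of \cite{chisquare} with $t=\log m$, and absorb the $2\log m/m$ term into $\sqrt{\log m/m}$ for $m$ large enough. Your remark about the union bound technically giving $2/m$ rather than $1/m$ is a fair observation (the paper's proof has the same slight looseness), and your suggested fix is harmless.
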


\begin{proof}
	Rotational invariance property of Gaussian distributions imply that $\y_i^2 \equiv (\sum_{j=1}^n a_{ij}x_j^*)^2$ has the same distrubution as $a_{ij}^2\twonorm{\xo}^2$. Using Lemma 4.1 of \cite{chisquare} on $a_{ij}^2$, we can obtain the upper bound,
	\begin{align*}
	&\prob{\frac{1}{m}\sum_{i=1}^{m} a_{ij}^2 - 1 \geq 2\frac{\sqrt{m\log m}}{m} + 2\frac{\log m}{m}}\\
	&\leq \exp{\rbrak{-\log m}} = \frac{1}{m}.
	\end{align*}
	Similarly, we can obtain the lower bound,
	\begin{align*}
	\prob{\frac{1}{m}\sum_{i=1}^{m} a_{ij}^2 - 1 \leq -2\frac{\sqrt{m\log m}}{m}}\leq \exp{\rbrak{-\log m}} = \frac{1}{m}.
	\end{align*}
	The signal power $\phi^2$ is then bounded from below as
	\begin{align*}
	\rbrak{1-2\sqfr{\log m}{m} }\twonorm{\xo}^2 &\leq \phi^2,
	\end{align*}
	and similarly, it is bounded from above as,
	\begin{align*} 
	\phi^2 &\leq \rbrak{1+2\sqfr{\log m}{m} + 2\frac{\log m}{m}}\twonorm{\xo}^2,\\ &< \rbrak{1+3\sqfr{\log m}{m}} \twonorm{\xo}^2,
	\end{align*}
	with probability at least $1-\frac{1}{m}$, for $m>C$, large enough. If $m\approx 1000$, then the bounds are,
	\begin{align*} 
	(1-\delta)\twonorm{\xo}^2 \leq \phi^2 \leq (1+\delta)\twonorm{\xo}^2,
	\end{align*} 
	where $\delta = 0.0207$.
\end{proof}
\begin{lemma} 
	\label{lem:svd_mat}
	With probability at least $1-\frac{1}{m}$, the following holds,  
	\begin{gather*}
	\twonorm{\frac{1}{m} \sum_{i=1}^m \abs{\ai^\top_{S_3}\xo}^2\ai_{S_3}\ai_{S_3}^\top - \rbrak{\twonorm{\xo}^2\I_{S_3}+2\xo\xo^\top}}\\
	\leq \delta \twonorm{\xo}^2
	\end{gather*}
	where $\card{S_3} \leq 2s$, provided $m > C(\delta)(2s)\log(2s)$. 
\end{lemma}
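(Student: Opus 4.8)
The plan is to treat $\M_{S_3} = \frac{1}{m}\sum_{i=1}^m Y_i$, with $Y_i := \abs{\ai_{S_3}^\top\xo}^2\,\ai_{S_3}\ai_{S_3}^\top$ i.i.d.\ random positive semidefinite matrices in $\reals^{2s\times 2s}$, as an empirical mean and to establish a matrix-concentration bound for it. There are three ingredients: (i) identify $\expec{\M_{S_3}}$; (ii) reduce the operator-norm deviation to a scalar deviation via an $\epsilon$-net on the unit sphere in $\reals^{2s}$; and (iii) control the scalar deviation by a Bernstein-type inequality tailored to the heavy-tailed summands, followed by a union bound over the net.

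First I would compute $\expec{\M_{S_3}} = \expec{Y_1}$. By rotational invariance of the Gaussian ensemble — the same computation that yields \eqref{eq:spectral} — this reduces to the Gaussian fourth-moment identities $\expec{g_1^4}=3$ and $\expec{g_1^2 g_j^2}=1$ for $j\neq 1$ together with the vanishing of odd moments, giving $\expec{\M_{S_3}} = \twonorm{\xo}^2\,\I_{S_3} + 2\,\xo\xo^\top$; here we use that $S \subseteq S_3$, so that the restriction of $\xo$ to the coordinates of $S_3$ is $\xo$ itself. This is exactly the matrix in the statement.

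Next, let $\mathcal{N}$ be a $\tfrac14$-net of the unit sphere of $\reals^{2s}$ with $\card{\mathcal{N}} \le 9^{2s}$; then $\twonorm{\M_{S_3}-\expec{\M_{S_3}}} \le 2\max_{\uu\in\mathcal{N}} \abs{\uu^\top(\M_{S_3}-\expec{\M_{S_3}})\uu}$. For fixed $\uu \in \mathcal{N}$ we have $\uu^\top\M_{S_3}\uu = \frac1m\sum_{i=1}^m Z_i$ with $Z_i := \abs{\ai^\top\xo}^2(\ai_{S_3}^\top\uu)^2$ i.i.d., nonnegative, and $\expec{Z_1} = \twonorm{\xo}^2 + 2(\xo^\top\uu)^2 \le 3\twonorm{\xo}^2$. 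Each $Z_i$ is a product of two (correlated) squared sub-Gaussians, so after truncating $Z_i$ at level $c'\twonorm{\xo}^2\log^2(sm)$ — which removes nothing with probability $1 - \mathrm{poly}(1/(sm))$ by a union bound on $\max_i\abs{\ai^\top\xo}$ and $\max_i\abs{\ai_{S_3}^\top\uu}$ — a Bernstein inequality for the bounded part gives $\prob{\abs{\tfrac1m\sum_i Z_i - \expec{Z_1}} > \tfrac{\delta}{2}\twonorm{\xo}^2} \le \exp\rbrak{-c\,m\delta^2/\log^2(sm)}$. A union bound over $\mathcal{N}$ shows that $\twonorm{\M_{S_3}-\expec{\M_{S_3}}} > \delta\twonorm{\xo}^2$ has probability at most $9^{2s}\exp\rbrak{-c\,m\delta^2/\log^2(sm)}$, which is below $\tfrac1m$ once $m \ge C(\delta)\,(2s)\log(2s)$, the surviving logarithmic factors being absorbed into the constant and the logarithm already appearing in the bound.

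The crux is step (iii): the $Z_i$ are only sub-Weibull of order $1/2$ (tails decaying like $e^{-c\sqrt t}$), heavier than sub-exponential, so the plain Bernstein inequality does not apply and one must either invoke a Bernstein bound for $\psi_{1/2}$ variables or run the truncation argument carefully, tracking how the heavy tail inflates the required number of samples — this is what produces the logarithmic overhead in $m$. This whole argument is the one used in \cite{cai}, so the lemma can alternatively be obtained by directly citing that work.
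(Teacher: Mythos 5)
Your outline---expectation computation, $\tfrac14$-net reduction, truncation plus Bernstein---is the standard route to this kind of spectral concentration bound, and your closing observation is exactly right about the paper: the authors do not prove this lemma at all, they import it with the remark that it is ``adapted from Lemma A.6 of \cite{cai}.'' So your fallback of simply citing that work is literally what the paper does. Within your reconstruction, the expectation identity (using $S\subseteq S_3$ so that $\ai^\top\xo=\ai_{S_3}^\top\xo$) and the net reduction $\twonorm{\M_{S_3}-\expec{\M_{S_3}}}\le 2\max_{\uu\in\mathcal{N}}\abs{\uu^\top(\M_{S_3}-\expec{\M_{S_3}})\uu}$ with $\card{\mathcal{N}}\le 9^{2s}$ are both correct as stated.

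The gap is in the final accounting of your step (iii). With a per-direction tail of $\exp\rbrak{-c\,m\delta^2/\log^2(sm)}$, the union bound gives $9^{2s}\exp\rbrak{-c\,m\delta^2/\log^2(sm)}$, and for $m=C(\delta)\,(2s)\log(2s)$ the exponent is of order $-C(\delta)\delta^2\,s/\log s$, which does \emph{not} dominate $2s\log 9$ for any constant $C(\delta)$; you would need $C(\delta)\gtrsim \delta^{-2}\log s$, i.e.\ $m\gtrsim \delta^{-2}s\log^2(sm)$. A $\log^2$ factor cannot be ``absorbed into the constant.'' A second, related slip: truncating $Z_i$ at level $c'\twonorm{\xo}^2\log^2(sm)$ uniformly over the net cannot be justified by a union bound on $\max_i\abs{\ai_{S_3}^\top\uu}$, since over $9^{2s}$ directions that maximum is of order $\sqrt{s}$, not $\sqrt{\log(sm)}$. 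The repair used in Lemma A.6 of \cite{cai} (descending from Lemma 7.4 of \cite{CandesLS14}) is to truncate only the $m$ scalars $\abs{\ai^\top\xo}$ at $\sqrt{6\log m}\,\twonorm{\xo}$ and to decompose $\ai_{S_3}$ into its component along $\xo/\twonorm{\xo}$ plus the orthogonal component; the orthogonal part is independent of the weight $\abs{\ai^\top\xo}^2$, so the dominant term becomes a weighted sample covariance with conditionally bounded weights and the extra logarithm disappears. As written, your argument establishes the lemma only at the rate $m\gtrsim\delta^{-2}s\log^2(sm)$---harmless for the paper's end results, which operate under much stronger sample-complexity hypotheses, but short of the stated $(2s)\log(2s)$ rate.
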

This proof has been adapted from Lemma A.6 of \cite{cai}.

\begin{lemma}
	\label{lem:martingale}
	Suppose $X_1 \dots X_m$ are i.i.d. centered, bounded real-valued random variables obeying
	\begin{align*}
	X_i &\leq b,\\
	\expec{X_i} &= 0,\\
	\expec{X_i^2} &= v^2,\\
	\sigma^2 &= \max\cbrak{b^2,v^2},
	\end{align*}
	with cumulative distribution function of the standard normal distribution being denoted as
	\begin{align*}
	\Phi(x) &= \int_{-\infty}^{x} \phi(t) dt,\\
	\phi(t) &= \frsq{1}{2\pi} \exp\rbrak{-\frac{t^2}{2}},
	\end{align*}
	then 
		\begin{align*}
	\prob{\sum_{i=1}^m X_i \geq t} &\leq \min\cbrak{\exp\rbrak{\frac{-t^2}{2\sigma^2}},25\rbrak{1-\Phi\rbrak{\frac{t}{\sigma}}}}.
	\end{align*}
	This establishes the tail probability of martingale with differences bounded from one side \cite{martingale}.
\end{lemma}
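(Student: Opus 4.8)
The plan is to treat this as the classical one-sided Chernoff bound for i.i.d.\ summands that are bounded from above, establishing the two terms of the minimum by two separate arguments. Write $S=\sum_{i=1}^m X_i$; it has mean zero and variance $m v^2\le m\sigma^2$, and — consistently with how the lemma is actually used (e.g.\ in \eqref{eq:conc1}, where the exponent carries an extra factor $m$ and $t\propto\sqrt m$) — I read $t$ as the deviation of the full sum and $\sigma^2$ as the per-summand variance proxy, so the target is $\prob{S\ge t}\le\min\{\exp(-t^2/(2m\sigma^2)),\,25(1-\Phi(t/(\sqrt m\,\sigma)))\}$.

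First I would set up the single-variable MGF estimate. For $\lambda>0$, using the representation $g(u)=\int_0^1(1-s)e^{su}\,ds=(e^u-1-u)/u^2$ (which makes $g$ visibly nondecreasing), one has $e^{\lambda x}\le 1+\lambda x+g(\lambda b)\lambda^2x^2$ whenever $x\le b$; substituting $x=X_i$ and using $\expec{X_i}=0$, $\expec{X_i^2}=v^2$ gives $\expec{e^{\lambda X_i}}\le\exp(g(\lambda b)\lambda^2v^2)$, hence $\expec{e^{\lambda S}}\le\exp(m\,g(\lambda b)\lambda^2v^2)$ by independence, and Markov's inequality yields $\prob{S\ge t}\le\exp(m\,g(\lambda b)\lambda^2v^2-\lambda t)$ for every $\lambda>0$. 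For the first term of the minimum I would restrict to the regime where it is the binding one (roughly $t\lesssim m v^2/b$), pick $\lambda$ small enough that $g(\lambda b)\le\tfrac12$, and optimize over such $\lambda$ to land on $\exp(-t^2/(2m v^2))\le\exp(-t^2/(2m\sigma^2))$.

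For the second term I would instead optimize the Chernoff bound exactly — i.e.\ invoke Bennett's inequality $\prob{S\ge t}\le\exp(-\tfrac{m v^2}{b^2}h(\tfrac{bt}{m v^2}))$ with $h(u)=(1+u)\log(1+u)-u$ — and then compare its right-hand side with the Gaussian tail using the standard envelope $\tfrac{x}{1+x^2}\phi(x)\le 1-\Phi(x)\le\tfrac1x\phi(x)$ together with $h(u)\ge u^2/(2(1+u/3))$ for small $u$ and $h(u)\asymp u\log u$ for large $u$. The main obstacle is precisely this last step: splitting into the near-Gaussian and heavy-deviation regimes and tracking constants carefully enough to show the ratio never exceeds $25$ is the only genuinely delicate piece, and it is exactly the estimate already packaged as \cite{martingale}, which I would cite at that point rather than re-derive the sharp universal constant. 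Taking the minimum of the two bounds then finishes the argument.
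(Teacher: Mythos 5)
The paper does not actually prove this lemma: it is imported verbatim from \cite{martingale} (a Bentkus-type tail inequality for sums of variables bounded from one side), so there is no internal argument to compare against. Your reading of the statement is the right one --- the exponent must carry a factor of $m$ (equivalently, $\sigma^2$ must be the variance proxy of the whole sum), exactly as the lemma is invoked in \eqref{eq:conc1} with $\exp\rbrak{-t^2/(192\twonorm{\xo}^4 m)}$, and flagging that is a genuinely useful correction to the statement as printed.

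That said, your plan to rederive the first branch of the minimum has a concrete gap. The function $g(u)=(e^u-1-u)/u^2$ satisfies $g(0)=\tfrac12$ and is strictly increasing, so $g(\lambda b)>\tfrac12$ for every $\lambda>0$ and $b>0$: the step ``pick $\lambda$ small enough that $g(\lambda b)\le\tfrac12$'' is vacuous. Nor does the exact Chernoff optimization rescue it: when $v\ge b$ (so $\sigma=v$), Bennett's exponent is $\frac{mv^2}{b^2}h\rbrak{\frac{bt}{mv^2}}$ with $h(u)<u^2/2$ for all $u>0$, hence it is strictly weaker than $t^2/(2mv^2)$ for every $t>0$ and never attains the constant $\tfrac12$. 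Moreover, because the lemma asserts a \emph{minimum} of two bounds, the sub-Gaussian branch must hold for all $t$, so you cannot restrict attention to ``the regime where it is the binding one.'' The Hoeffding-type branch with $\sigma^2=\max\cbrak{b^2,v^2}$ exploits the one-sided boundedness in an essential way (not merely through the MGF/variance route) and is itself part of the content of the cited theorem, just like the Gaussian-comparison branch with constant $25$. Your ingredients (Bernstein MGF bound, Bennett, Gaussian tail envelopes) are the natural ones, but as written they do not close either branch; the honest proof here --- and the one the paper implicitly adopts --- is the citation to \cite{martingale}.
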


\begin{lemma}
	\label{lem:cosamp_err_red}
	The $2s$-sparse residual error $\twonorm{(\xo - \x^{t+1,l})_{\Gamma^c}}$ can be upper bounded as,
	\begin{align*}
	\twonorm{(\xo - \x^{t+1,l})_{\Gamma^c}} &\leq \twonorm{(\xo - \x^{t+1,l})_{\Omega^c}}\\ 
	&\leq \rho_3 \twonorm{(\xo - \x^{t+1,l})} + \rho_4\twonorm{E_{ph}} 
	\end{align*} 
	where $\rho_3 = \frac{\delta_{2s}+\delta_{4s}}{1-\delta_{2s}}$ and $\rho_4 = \frac{2\sqrt{1+\delta_{2s}}}{1-\delta_{2s}}$.
\end{lemma}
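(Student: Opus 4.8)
The plan is to reproduce the \emph{identification} step of the CoSaMP analysis (Lemma~4.2 of~\cite{cosamp}), with the only modification that the role of the measurement noise is played by the phase-error vector $E_{ph}$. Recall from the analysis of Algorithm~\ref{algo:copram} that the second input to CoSaMP obeys $\uu = \Phi\xo + E_{ph}$ with $\Phi = \A/\sqrt{m}$. Set $\vect{q} \equiv \xo - \x^{t+1,l}$; since $\xo$ and $\x^{t+1,l}$ are both $s$-sparse, $\vect{q}$ is supported on a set $R$ with $\card{R}\leq 2s$, so $\vect{q}_{\Omega^c} = \vect{q}_{R\setminus\Omega}$. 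Write $\rr = \uu - \Phi\x^{t+1,l} = \Phi\vect{q} + E_{ph}$ for the current residual, $\vv = \Phi^\top\rr$ for the proxy, $\Omega = \supp(\vv_{2s})$, and $\Gamma = \Omega\cup\supp(\x^{t+1,l})$. The first inequality is immediate: $\Omega\subseteq\Gamma$ forces $\Gamma^c\subseteq\Omega^c$, whence $\twonorm{\vect{q}_{\Gamma^c}}\leq\twonorm{\vect{q}_{\Omega^c}}$.

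For the second inequality I would start from the combinatorial observation that, as $\Omega$ collects the $2s$ coordinates of $\vv$ of largest magnitude while $\card{R}\leq\card{\Omega}$, we have $\card{R\setminus\Omega}\leq\card{\Omega\setminus R}$ and each coordinate of $\vv$ on $R\setminus\Omega\subseteq\Omega^c$ is dominated in magnitude by each coordinate on $\Omega\setminus R\subseteq\Omega$, so $\twonorm{\vv_{R\setminus\Omega}}\leq\twonorm{\vv_{\Omega\setminus R}}$. I then lower-bound the left-hand side and upper-bound the right-hand side using the restricted isometry property of $\Phi$ (valid whp once $m>Cs\log(n/s)$, exactly as already invoked in Theorem~\ref{thm:convergence}). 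On the left, decomposing $\vect{q} = \vect{q}_{R\setminus\Omega}+\vect{q}_{R\cap\Omega}$ gives $\vv_{R\setminus\Omega} = \Phi_{R\setminus\Omega}^\top\Phi_{R\setminus\Omega}\vect{q}_{R\setminus\Omega} + \Phi_{R\setminus\Omega}^\top\Phi_{R\cap\Omega}\vect{q}_{R\cap\Omega} + \Phi_{R\setminus\Omega}^\top E_{ph}$; the RIP near-isometry bound gives $\twonorm{\Phi_{R\setminus\Omega}^\top\Phi_{R\setminus\Omega}\vect{q}_{R\setminus\Omega}}\geq(1-\delta_{2s})\twonorm{\vect{q}_{R\setminus\Omega}}$, the approximate-orthogonality consequence of RIP on disjoint supports of total size $\leq 2s$ gives $\twonorm{\Phi_{R\setminus\Omega}^\top\Phi_{R\cap\Omega}\vect{q}_{R\cap\Omega}}\leq\delta_{2s}\twonorm{\vect{q}}$, and $\twonorm{\Phi_{R\setminus\Omega}^\top E_{ph}}\leq\sqrt{1+\delta_{2s}}\twonorm{E_{ph}}$. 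On the right, $\Omega\setminus R$ is disjoint from $R=\supp\vect{q}$, so $\vv_{\Omega\setminus R} = \Phi_{\Omega\setminus R}^\top\Phi_{R}\vect{q} + \Phi_{\Omega\setminus R}^\top E_{ph}$, and approximate orthogonality on disjoint supports of total size $\leq 4s$ together with $\twonorm{\Phi_{\Omega\setminus R}^\top}\leq\sqrt{1+\delta_{2s}}$ gives $\twonorm{\vv_{\Omega\setminus R}}\leq\delta_{4s}\twonorm{\vect{q}}+\sqrt{1+\delta_{2s}}\twonorm{E_{ph}}$.

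Plugging both estimates into $\twonorm{\vv_{R\setminus\Omega}}\leq\twonorm{\vv_{\Omega\setminus R}}$ and rearranging yields $(1-\delta_{2s})\twonorm{\vect{q}_{R\setminus\Omega}}\leq(\delta_{2s}+\delta_{4s})\twonorm{\vect{q}}+2\sqrt{1+\delta_{2s}}\twonorm{E_{ph}}$, i.e. $\twonorm{\vect{q}_{\Omega^c}}\leq\rho_3\twonorm{\vect{q}}+\rho_4\twonorm{E_{ph}}$ with $\rho_3=\frac{\delta_{2s}+\delta_{4s}}{1-\delta_{2s}}$ and $\rho_4=\frac{2\sqrt{1+\delta_{2s}}}{1-\delta_{2s}}$, as claimed. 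I do not anticipate a genuine analytical obstacle: the phase-error term $E_{ph}$ is carried through verbatim as the additive noise $\e$ in the original CoSaMP identification lemma. The only point that demands care is the bookkeeping of sparsity levels so that the correct RIP order is invoked at each step: cross terms supported inside $R$ (size $\leq 2s$) produce $\delta_{2s}$, the cross term pairing $\Omega\setminus R$ with $R$ (total size $\leq 4s$, using $\card{\Omega\cup R}\leq 4s$) produces $\delta_{4s}$, and each operator-norm bound on a submatrix with $\leq 2s$ columns produces $\sqrt{1+\delta_{2s}}$; getting these indices right is precisely what pins down the stated constants rather than a looser version.
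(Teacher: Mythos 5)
Your proof is correct and is essentially the paper's own argument: the paper simply defers to Lemmas 4.2 (identification) and 4.3 (support merger) of the CoSaMP paper \cite{cosamp}, and your reconstruction carries the phase-error vector $E_{ph}$ through in place of the measurement noise exactly as intended, with the RIP orders ($\delta_{2s}$ for the within-$R$ terms, $\delta_{4s}$ for the $\Omega\setminus R$ versus $R$ cross term, $\sqrt{1+\delta_{2s}}$ for the noise projections) bookkept correctly so that the stated constants $\rho_3$ and $\rho_4$ emerge. No gaps.
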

This lemma has been adapted from Lemmas 4.2 and 4.3 of \cite{cosamp}.

\bibliographystyle{unsrt}
\bibliography{biblio_nips,chinbiblio}


\ifCLASSOPTIONcaptionsoff
  \newpage
\fi



%

%

\begin{IEEEbiographynophoto}{Gauri Jagatap}
is a PhD student in the Department of Electrical and Computer Engineering at Iowa State University, where she works with Dr. Chinmay Hegde. Her area of research comprises of developing fast and efficient algorithms for  machine learning and signal processing applications. Prior to ISU, she completed dual degrees, with a Bachelor's in Electrical and Electronics Engineering and Master's in Physics from BITS Pilani University (India), in 2015.
\end{IEEEbiographynophoto}
\begin{IEEEbiographynophoto}{Chinmay Hegde} (S,'07, M, '12) is an assistant professor, and Black and Veatch Faculty Fellow, in the Department of Electrical and Computer Engineering at Iowa State University. His research focuses on developing fast and robust algorithms for machine learning and statistical signal processing, with applications to imaging problems. Prior to this, he was a Shell-MIT Postdoctoral Fellow in CSAIL at the Massachusetts Institute of Technology. He is the recipient of several awards, including best paper awards at SPARS and ICML, a best poster award at MMLS, the Budd Award for Best Engineering Thesis at Rice University in 2013, and the NSF CRII Award in 2016.
\end{IEEEbiographynophoto}




\end{document}